\documentclass[10pt,journal,cspaper,compsoc]{IEEEtran}
\usepackage[colorlinks=false]{hyperref}
\usepackage{algorithm}
\usepackage{amsthm}
\usepackage{amsfonts}
\usepackage{amssymb}
\usepackage{algorithm}
\usepackage{algorithmic}
\usepackage{amsmath}
\usepackage{graphicx}
\usepackage{bm}
\usepackage{booktabs}
\usepackage{multirow}
\usepackage{subfig}
\newtheorem{theorem}{Theorem}
\usepackage{float}

\DeclareMathAlphabet{\mathsfsl}{OT1}{cmss}{m}{sl}
\newcommand*{\VEC}[1]{\ensuremath{\boldsymbol{#1}}}		
\newcommand*{\MATRIX}[1]{\ensuremath{\mathsfsl{#1}}}

\newcommand*{\realR}{\mathbb{R}}

\newcommand*{\TheFig}[1]{Fig.\ref{#1}}
\newcommand*{\TheTable}[1]{Tab.\ref{#1}}

\newcommand{\MY}{\MATRIX{Y}}
\newcommand{\ML}{\MATRIX{X}}
\newcommand{\MS}{\MATRIX{B}}
\newcommand{\MO}{\MATRIX{\Omega}}

\begin{document}
\title{Region-wise matching for image inpainting based on adaptive weighted low-rank decomposition}
\author{Shenghai~Liao,~\IEEEmembership{}
        Xuya~Liu,~\IEEEmembership{}
        Ruyi~Han,~\IEEEmembership{}
        Shujun~Fu,~\IEEEmembership{}
        Yuanfeng~Zhou,~\IEEEmembership{}
        and~Yuliang~Li~\IEEEmembership{}
\IEEEcompsocitemizethanks{\IEEEcompsocthanksitem S. Liao, R. Han, and S. Fu are with the School of Mathematics, Shandong University, Jinan 250100, China. E-mail: liaoshenghai@mail.sdu.edu.cn, 347915753@qq.com, shujunfu@163.com.
\IEEEcompsocthanksitem X. Liu is with the School of Computer Science and Technology, Shandong Jianzhu University, Jinan 250101, China. E-mail: xuya\_liu@hotmail.com.
\IEEEcompsocthanksitem Y. Zhou is with the School of Software, Shandong University, Jinan 250101, China. E-mail: yfzhou@sdu.edu.cn.
\IEEEcompsocthanksitem Y. Li is with the Department of Intervention Medicine, the Second Hospital of Shandong University, Jinan 250033, China. E-mail: lyl.pro@sdu.edu.cn.}
\thanks{Corresponding author: Shujun~Fu.}
}
\IEEEtitleabstractindextext{
\begin{abstract}
Digital image inpainting is an interpolation problem, inferring the content in the missing (unknown) region to agree with the known region data such that the interpolated result fulfills some prior knowledge.
Low-rank and nonlocal self-similarity are two important priors for image inpainting. Based on the nonlocal self-similarity assumption, an image is divided into overlapped square target patches (submatrices) and the similar patches of any target patch are reshaped as vectors and stacked into a patch matrix. Such a patch matrix usually enjoys a property of low rank or approximately low rank, and its missing entries are recoveried by low-rank matrix approximation (LRMA) algorithms. Traditionally, $n$ nearest neighbor similar patches are searched within a local window centered at a target patch. However, for an image with missing lines, the generated patch matrix is prone to having entirely-missing rows such that the downstream low-rank model fails to reconstruct it well. To address this problem, we propose a region-wise matching (RwM) algorithm by dividing the neighborhood of a target patch into multiple subregions and then search the most similar one within each subregion. A non-convex weighted low-rank decomposition (NC-WLRD) model for LRMA is also proposed to reconstruct all degraded patch matrices grouped by the proposed RwM algorithm. We solve the proposed NC-WLRD model by the alternating direction method of multipliers (ADMM) and analyze the convergence in detail. Numerous experiments on line inpainting (entire-row/column missing) demonstrate the superiority of our method over other competitive inpainting algorithms. Unlike other low-rank-based matrix completion methods and inpainting algorithms, the proposed model NC-WLRD is also effective for removing random-valued impulse noise and structural noise (stripes).
Extension to the restoration of remote sensing images demonstrate that the proposed model has competitive performance for destriping (stripe noise removal) and repairing dead lines.
\end{abstract}
\begin{IEEEkeywords}
Low rank decomposition, image inpainting, matrix completion, region-wise matching, destriping.
\end{IEEEkeywords}}
\maketitle
\IEEEdisplaynontitleabstractindextext
\section{Introduction}
Digital images often suffer from missing information randomly or structurally (entire-row/column missing) during imaging and transmission. As an interpolation problem, image inpainting aims at restoring missing pixels accurately \cite{bertalmio2000image,guillemot2014image}. Image inpainting has been applied to the restoration of old photographs \cite{bertalmio2000image,YU2021590}, the removal of scratch and blotch in archived films \cite{ji2011robust,li2013detection,Herling2014}, and the restoration of remote sensing images (stripes, dead lines), etc. The main challenge of image inpainting is how to propagate information from the known elements to unknown elements under given coherent conditions (priors). The most commonly used priors follow the assumptions of smoothness \cite{shen2002mathematical, wali2019new}, nonlocal self-similarity \cite{criminisi2004region,Newson2017,Xu2022}, sparsity \cite{mairal2007sparse,Zhou2012BPFA}, and low-rank \cite{Lu2016IRNN,guo2017patch,gu2017weighted,Zheng2020MADC}.

Diffusion-based methods mainly take advantage of the intrinsic spatial coherence of geometrical structures and diffuse information from observed data to the unknown region driven by partial differential equations. Exemplar-based methods directly copy and paste the nonlocal similar patch to the corrupted region, while maintaining coherence with nearby pixels.
Sparse representation also received great attention in the literature and has achieved a lot of success such as K-SVD \cite{mairal2007sparse}. However, it is required to solve a large-scale optimization problem in dictionary learning stage and hence the overall computational complexity is high.

Recently, the convolutional neural networks (CNNs) have been successfully applied to a variety of low level computer vision problems \cite{yu2018generative, ulyanov2018deep, nazeri2019edgeconnect}.
Due to the powerful learning ability of CNNs, CNNs based approaches often perform well in specific tasks, however, millions of parameters are required to optimized with a large number of training images. So it is time consuming and usually needs a modern accelerating device such as GPU (Graphics Processing Unit). CNNs methods also lack interpretation and the performances drop evidently when test images or shapes of missing regions
deviate from the distribution in the training set.

In this paper, we revisit low-rank prior and nonlocal self-similarity for image inpainting.
Low-rank matrix approximation (LRMA) has been well investigated for decades and has extensive applications in machine learning and image processing. Given specific data fidelity requirement, the intuition is to approximate the observed matrix (often of large rank due to data corruption or noise) with another one that has a smaller rank. Low rank matrix factorization \cite{Shang2018,Xu2017} and rank minimization \cite{cai2010singular,Mohan2012} are two main branches of low-rank matrix reconstruction. The former approach factorizes the observed matrix into product of two smaller matrices, the latter one reconstructs the data matrix by imposing additional rank constraint upon the estimated matrix.

Since direct rank minimization problem is NP-hard, numerous algorithms instead adopt Nuclear Norm Minimization (NNM) as a relaxation. The nuclear norm of a matrix is sum of all singular values and it has been proved to be the tightest convex relaxation of the rank minimization problem \cite{candes2009exact}.
Based on NNM, Cai \cite{cai2010singular} proposed a singular value thresholding (SVT) algorithm to iteratively approximate the clean data matrix. Researchers also attempt to substitute nuclear norm by different types of non-convex relaxation \cite{lu2014generalized,Zheng2020MADC}. A truncated nuclear norm regularization (TNNR) was proposed for matrix completion \cite{zhang2012matrix}.
Gu \cite{gu2017weighted} proposed weighted nuclear norm minimization (WNNM) that assigned weight to each singular value.
Xie \cite{Xie2016} generalized the NNM to the Schatten p-norm
minimization with weights assigned to different singular values. Guo \cite{guo2017patch} utilized low rank approximation with truncated singular values to derive a closed-form estimate for each patch
matrix and designed a two-stage low rank approximation (TSLRA) scheme for image inpainting. Nie \cite{Nie2019NCMC} used logarithm function directly on all singular values. Other types of non-convex relaxation functions were summarized in \cite{lu2014generalized,Chen2020tip}.

LRMA is essentially an optimization of singular values \cite{lu2014generalized,gu2017weighted,Xie2016,Nie2019NCMC,Chen2020tip}, with specific regularization functions. However, the regularization function of a LRMA model is usually fixed \cite{Nie2019NCMC}, or relies on one or more hyper-parameters \cite{gu2017weighted,Chen2020tip} which are difficult to tune manually. Considering that the distribution of singular values of an image or a patch matrix has various possibilities, regularization on singular values should be adaptive enough to obtain better performance. Based on this point of view, we propose a non-convex regularization function which is adaptive to the distribution singular values. In section \ref{section-exps}, various experiments are conducted to demonstrate the advantage of the proposed adaptive non-convex model against other state-of-the-art methods.

Another shorthand of low-rank based inpainting methods is that they suffer from difficulties of handling structurally incomplete data matrix such as entire missing rows/columns.
For example, WNNM \cite{gu2017weighted} and TSLRA \cite{guo2017patch} both group similar patches and apply low-rank model to reconstruct all patch matrices, but still fail to repair images with entire missing rows/columns.
We analyse this problem and find that the way of searching similar patches is the main cause. Traditionally, $n$ nearest neighbor patches are searched within a local window centered at a target patch. However, for an image with missing lines, the generated patch matrix is prone to having entirely-missing rows such that the downstream low-rank model fails to reconstruct it well. To solve this limitation, we propose a region-wise matching (RwM) algorithm by dividing the neighborhood of a target patch into multiple subregions and then search the most similar one within each subregion. Two types of neighborhood division are discussed, i.e. grids and fan-shaped sectors.

With the proposed search strategy RwM, we propose an inpainting algorithm by applying the adaptive non-convex low rank decomposition model to complete each patch matrix grouped by RwM.
Benefits from the search strategy RwM, the proposed inpainting algorithm is robust in restoring missing lines and achieves the highest qualitative assessment scores on both peak signal-to-noise ratio (PSNR) and structural similarity (SSIM) metrics. While traditional low-rank based methods \cite{guo2017patch, gu2017weighted} fail to fill all missing lines completely, the proposed method removes both horizontal and vertical lines perfectly.

The main contributions of this paper are summarized as follows: 1) An adaptive non-convex weighted low rank decomposition model (NC-WLRD) is proposed for matrix completion, experiments on real images demonstrate the superiority of NC-WLRD over other matrix completion methods. The proposed NC-WLRD also obtains competitive performance on destriping of remote sensing images. 2) We propose a region-wise matching method for grouping similar patches and apply it to line inpainting. Numerous experiments show that the proposed image inpainting algorithm is more robust and superior compared to previous state-of-the-art inpainting algorithms, especially for completing entirely-missing rows/columns.

The rest of this paper is organized as follows. Section \ref{section2} briefly introduces some related works. Section \ref{section3} proposes our adaptive non-convex weighted low rank decomposition (NC-WLRD) model for matrix completion. We introduce a region-wise matching strategy in section \ref{section-ms-method}. In Section \ref{section-exps}, we conduct experiments to show the superiority of the proposed RwM-WLRD.
We conclude the paper in section \ref{section-conclusions}.
\section{Related works}
\label{section2}
In this section, we briefly present some image inpainting algorithms and robust principal component analysis.
\subsection{Image inpainting}
As mentioned in previous section, inpainting is an ill-posed inverse problem and thus proper prior constraint is necessary to achieve pleasant visual quality. In general, the image inpainting problem can be formulated as
\begin{equation}\label{eq-prior-min}
  \hat{\MATRIX{X}}=\mathop{\arg\min}_{\MATRIX{X}} prior(\MATRIX{X}),\ s.t.\ \mathcal{P}_{\MO}(\MATRIX{X})=\mathcal{P}_{\MO}(\MY)
\end{equation}
where $\mathcal{P}_{\MO}$ is a sampling operator in the observed region $\MO$. The constraint $\mathcal{P}_{\MO}(\MATRIX{X})=\mathcal{P}_{\MO}(\MY)$ means that the estimated image $\hat{\MATRIX{X}}$ should be coherent with the input $\MY$ in the observed region $\MO$. The objective $prior$ can be roughly categorized into several types: diffusion (total variation minimization), exemplar (minimum difference with observed information), sparsity (minimum number of atoms), and low-rank prior (rank minimization).

Diffusion-based algorithms fill a target pixel by propagating its surrounding information to the target driven by local smoothness, which can be regarded as a linear combination of neighbor pixels but with dynamic weights for different target pixels. The pattern of diffusion can be divided into isotropic and anisotropic \cite{tschumperle2006fast}. On the contrary, the exemplar-based methods \cite{criminisi2004region, Barnes2009PatchMatch} broadcast the whole patch into the target region.

Sparsity and low rank are another two classical priors which often achieve state-of-the-art in the field of image inpainting. These are usually combined with nonlocal self-similarity to achieve better performance. The sparsity-based methods rely on sparse representation. A main problem about sparse representation is ``atoms''. Many works adopt patch as the basis unit \cite{mairal2007sparse, shen2009image} for representation. However, the resulted coding coefficients may not be accurate since each patch is processed independently during dictionary learning and sparse coding. To address this problem, \cite{zhang2014group} utilized patch group as the basic unit instead of a single patch.
Low-rank matrix approximation (LRMA)-based image restoration models approximate the corrupted data matrix with another matrix that has a lower rank. In recent years, LRMA has been applied in various image restoration tasks \cite{dong2012nonlocal, Hu2013tnnr,JI2017410,zhang2019low, chen2020robust}.
\subsection{Robust principal component analysis}
Robust principal component analysis (RPCA) has been well investigated for decades \cite{wright2009robust, hsu2011robust, candes2011robust}.
Given an observed data matrix $\MY$, suppose that the underlying matrix is low-rank and corrupted by additive sparse noise, then $\MY$ is modeled as
\begin{equation}\label{eq-LS}
  \MY=\ML+\MS
\end{equation}
where $\ML$ is  the low-rank component and $\MS$ represents the sparse component. To obtain such decomposition, the RPCA \cite{lin2010augmented} solves the following minimization problem:
\begin{equation}\label{eq-nnm-rpca}
  \min_{\ML,\MS}\|\ML\|_{*}+\lambda\|\MS\|_{1}\ \ s.t.\ \ \ML + \MS=\MY
\end{equation}
where $\|\ML\|_{*}$ is the nuclear norm of $\ML$, defined as sum of all singular values of $\ML$,  and $\lambda$ is a balanced scalar parameter. Works in \cite{lin2010augmented} also compared several optimization algorithms for solving \eqref{eq-nnm-rpca}, and suggested inexact augmented Lagrange multiplier (IALM) since its higher accuracy, faster convergence, and easier implementation. Though it has many success in matrix completion, and even later works extend the regularization to more general case \cite{Wen2020RPCA}, applications of RPCA in image inpainting still remain at ``matrix completion'' perspective which restrict its application.

\section{Proposed model for matrix completion}
\label{section3}
\newcommand{\knn}{\|\ML\|_{\varphi,*}}
\newcommand{\shrinkage}{\mathcal{S}}
\newcommand{\sign}[1]{\mathrm{sgn}(#1)}
In this section we describe the matrix completion model based on non-convex weighted low rank decomposition (NC-WLRD), and show optimization details using  alternating direction method of multipliers (ADMM).
\subsection{Notation}
We denote $\MY\in\realR^{m\times n}$ as an observed data matrix, and the observed entries are indicated in a corresponding binary matrix $\MO$, e.g. $\MO_{ij}=1$ if $\MY_{ij}$ is observed and the unknown elements are marked zeros in $\MO$. Let $\ML$ be the underlying ground-truth matrix, we have $\MY=\ML\odot \MO$ where $\odot$ is element-wise multiplication. The $i$-th singular value of a matrix $\MATRIX{X}$ is denoted as $\sigma_{i}(\MATRIX{X})$, in a descending order, i.e. $\sigma_1(\MATRIX{X})$ is the largest singular value.  The inner product of two matrices $\MATRIX{X},\MATRIX{Y}$ is defined as $\langle \MATRIX{X},\MATRIX{Y} \rangle = tr(\MATRIX{X}^{T}\MATRIX{Y})$ where $tr(\cdot)$ is the trace operator. For any matrix $\MATRIX{X}\in \realR^{m\times n}$, its vectorization is defined as a column vector $\MATRIX{X}(:)\in\realR^{mn}$ by stacking all columns of $\MATRIX{X} $ vertically. For multichannel image patch $\MATRIX{X}\in \realR^{m\times n\times c}$, its vectorization is formed by vectorizing each channel separately and then stacking all vectors as a single vector.

For simplicity, we define kernel nuclear norm based on a nonnegative function $\varphi$ as
\begin{equation}\label{eq-knndef}
  \|\MATRIX{X}\|_{\varphi,*}=\sum_{i}\varphi(\sigma_{i}(\MATRIX{X}))
\end{equation}
We also introduce element-wised soft shrinkage operator $\shrinkage_{\bm{\epsilon}}(\bm{x})= \sign{\bm{x}}\cdot \max(|\bm{x}|-\bm{\epsilon},0)$ where $\mathrm{sgn}$ is the sign function.
\subsection{Proposed NC-WLRD model}
As discussed above, most low-rank based methods for image restoration involve a process of shrinking relevant singular values \cite{candes2009exact,gu2017weighted}. The shrinkage is achieved by introducing singular values of the designed matrix into loss function. Previous methods usually utilize a fixed loss function on singular values for all matrices in model design, and hence they are not adaptive enough. Furthermore, we observe that the largest singular value of $\MY$ may be lower than that of its clean version, and thus a shrinkage on all singular values is not expected.

To address this problem, we propose an adaptive non-convex weighted low-rank decomposition model for matrix completion as follows:
\begin{equation}\label{eq-model}
   \mathop{\arg\min}_{\ML,\MS}\knn+\lambda\|\MO\odot \MS\|_{1}~~s.t.~~\MY=\ML+\MS
\end{equation}
where $\lambda>0$ is a scalar, and $\lambda\|\MO\odot \MS\|_{1}$ is a weighted $l_1$ norm of the sparse component $\MS$, $\varphi$ is an adaptive piecewise smooth function:
\begin{equation}\label{eq-adaptive-scad}
  \varphi(\sigma) =
  \begin{cases}
    \sigma, & \mbox{if } \sigma\leq1 \\
    \frac{-\sigma^2+2\gamma\sigma-1}{2(\gamma-1)}, & \mbox{if } 1<\sigma \leq \gamma \\
    \frac{\gamma+1}{2}, & \mbox{otherwise}.
  \end{cases}
\end{equation}
The scalar $\gamma$ above is set adaptively according to the largest singular value of $\MY$: $\gamma = (\eta + \alpha)\cdot\sigma_{1}(\MY)$, $\eta>0$ is a parameter and $\alpha$ is the proportion of missing elements. Note that $\varphi$ is a non-convex function, and $\gamma$ is self-adaptive adjusted according to the missing rate $\alpha$ and the spectrum of the observed matrix $\MY$. The intuition behind this adaptive regularization originates from the observation that the distribution of singular values of an image has a long tail that progressively decrease towards zero, thus the model utilizes an identity function for small singular values less than 1. On the other hand, if the missing rate $\alpha$ is small, the distribution of large singular values of $\MY$ is highly correlated to that of the underlying clean matrix.

Given an observed data matrix $\MY\in\realR^{m\times n}$, model \eqref{eq-model} decomposes $\MY$ into a low-rank part $\ML$ and a sparse component $\MS$. For matrix completion and image inpainting, the low-rank estimation $\ML$ is the desired output. We will show in experiments that the proposed model is also effective for stripe noise removal (destriping), where the expected output is the sparse component $\MS$ for an image comprises of stripes has a lower rank (approximately rank one). A main advantage of the proposed model is that  it handles the matrix completion problem and noise removal (random-valued impulse noise and destriping) in a uniform framework.
\subsection{Optimization method}
We utilize the alternating direction method of multipliers (ADMM) \cite{Boyd2011} to solve the proposed model \eqref{eq-model} for its simplicity and convergence guarantee. The constraint optimization problem \eqref{eq-model} is first converted to its non-constraint form using the augmented Lagrange multipliers method. The augmented Lagrange function of \eqref{eq-model} is
\begin{equation}\label{eq-alm}
\begin{aligned}
  \Gamma(\ML,\MS,\MATRIX{A},\mu)&=\knn+\lambda\|\MO\odot\MS\|_{1}
       +\langle\MATRIX{A},\MY-\ML-\MS\rangle\\
       &+\frac{\mu}{2}\|\MY-\ML-\MS\|_{F}^{2}
       \end{aligned}
\end{equation}
where $\MATRIX{A}$ is a Lagrange multiplier matrix and $\mu>0$ is a penalty factor.
The minimization of \eqref{eq-alm} respects to $\ML,\MS$ is equivalent to the following except a constant term when $\MATRIX{A}$ is fixed:
\begin{equation}\label{eq-alm-2}
  \knn+\lambda\|\MO\odot\MS\|_{1}+\frac{\mu}{2}\| \frac{1}{\mu}\MATRIX{A}+\MY-\ML-\MS  \|_{F}^{2}.
\end{equation}
Then model \eqref{eq-model} is minimized iteratively as described in algorithm \ref{alg-admm-rpca}.
Given the k-th step's approximations $\ML^{(k)},\MATRIX{A}^{(k)}$, optimizing $\MS$ is straightforward, and $\MS^{(k+1)}$ is obtained via soft shrinkage:
\begin{equation}\label{eq-solve-S}
\begin{aligned}
  \MS^{(k+1)} &=\arg\min_{\MS}\lambda\|\MO\odot\MS\|_{1}
  +\frac{\mu^{(k)}}{2}\|\MATRIX{E}^{(k)}-\MS\|_{F}^{2}\\
  &= \MO\odot\shrinkage_{\bm{\epsilon}}(\MATRIX{E}^{(k)}) + (1-\MO)\odot \MATRIX{E}^{(k)},
\end{aligned}
\end{equation}
where $\MATRIX{E}^{(k)}=\frac{1}{\mu^{(k)}}\MATRIX{A}^{(k)}+\MY-\ML^{(k)}$ and $\bm{\epsilon}=\frac{\lambda}{\mu^{(k)}}\MO$.
\begin{algorithm}
\caption{NC-WLRD for Matrix Completion}
\label{alg-admm-rpca}
\hspace*{0.02in} {\bf Input: }
Observed data matrix $\MY$, indicator matrix $\MO$

\hspace*{0.02in} {\bf Initialization: }
$\ML^{(0)}=\MATRIX{0},\MS^{(0)}=\MATRIX{0},\MATRIX{A}^{(0)}=\|\MY\|_{2}^{-1}\cdot\MY$

\begin{algorithmic}[1]
\STATE {\bf set}$\ \mu^{(0)}>0,\rho>1,k=0$
\WHILE{not converged}

\STATE $\MATRIX{E}^{(k)}=\frac{1}{\mu^{(k)}}\MATRIX{A}^{(k)}+\MY-\ML^{(k)}$
    \STATE
       $\MS^{(k+1)}=\arg\min_{\MS}\lambda\|\MO\odot\MS\|_{1}
  +\frac{\mu^{(k)}}{2}\|\MATRIX{E}^{(k)}-\MS\|_{F}^{2}$

\STATE $\MATRIX{D}^{(k)}=\frac{1}{\mu^{(k)}}\MATRIX{A}^{(k)}+\MY-\MS^{(k+1)}$

  \STATE  $\ML^{(k+1)}=\arg\min_{\ML}\knn+
  \frac{\mu^{(k)}}{2}\|\MATRIX{D}^{(k)}-\ML\|_{F}^{2}$

\STATE $\MATRIX{A}^{(k+1)}=\MATRIX{A}^{(k)}+\mu^{(k)}(\MY-\MS^{(k+1)}-\ML^{(k+1)})$
\STATE  $\mu^{(k+1)}=\rho\mu^{(k)}$
\STATE $k=k+1$
\ENDWHILE

\end{algorithmic}
\hspace*{0.02in} {\bf Output}
$\ML=\ML^{(k)}, \MS=\MS^{(k)}$
\end{algorithm}

To update $\ML^{(k+1)}$, let $\MATRIX{D}^{(k)}=\frac{1}{\mu^{(k)}}\MATRIX{A}^{(k)}+\MY-\MS^{(k+1)}$, we have
\begin{equation}\label{eq-opti-L}
  \ML^{(k+1)}=\arg\min_{\ML}\knn+
  \frac{\mu^{(k)}}{2}\|\MATRIX{D}^{(k)}-\ML\|_{F}^{2}.
\end{equation}
\newcommand{\trace}[1]{tr(#1)}
Based on the property of Frobenius norm, we have
\begin{equation}\label{eq-opti-lowrank}
\begin{aligned}
  \ML^{(k+1)} & = \arg\min_{\ML}\knn+\frac{\mu^{(k)}}{2} \trace{(\MATRIX{D}^{(k)}-\ML)^T(\MATRIX{D}^{(k)}-\ML)} \\
  & = \arg\min_{\ML}\knn+\frac{\mu^{(k)}}{2} \trace{\ML^T\ML} - \mu^{(k)}\trace{\ML^T\MATRIX{D}^{(k)}} \\
  & = \arg\min_{\ML}\sum_{i=1}^{n}(\frac{\mu^{(k)}}{2}\sigma_i^2+\varphi(\sigma_i)) - \mu^{(k)}\trace{\ML^T\MATRIX{D}^{(k)}}
\end{aligned}
\end{equation}
Let $\ML=\MATRIX{U}\MATRIX{\Sigma}\MATRIX{V}^{T},\MATRIX{D}^{(k)}=\bar{\MATRIX{U}}\MATRIX{S}\bar{\MATRIX{V}}^{T}$ be the singular value decomposition of $\ML$ and $\MATRIX{D}^{(k)}$, $\MATRIX{\Sigma}=diag(\sigma_1,\cdots,\sigma_n),\MATRIX{S}=diag(s_1,\cdots,s_n)$. According to von Neumanns trace inequality \cite{mirsky1975trace}, we have
\begin{equation}
\trace{\ML^T\MATRIX{D}^{(k)}}\leq \sum_{i=1}^{n}\sigma_i s_i.
\end{equation}
$\trace{\ML^T\MATRIX{D}^{(k)}}$ reaches its upper bound if $\MATRIX{U}=\bar{\MATRIX{U}},\MATRIX{V}=\bar{\MATRIX{V}}$. Then it follows
\begin{equation*}
  \begin{aligned}
    &\arg\min_{\ML}\sum_{i=1}^{n}(\frac{\mu^{(k)}}{2}\sigma_i^2+\varphi(\sigma_i)) - \mu^{(k)}\trace{\ML^T\MATRIX{D}^{(k)}}\\
    \Leftrightarrow & \mathop{\arg\min}_{\sigma_{1}\geq \sigma_2\geq\cdots\geq\sigma_{n}\geq0}\sum_{i=1}^{n}(\frac{\mu^{(k)}}{2}\sigma_i^2+\varphi(\sigma_i)-\mu^{(k)}\sigma_i s_i ) \\
    \Leftrightarrow & \mathop{\arg\min}_{\sigma_1\geq\sigma_2\geq\cdots\geq\sigma_n\geq 0}\sum_{i=1}^{n}\frac{\mu^{(k)}}{2}(\sigma_i - s_i)^2 + \varphi(\sigma_i).
\end{aligned}
\end{equation*}
The above quadratic optimization problem is solvable for each $\sigma_i$, and we have the following optimal condition:
\begin{equation}\label{eq-solve-sigma}
\left\{
  \begin{aligned}
   &\mu^{(k)}(\sigma_i - s_i) + \varphi^{\prime}(\sigma_i) = 0,i=1,2,\cdots,n\\
   & \sigma_1\geq\sigma_2\geq\cdots\geq\sigma_n\geq 0
  \end{aligned}
  \right.
\end{equation}
A simple calculation gives us the optimal solution:
\begin{equation}\label{eq-shrinkage-sv-solution}
  \sigma_{i} = \begin{cases}
                 \max(\frac{\mu^{(k)}\cdot s_i-1}{\mu^{(k)}},0), & \mbox{if } \ s_i\leq \frac{\mu^{(k)}+1}{\mu^{(k)}} \\
                 \frac{\mu^{(k)}\cdot s_i-\frac{\gamma}{\gamma-1}}
                 {\mu^{(k)}-\frac{1}{\gamma-1}}, & \mbox{if } \frac{\mu^{(k)}+1}{\mu^{(k)}}<s_i<\gamma \\
                 s_i, & \mbox{otherwise}.
               \end{cases}
\end{equation}
Note that the above formulation of $\sigma_i$ has a positive factor on $s_i$ and hence the condition $\sigma_1\geq\sigma_2\geq\cdots\geq\sigma_n\geq 0$ is satisfied. The nonnegative of the second term in \eqref{eq-shrinkage-sv-solution} is easy to verified:
\begin{equation}
\begin{aligned}
&\frac{\mu^{(k)}+1}{\mu^{(k)}}<\gamma \Rightarrow \mu^{(k)}-\frac{1}{\gamma-1}>0;\\
&\frac{\mu^{(k)}+1}{\mu^{(k)}}<s_i \Rightarrow \mu^{(k)}\cdot s_i-\frac{\gamma}{\gamma-1}>\mu^{(k)}-\frac{1}{\gamma-1}>0.\\
\end{aligned}
\end{equation}
In summary, the updating of $\ML^{(k+1)}$ needs one SVD calculation of $\MATRIX{D}^{(k)}$, and shrinks the singular values according to formula \eqref{eq-shrinkage-sv-solution}. Namely, let
$\MATRIX{D}^{(k)}=\MATRIX{U}\MATRIX{S}\MATRIX{V}^{T}$ be the SVD of $\MATRIX{D}^{(k)}$, $\MATRIX{S}=diag(s_1,\cdots,s_n),\MATRIX{\Sigma}=diag(\sigma_1,\cdots,\sigma_n)$, then
\begin{equation}\label{Xk1}
  \ML^{(k+1)}=\MATRIX{U}\MATRIX{\Sigma}\MATRIX{V}^{T},\sigma_{i} = \begin{cases}
                 \max(\frac{\mu^{(k)}\cdot s_i-1}{\mu^{(k)}},0), & \mbox{if } \ s_i\leq \frac{\mu^{(k)}+1}{\mu^{(k)}} \\
                 \frac{\mu^{(k)}\cdot s_i-\frac{\gamma}{\gamma-1}}
                 {\mu^{(k)}-\frac{1}{\gamma-1}}, & \mbox{if } \frac{\mu^{(k)}+1}{\mu^{(k)}}<s_i<\gamma \\
                 s_i, & \mbox{otherwise}.
               \end{cases}
\end{equation}
We give the convergence analysis in \ref{appendix}.

\section{Region-wise matching for image inpainting}
\label{section-ms-method}
\subsection{Region-wise matching}
\newcommand{\patchsize}{\sqrt{m}\times \sqrt{m}}
\newcommand{\patchsizem}{\sqrt{m}\times \sqrt{m}}
\newcommand{\regionnumber}{{\tilde{n}}}
Nonlocal self-similarity (NSS) has been successfully applied in many image restoration tasks \cite{xu2015patch,li2019low}, and it usually brings performance improvements in most cases.
However, when contaminated by entire-row/column missing noise, most existing patch-based low-rank inpainting algorithms can not recovery the image well. The main reason is their adopted exhaustive block matching method that looks at all locations around a target patch to find patches as similar as possible. As a result, all patches within a group share the same locations of missing entries, and the constructed group matrix contains unobserved rows.

To remedy the above problem, we propose a region-wise matching (RwM) method to group similar patches.
Concretely, let $\bm{I}\in \realR^{H\times W}$ denote an image to be processed, where $H$ and $W$ stand for the height and width. For a patch of size $\patchsize$ at position $\VEC{p}=(x,y)^T$ (top-left corner), define $\bm{I}_{\VEC{p}}\in \realR^{\patchsize}$ as the corresponding data, and
\begin{equation}\label{eq-Rp}
  \bm{I}_{\VEC{p}} = \bm{R}_{\VEC{p}}(\bm{I}),
\end{equation}
where $\bm{R}_{\VEC{p}}(\cdot)$ is an operator that extracts the patch $\bm{I}_{\VEC{p}}$ from the image $\bm{I}$, and its transpose $\bm{R}_{\VEC{p}}^{T}(\cdot)$ produces an image with the same size as $\bm{I}$ by sending back a patch into position $\VEC{p}$ in the reconstructed image, padded with zeros elsewhere.

Then to fill each target patch $\bm{I}_{\VEC{p}}$, we search $\regionnumber$ similar patches located in $\regionnumber$ subregions around position $\VEC{p}$:
\begin{equation}\label{eq-N-patches}
  \bm{I}_{\VEC{q}_i} = \bm{R}_{\VEC{q}_i}(\bm{I}), \VEC{q}_i = \arg\min_{\VEC{q}\in \Gamma_{i}}\| \bm{R}_{\VEC{p}}(\bm{I}) - \bm{R}_{\VEC{q}}(\bm{I}) \|_F^2,
\end{equation}
where $ \Gamma_{i}$ is the set of points in $i$-th subregion around $\VEC{p}$ for $i=1,\cdots,\regionnumber$. Motivated by pixel diffusion-based methods fill the target region by propagating information along multiple directions, we propose to divide the neighborhood of $\VEC{p}$ into disjoint fan-shaped sectors (see sectors-partition in \TheFig{fig-rwm-ways}):
\begin{equation}\label{eq-dir-points}
  \Gamma_{i} = \left\{\VEC{p}+t\VEC{u}:0< t\leq r,\frac{2\pi (i-1)}{\regionnumber}\leq \theta_i < \frac{2\pi i}{\regionnumber}\right\},
\end{equation}
and $\VEC{u}=(\cos{\theta_i},\sin{\theta_i})^T$ .Then a group matrix $ \bm{G}_{\VEC{p}}$ is constructed by stacking all similar patches as columns ($n=\regionnumber+1$):
\begin{equation}\label{eq-Gp}
  \bm{G}_{\VEC{p}} = [\bm{I}_{\VEC{p}}(:), \bm{I}_{\VEC{q}_1}(:),\bm{I}_{\VEC{q}_2}(:),\cdots,\bm{I}_{\VEC{q}_\regionnumber}(:) ]\in \realR^{m\times n}.
\end{equation}
Besides the above sectors-partition, we consider dividing the neighbor area into equal-size grids and searching for one similar patch in each grid (grids-partition). \TheFig{fig-rwm-ways} illustrates the two partition approaches. We will give an ablation study of these partition strategies in later experiments.
\subsection{Iterative inpainting with region-wise matching}
\newcommand{\refset}{\mathcal{R}}
Given an incomplete image $\bm{I}$ with mask $\bm{M}$, we divide the image into overlapped target patches of size $\patchsize$ by sliding a window. Let $K$ be the number of target patches, and denote $\refset=\{\VEC{p}_i\}_{i=1}^{K}$ as the set of all target patches. For each $\bm{p}_i$, we use the above region-wise matching method to find similar patches of $\bm{I}_{\bm{p}_i}$. For convenience, let $\bm{Q}_{\VEC{p}_i}$ denote all positions of similar patches respect to $\bm{I}_{\bm{p}_i}$ :
\begin{equation}\label{eq-Q}
  \bm{Q}_{\VEC{p}_i}=\{\VEC{q}_{ij}:j=1,\cdots,\regionnumber\}\cup \{\VEC{p}_i\}.
\end{equation}
Then we construct a patch matrix $\MATRIX{Y}_{\VEC{p}_i}$ and its corresponding indicator mask $\bm{\Omega}_{\VEC{p}_i}$
\begin{equation}\label{eq-Gpi}
\begin{aligned}
  \MATRIX{Y}_{\VEC{p}_i} &= [\bm{I}_{\VEC{p}_i}(:), \bm{I}_{\VEC{q}_{i1}}(:),\bm{I}_{\VEC{q}_{i2}}(:),\cdots,\bm{I}_{\VEC{q}_{i\regionnumber}}(:) ],\\
  \MATRIX{\Omega}_{\VEC{p}_i} &= [\bm{M}_{\VEC{p}_i}(:), \bm{M}_{\VEC{q}_{i1}}(:),\bm{M}_{\VEC{q}_{i2}}(:),\cdots,\bm{M}_{\VEC{q}_{i\regionnumber}}(:) ].
\end{aligned}
\end{equation}
The proposed NC-WLRD model is then applied to complete $\MATRIX{Y}_{\VEC{p}_i}$:
 \begin{equation}\label{eq-model-pm}
 \begin{aligned}
   \MATRIX{X}_{\bm{p}_i},\MATRIX{B}_{\bm{p}_i} = &\mathop{\arg\min}_{\ML,\MS}\knn+\lambda\|\MATRIX{\Omega}_{\VEC{p}_i}\odot \MS\|_{1}~~\\
   &s.t.~~\MATRIX{Y}_{\VEC{p}_i}=\ML+\MS\\
   \end{aligned}
\end{equation}
\begin{figure}[htbp]
\centering
\newcommand{\fs}{0.26}
\newcommand{\figmargin}{-3mm}
\includegraphics[width=0.96\linewidth]{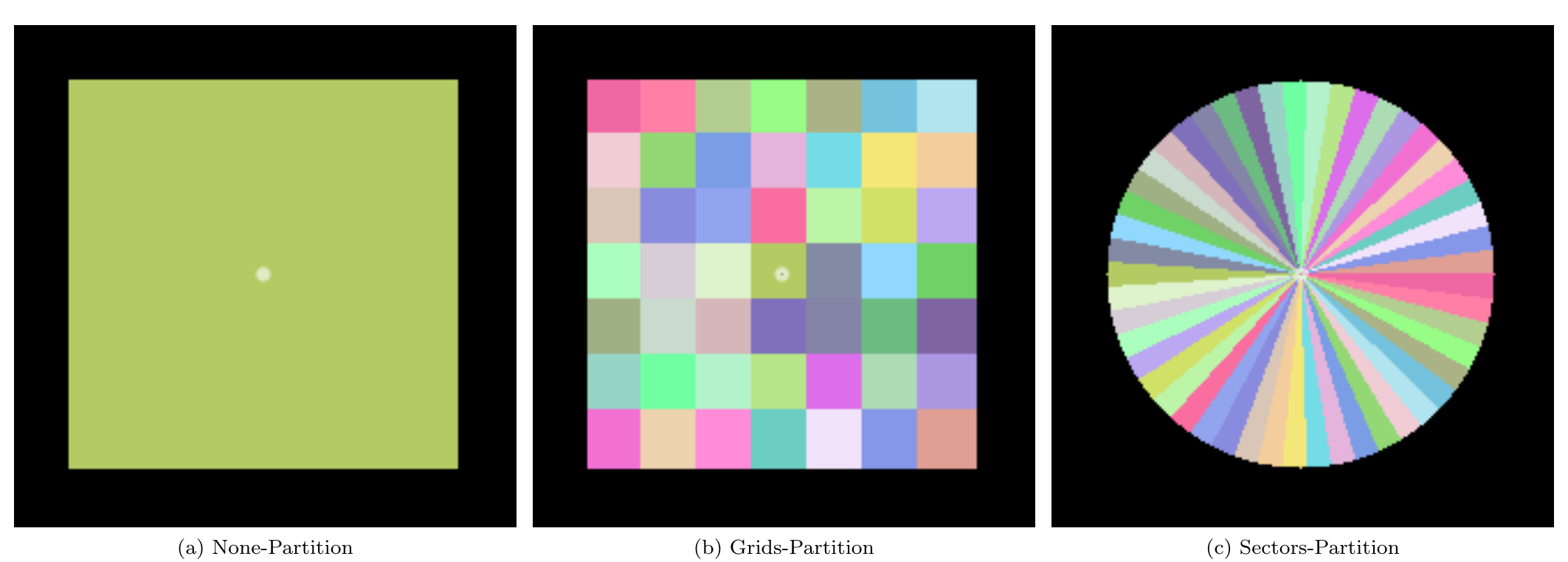}
\caption{Illustration of different ways for generating subregions. The white dot in the center represents the position of a target patch. (a) Traditional exhaustive way, finding $\regionnumber$ most similar patches within a neighbor window. (b) Region-wise matching with grids-partition. (c) Region-wise matching with sectors-partition. We search for one similar patch in each grid/sector.}\label{fig-rwm-ways}
\end{figure}
In our implementation, \eqref{eq-model-pm} is solved for $i=1,\cdots,K$ in parallel. The reconstructed patch matrices $\{\MATRIX{X}_{\bm{p}_i}\}_{i=1}^{K}$ are then aggregated by taking the average of all estimated values for each pixel, i.e the final reconstructed image $\hat{\bm{I}}$ is formulated as
\begin{equation}\label{eq-rec}
  \hat{\bm{I}} = \frac{\sum_{\bm{p}_i\in \refset}\sum_{\bm{q}\in \bm{Q}_{\VEC{p}_i}}\bm{R}_{\VEC{q}}^{T}({\MATRIX{X}}_{\bm{p}_i,\VEC{q}})} {\sum_{\bm{p}_i\in\refset}\sum_{\bm{q}\in \bm{Q}_{\VEC{p}_i}} \bm{R}_{\VEC{q}}^{T}(\bm{1})},
\end{equation}
where ${\MATRIX{X}}_{\bm{p}_i,\VEC{q}}$ is the reconstructed version of patch $\bm{I}_{\VEC{q}}$ in $\MATRIX{X}_{\bm{p}_i}$, and $\bm{1}$ is a patch of size $\patchsize$ with all its elements being 1. And the division is element-wise.

In our implementation, all possible locations in each subregion are pre-computed as relative displacements, namely, $t\VEC{u}$ in $\Gamma_{i}$ \eqref{eq-dir-points}.
A main advantage of our region-wise matching strategy is that it runs much faster than traditional exhaustive method since it does not require sorting patch similarity, and can be implemented in parallel for all subregions. Most significantly, the found similar patches contribute their observed elements to each other, and hence the proposed RwM-WLRD recoveries entire-row or entire-column missing images effectively.
The whole procedure of RwM-WLRD is summarized in algorithm \ref{alg-ms-lrd-inpainting}. For better performance, we iteratively implement the whole process until convergence. \TheFig{fig-flowchart} shows the algorithm flowchart.
\begin{algorithm}[ht]
\caption{Image Inpainting via RwM-WLRD }
\label{alg-ms-lrd-inpainting}

\hspace*{0.02in} {\bf Input:}
Incomplete Image $\bm{I}$, mask $\bm{M}$, $\lambda>0$

\hspace*{0.02in} {\bf Initialization:}
$\hat{\bm{I}}=\bm{I}$
\begin{algorithmic}[1]
\WHILE{not converged}
\STATE Divide image $\hat{\bm{I}}$ into $K$ target patches $\{\VEC{p}_i\}_{i=1}^{K}$

    \FOR{$i=1:K$}
        \STATE Find similar patches of $\bm{p}_i$ \eqref{eq-N-patches}
        \STATE Construct patch matrix $\MATRIX{Y}_{\bm{p}_i}$ and mask $\MATRIX{\Omega}_{\VEC{p}_i}$ \eqref{eq-Gp}
        \STATE Apply NC-WLRD model \eqref{eq-model} to solve 
        $$
        \begin{aligned}
   \MATRIX{X}_{\bm{p}_i},\MATRIX{B}_{\bm{p}_i} = &\mathop{\arg\min}_{\ML,\MS}\knn+\lambda\|\MATRIX{\Omega}_{\VEC{p}_i}\odot \MS\|_{1}~~\\
   &s.t.~~\MATRIX{Y}_{\VEC{p}_i}=\ML+\MS\\
   \end{aligned}
        $$
    \ENDFOR
    \STATE Aggregate and update $\hat{\bm{I}}$ using formula \eqref{eq-rec}
\ENDWHILE
\end{algorithmic}
\hspace*{0.02in} {\bf Output: }
Reconstructed Image $\hat{\bm{I}}$
\end{algorithm}
\begin{figure*}[!htb]
  \centering
  \includegraphics[width=0.96\linewidth]{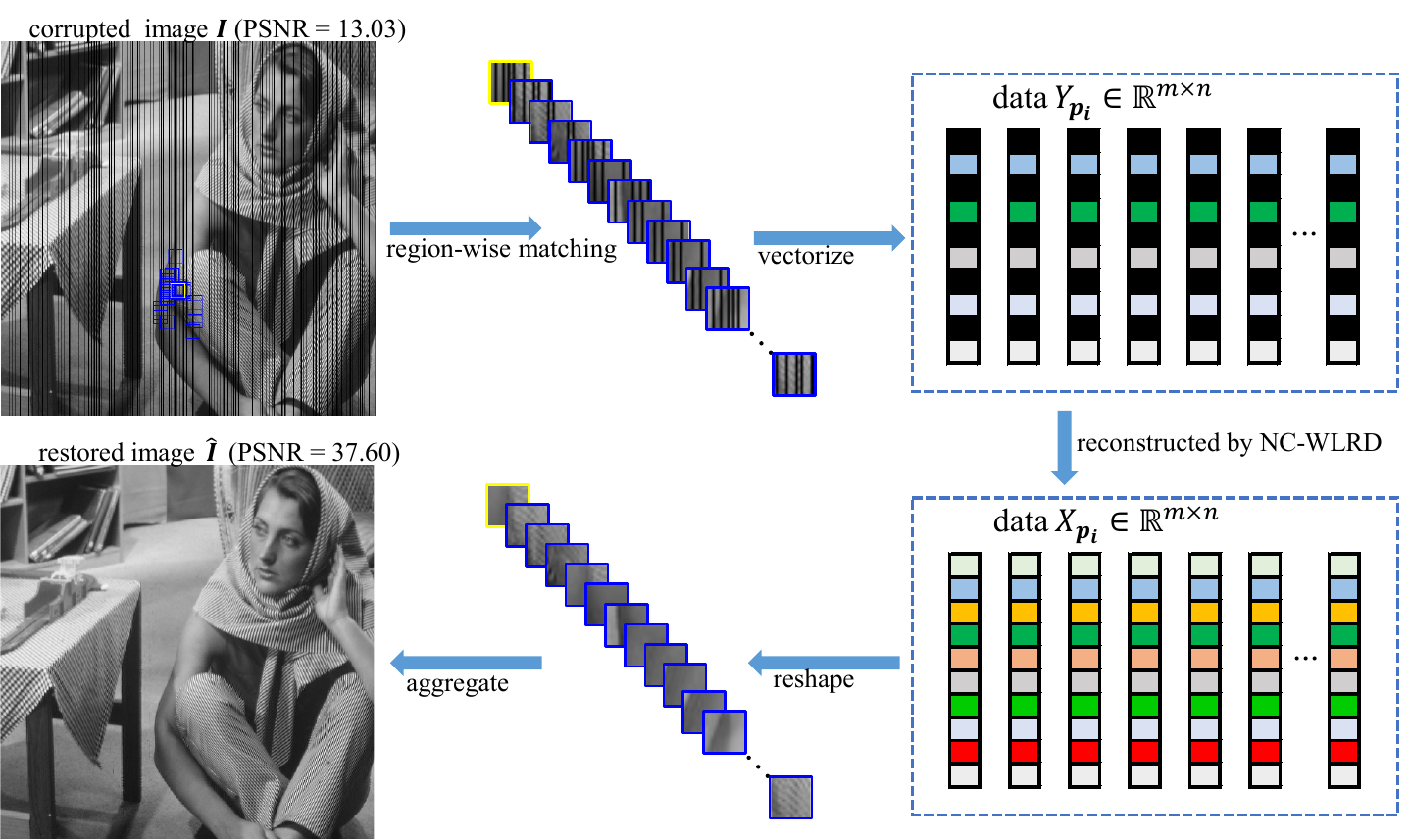}
  \caption{Flowchart of the proposed image inpainting algorithm RwM-WLRD. A group of similar patches is used as an example in the middle of the flowchart. The final aggregation is done after all groups are reconstructed.}
  \label{fig-flowchart}
\end{figure*}

\section{Experiments}
\label{section-exps}
In this section, we first evaluate the proposed NC-WLRD on image completion (not patch-based). Then we conduct various experiments on real images corrupted by horizontal and/or vertical lines to demonstrate the effectiveness of our region-wise matching strategy and show that the proposed RwM-WLRD has superior performance to other image inpainting algorithms.

For all experiments, $\lambda$ is set to 1 for noiseless inpainting. As for patch-based inpainting, the following patch configurations are set consistently: patch size $\sqrt{m}=8$, search radius $r=90$, number of similar patches (i.e. number of subregions, since we search only one similar patch in each subregion \eqref{eq-N-patches}) $\regionnumber=60$. We use MATLAB R2022b on a desktop computer with Intel(R) Core(TM) 3.6-GHz i7 CPU and 32-GB RAM to run the experiments. We select peak signal-to-noise ratio (PSNR) and structural similarity (SSIM) as quality metrics.
\begin{figure}[!htbp]
  \centering
  \includegraphics[width=0.96\linewidth]{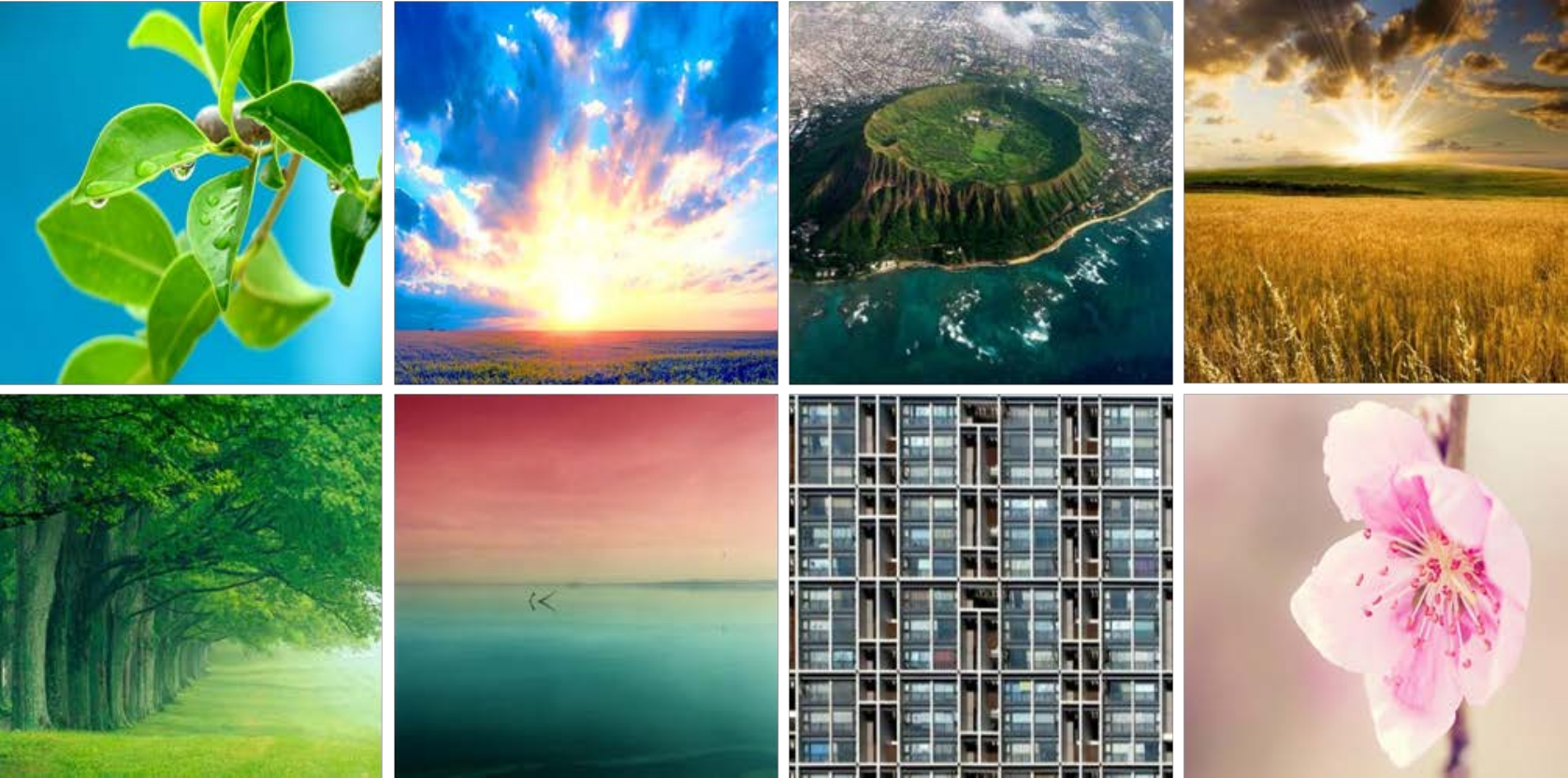}
  \caption{The original images for testing matrix completion performance.}\label{fig-usedpictures}
\end{figure}
\begin{figure*}[!htb]
  \centering
  \includegraphics[width=0.96\linewidth]{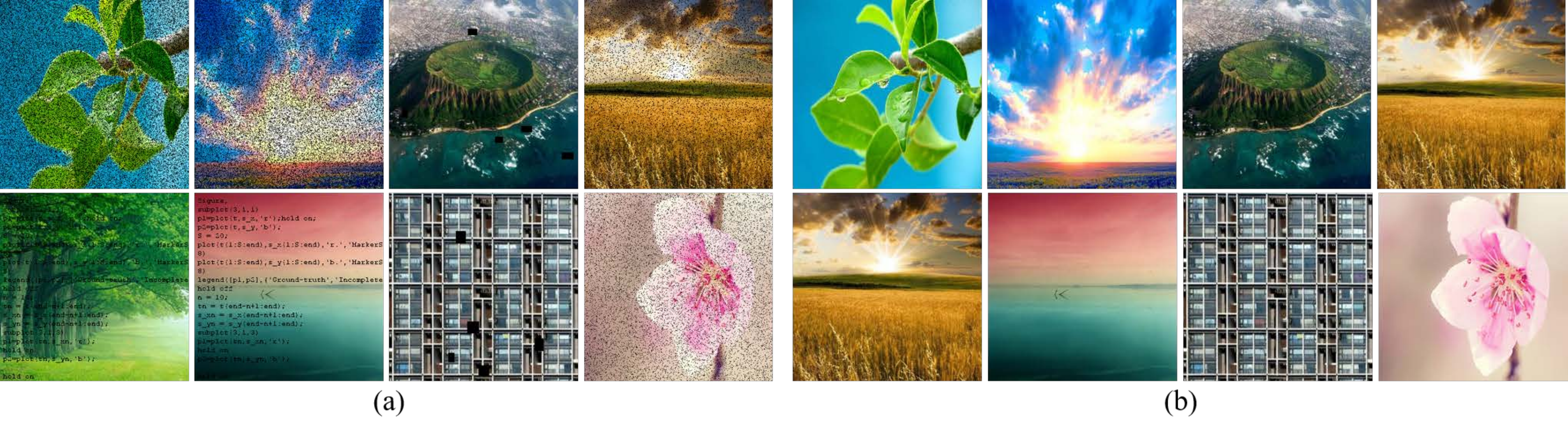}
  \caption{Image restoration results of NC-WLRD. (a) Different degradation masks overlaid on original images. (b) Completion results. }\label{fig-recpictures}
\end{figure*}
\begin{figure}[!htb]
  \centering
  \includegraphics[width=0.6\linewidth]{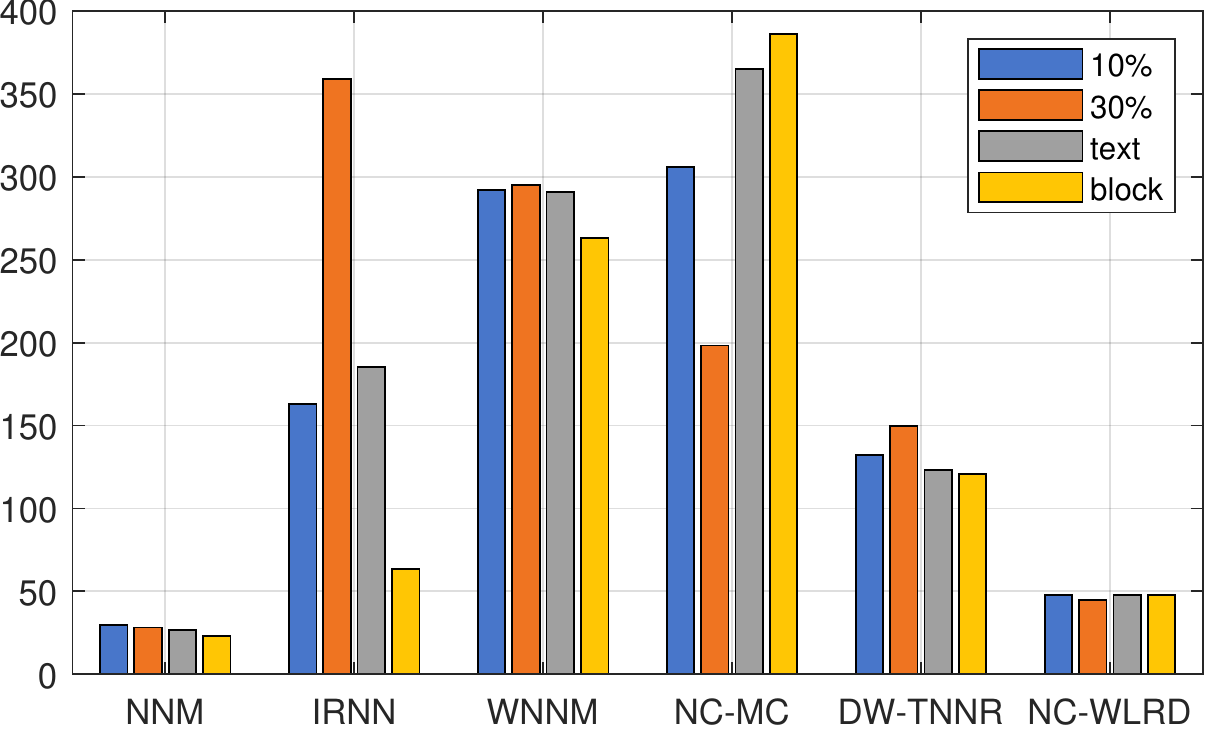}
  \caption{Comparison of average running time (in seconds) in different tasks. The proposed model has a better trade-off between reconstruction performance and speed.}\label{fig-runtime}
\end{figure}
\subsection{Restoration results on matrix completion}
To evaluate the performance of the proposed NC-WLRD on matrix completion, we consider four configurations of masks: random masks with 10\%, 30\% missing pixels, random block mask and a text mask. The random mask and random block mask are generated separately for each image, and the shape of each rectangle block is varied in the range $[10,20]$. The original test images are shown in \TheFig{fig-usedpictures}.
\interfootnotelinepenalty=100000
We compare the proposed NC-WLRD with several state-of-the-art methods,
including: NNM (based on IALM\footnote{https://people.eecs.berkeley.edu/\~{}yima/matrix-rank/sample\_code.html} \cite{lin2010augmented}), WNNM\footnote{http://www4.comp.polyu.edu.hk/\~{}cslzhang} \cite{gu2017weighted}, IRNN\footnote{https://github.com/canyilu/IRNN} \cite{Lu2016IRNN}, NC-MC\footnote{https://github.com/sudalvxin/MC-NC.git} \cite{Nie2019NCMC},
and DW-TNNR\footnote{https://github.com/xueshengke/DW-TNNR} \cite{xue2019double}. We use the default settings for other methods, as suggested in their original papers. $L_p$ penalty is used with $p=0.5$ in IRNN \cite{Lu2016IRNN}.

1) Completion results: \TheTable{tab:imcresults}
presents the PSNR and SSIM results respectively. As demonstrated, the proposed NC-WLRD outperforms other methods in most cases and achieves the highest average scores in both PSNR and SSIM metrics. Though in several cases the proposed NC-WLRD is slightly lower than DW-TNNR \cite{xue2019double}, it outperforms DW-TNNR remarkably by more than 2 dB on average. \TheFig{fig-recpictures} further visualizes some restoration results of the proposed NC-WLRD.

2) Computational complexity analysis: As described in algorithm \ref{alg-admm-rpca}, the proposed NC-WLRD iteratively updates the low-rank and sparse components alternatively.
The computation of $\MS^{(k+1)}$ and $A^{(k+1)}$ require only matrix addition and element-wise multiplication and hence the complexity is $\mathcal{O}({mn})$. The main computational burden within the loop environment is the SVD calculation of $D^{(k)}$ to update $\ML^{(k+1)}$, which has complexity $\mathcal{O}{(mn^2)}$. Therefore, the overall complexity is $\mathcal{O}{(mn^{2})}$.
In the above experiments, the residual error $\| \MY - \ML^{(k+1)} -\MS^{(k+1)} \|_{F}/\| \MY \|_{F}<10^{-7}$ is adopted as the convergence criterion. The proposed method takes about 188.35 seconds to complete all images (there are five images of shape $300\times 300$, and the other three are of shape $300\times400$) with all masks, the speed is much faster than WNNM \cite{gu2017weighted} and NC-MC \cite{Nie2019NCMC}. \TheFig{fig-runtime} exhibits the total running time on each type of mask. As demonstrated in \TheFig{fig-runtime}, except NNM \cite{lin2010augmented}, the proposed NC-WLRD takes much less time to recover incomplete images on each type of mask than other algorithms, at the same time it also achieves high restoration quality.
\begin{table*}[htbp]
  \centering
  \caption{The PSNR and SSIM results on image completion with four types of masks: random mask with 10\%, 30\% missing pixels, random block mask and text mask. The best PSNR and SSIM are highlighted in bold and underlined respectively. The proposed method obtains the best PSNR score (40.97) on average of 32 experiments.}
   \begin{tabular}{cccccccccccccc}
    \toprule
    Image & Mask  & \multicolumn{2}{c}{NNM} & \multicolumn{2}{c}{WNNM} & \multicolumn{2}{c}{IRNN} & \multicolumn{2}{c}{DW-TNNR} & \multicolumn{2}{c}{MCNC} & \multicolumn{2}{c}{NC-WLRD} \\
    \midrule
    \multirow{4}[1]{*}{1} & 10\%   & 37.94  & 0.9773  & 35.78  & 0.9561  & 37.77  & 0.9735  & 38.97  & 0.9833  & 36.04  & 0.9443  & \textbf{39.06 } & \underline{0.9838 } \\
          & 30\%   & 20.89  & 0.4757  & 29.66  & 0.8561  & 31.08  & 0.8890  & 32.35  & 0.9228  & 31.46  & 0.8902  & \textbf{32.53 } & \underline{0.9262 } \\
          & text  & 32.49  & 0.9377  & 39.30  & 0.9843  & 30.68  & 0.9363  & 40.78  & 0.9907  & 37.45  & 0.9548  & \textbf{40.97 } & \underline{0.9918 } \\
          & block & 28.85  & 0.9864  & 36.99  & \underline{0.9926 } & 24.99  & 0.9830  & 35.88  & 0.9915  & 35.56  & 0.9563  & \textbf{37.38 } & 0.9921  \\
    \multirow{4}[0]{*}{2} & 10\%   & 39.29  & 0.9804  & 37.35  & 0.9731  & 39.53  & 0.9827  & 40.31  & 0.9889  & 35.60  & 0.9470  & \textbf{40.31 } & \underline{0.9889 } \\
          & 30\%   & 16.14  & 0.2954  & 31.49  & 0.9021  & 33.27  & 0.9189  & \textbf{34.52 } & \underline{0.9518 } & 32.33  & 0.9041  & 34.51  & 0.9514  \\
          & text  & 28.29  & 0.9062  & 42.28  & 0.9866  & 25.24  & 0.8945  & 44.69  & 0.9927  & 37.10  & 0.9559  & \textbf{44.93 } & \underline{0.9934 } \\
          & block & 27.88  & 0.9850  & 41.39  & 0.9935  & 22.83  & 0.9796  & 38.42  & 0.9927  & 37.31  & 0.9609  & \textbf{44.50 } & \underline{0.9963 } \\
    \multirow{4}[0]{*}{3} & 10\%   & 34.52  & 0.9636  & 31.19  & 0.9067  & 33.79  & 0.9457  & 34.82  & \underline{0.9648 } & 32.75  & 0.9103  & \textbf{34.83 } & 0.9644  \\
          & 30\%   & 24.87  & 0.7727  & 24.77  & 0.6896  & 26.64  & 0.7670  & \textbf{28.47 } & 0.8558  & 27.68  & 0.7960  & 28.44  & \underline{0.8808 } \\
          & text  & 34.48  & 0.9693  & 33.24  & 0.9511  & 33.16  & 0.9598  & 36.92  & 0.9767  & 33.90  & 0.9218  & \textbf{36.98 } & \underline{0.9771 } \\
          & block & 39.61  & 0.9922  & 34.13  & 0.9883  & 32.43  & 0.9886  & 41.15  & 0.9923  & 34.87  & 0.9399  & \textbf{41.26 } & \underline{0.9927 } \\
    \multirow{4}[0]{*}{4} & 10\%   & 37.44  & 0.9839  & 34.72  & 0.9714  & 37.27  & 0.9832  & 37.73  & 0.9847  & 37.01  & 0.9792  & \textbf{37.77 } & \underline{0.9849 } \\
          & 30\%   & 23.57  & 0.6605  & 28.22  & 0.8552  & 30.08  & 0.8925  & 31.20  & 0.9216  & 31.21  & 0.9200  & \textbf{31.25 } & \underline{0.9219 } \\
          & text  & 30.33  & 0.9294  & 38.43  & 0.9830  & 28.20  & 0.9279  & 40.37  & 0.9891  & 39.81  & 0.9865  & \textbf{40.51 } & \underline{0.9894 } \\
          & block & 33.65  & 0.9861  & 35.69  & 0.9885  & 27.95  & 0.9796  & 39.58  & 0.9922  & 39.29  & \underline{0.9923 } & \textbf{39.58 } & 0.9922  \\
    \multirow{4}[0]{*}{5} & 10\%  & 38.21  & 0.9780  & 35.11  & 0.9546  & 37.87  & 0.9744  & 38.32  & 0.9785  & 31.96  & 0.9117  & \textbf{38.35 } & \underline{0.9786 } \\
          & 30\%   & 27.29  & 0.7599  & 28.07  & 0.7913  & 30.06  & 0.8480  & \textbf{31.56 } & \underline{0.8977 } & 29.26  & 0.8367  & 31.53  & 0.8958  \\
          & text  & 38.71  & 0.9808  & 36.90  & 0.9723  & 37.58  & 0.9760  & 39.83  & 0.9849  & 32.53  & 0.9221  & \textbf{39.92 } & \underline{0.9851 } \\
          & block & 38.49  & 0.9912  & 36.68  & 0.9889  & 32.60  & 0.9865  & \textbf{40.76 } & 0.9922  & 32.89  & 0.9325  & 40.74  & \underline{0.9922 } \\
    \multirow{4}[0]{*}{6} & 10\%   & \textbf{56.18 } & 0.9990  & 53.52  & 0.9975  & 51.20  & 0.9970  & 56.16  & 0.9991  & 49.46  & 0.9944  & 56.17  & \underline{0.9991 } \\
          & 30\%   & 30.21  & 0.7391  & 46.44  & 0.9927  & 46.79  & 0.9920  & 48.78  & 0.9961  & 44.77  & 0.9857  & \textbf{48.82 } & \underline{0.9961 } \\
          & text  & 32.32  & 0.9226  & 58.50  & 0.9986  & 27.41  & 0.8928  & 44.53  & 0.9816  & 54.35  & 0.9972  & \textbf{64.38 } & \underline{0.9996 } \\
          & block & 33.57  & 0.9894  & 57.87  & 0.9989  & 23.45  & 0.9738  & 36.01  & 0.9885  & 56.81  & 0.9987  & \textbf{63.44 } & \underline{0.9998 } \\
    \multirow{4}[0]{*}{7} & 10\%   & 39.00  & 0.9974  & 37.30  & 0.9953  & 38.70  & 0.9972  & 39.89  & 0.9977  & 36.94  & 0.9951  & \textbf{39.98 } & \underline{0.9977 } \\
          & 30\%   & 22.04  & 0.9069  & 30.50  & 0.9773  & 31.35  & 0.9827  & 32.78  & 0.9870  & 31.90  & 0.9840  & \textbf{32.93 } & \underline{0.9873 } \\
          & text  & 31.77  & 0.9866  & 39.55  & 0.9964  & 30.91  & 0.9866  & 41.78  & 0.9978  & 39.16  & 0.9962  & \textbf{42.07 } & \underline{0.9979 } \\
          & block & 31.50  & 0.9942  & 37.92  & 0.9972  & 24.33  & 0.9802  & 41.36  & 0.9983  & 39.90  & 0.9978  & \textbf{41.55 } & \underline{0.9984 } \\
    \multirow{4}[0]{*}{8} & 10\%   & 40.79  & 0.9813  & 39.24  & 0.9725  & 40.69  & 0.9807  & 42.48  & 0.9910  & 41.30  & 0.9872  & \textbf{42.53 } & \underline{0.9911 } \\
          & 30\%   & 15.73  & 0.1204  & 33.75  & 0.9219  & 34.27  & 0.9263  & 36.66  & 0.9650  & 35.93  & 0.9585  & \textbf{36.73 } & 0.9652  \\
          & text  & 28.36  & 0.8854  & 40.95  & 0.9872  & 25.39  & 0.8719  & 41.12  & 0.9878  & 44.28  & 0.9948  & \textbf{44.58 } & \underline{0.9953 } \\
          & block & 28.97  & 0.9842  & 40.85  & 0.9943  & 22.93  & 0.9798  & 38.00  & 0.9929  & 43.84  & \underline{0.9963 } & \textbf{42.58 } & 0.9958  \\
    Average &       & 31.98  & 0.8756  & 37.74  & 0.9536  & 32.20  & 0.9484  & 38.94  & 0.9759  & 37.64  & 0.9515  & \textbf{40.97 } & \underline{0.9782 } \\
    \bottomrule
    \end{tabular}%
  \label{tab:imcresults}%
\end{table*}%

We also evaluate the proposed NC-WLRD in a larger image dataset. We use BSD68 \cite{roth2009fields} for performance evaluation, BSD68 is widely used for denoising \cite{zhang2017beyond,zhang2018ffdnet} and it contains 68 natural images from Berkeley segmentation dataset. For each image in the dataset, parts of all pixels are set unobserved randomly with different missing rates (0.1,0.2,0.4,0.6), and the average PSNR values on all images are reported for each missing rate. Comparison results are shown in \TheTable{tab:bsd68mc}. The proposed NC-WLRD achieves the highest PSNR value on average.
\begin{table}[htbp]
  \centering
  \caption{Completion results (PSNR) in image dataset BSD68 for different missing rates.}
    \begin{tabular}{cccccc}
    \toprule
    \multirow{2}[4]{*}{Method} & \multicolumn{4}{c}{Missing rate} &  \multirow{2}[4]{*}{Average}\\
\cmidrule{2-5}          & 0.1   & 0.2   & 0.4   & 0.6   &  \\
    \midrule
    NNM   & 35.54  & 28.01  & 18.25  & 13.16  & 23.74  \\
    WNNM  & 33.51  & 29.71  & 25.22  & 23.27  & 27.93  \\
    IRNN  & 35.50  & 31.34  & 25.80  & 20.24  & 28.22  \\
    DW-TNNR & 36.77  & 32.85  & 28.23  & 25.37  & 30.81  \\
    NC-MC  & 36.57  & 32.92  & \textbf{28.49}  & 25.40  & 30.84  \\
    NC-WLRD & \textbf{36.82}  & \textbf{32.93}  & 28.28  & \textbf{25.61}  & \textbf{30.91}  \\
    \bottomrule
    \end{tabular}%
  \label{tab:bsd68mc}%
\end{table}%

\subsection{Image inpainting with RwM-WLRD}\label{LineInpainting}
\textbf{Line inpainting of grayscale image}:
Restoration of entire-row/column missing (also known as horizontal or vertical dead lines) is a challenge for most low-rank based inpainting methods. To demonstrate the effectiveness of the proposed region-wise matching strategy, we evaluate our model on dead line inpainting.
We compare the proposed RwM-WLRD on line inpainting with following competing methods: BPFA\footnote{http://people.ee.duke.edu/\~{}lcarin/BCS.html} \cite{Zhou2012BPFA}, WNNM \cite{gu2017weighted}, TSLRA \cite{guo2017patch}, NLM \cite{Newson2017}, TGV \cite{wali2019new} and LRTV-MADC\footnote{https://github.com/ZhengJianwei2/MADC} \cite{Zheng2020MADC}. The test image set is shown in \TheFig{set1}, and we randomly generate a missing line mask separately for each image in the dataset. Comparison results on PSNR and SSIM metrics are presented in \TheTable{tab-line-bm3d-psnr} and \TheTable{tab-line-bm3d-ssim} respectively.
\begin{figure}[!htbp]
  \centering
  \includegraphics[width=0.96\linewidth]{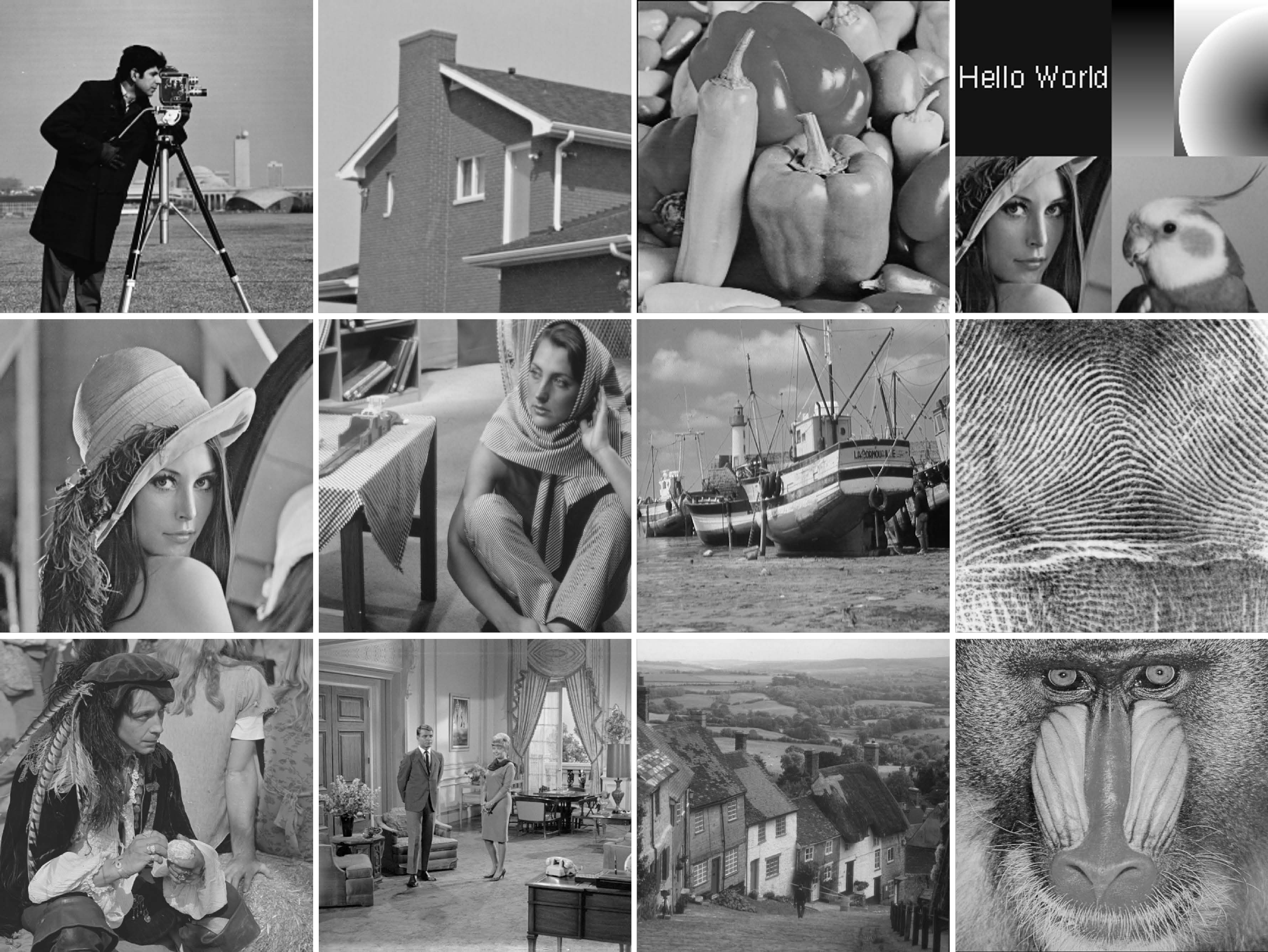}
  \caption{Grayscale image dataset BM3D \cite{BM3D} for evaluating performance on line inpainting. Size: $256\times256$ for images in the first row and others are of size $512\times512$.}\label{set1}
\end{figure}
\begin{figure*}[!htbp]
\centering
\newcommand{\fs}{0.22}
\newcommand{\figmargin}{-3mm}
\includegraphics[width=0.96\linewidth]{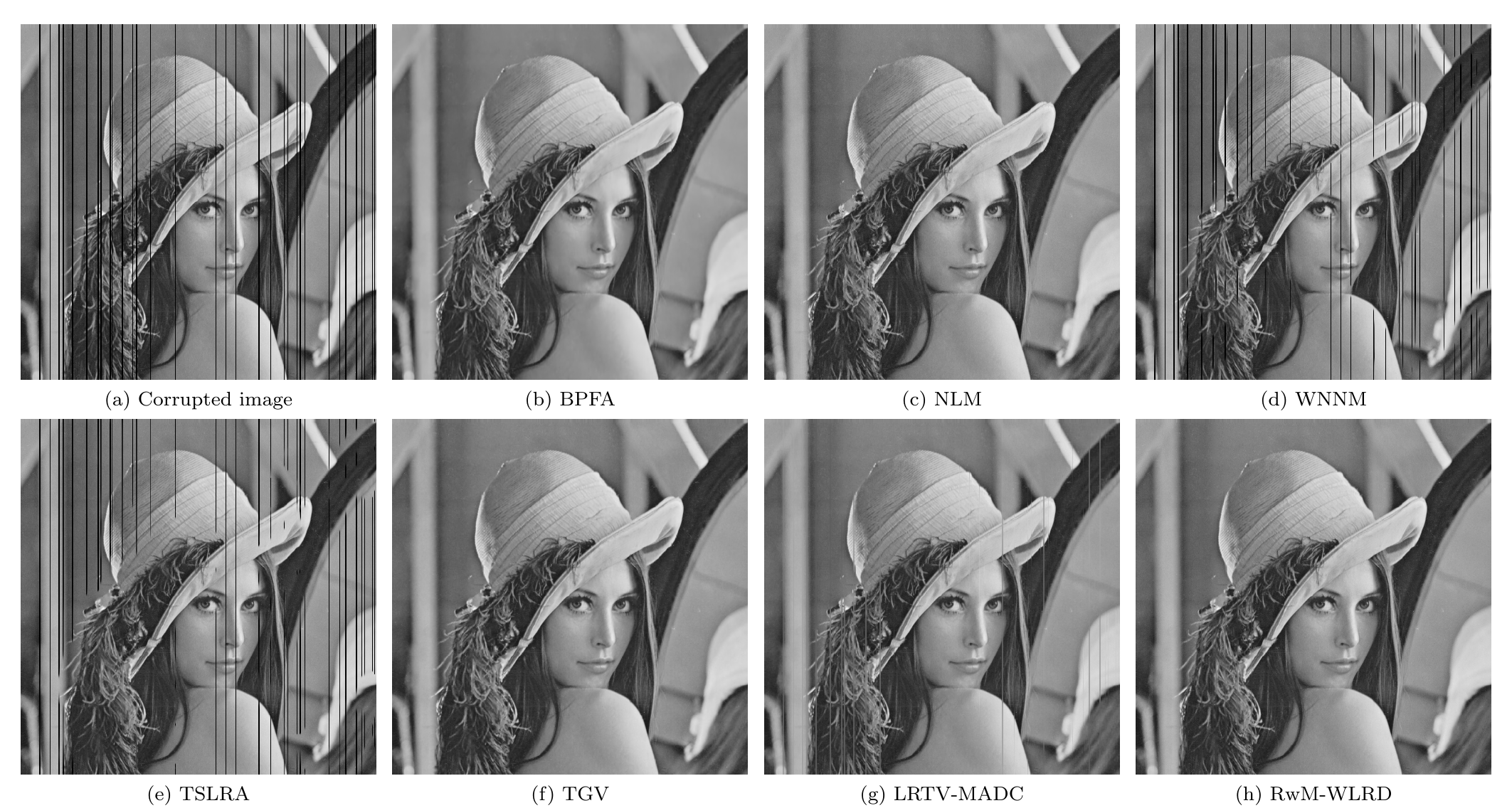}
\caption{Line inpainting results of \textit{Lena} image ($512\times512$).}\label{fig-lineresult-ex2}
\end{figure*}

\begin{figure*}[!htbp]
\centering
\newcommand{\fs}{0.22}
\newcommand{\figmargin}{-3mm}
\includegraphics[width=0.96\linewidth]{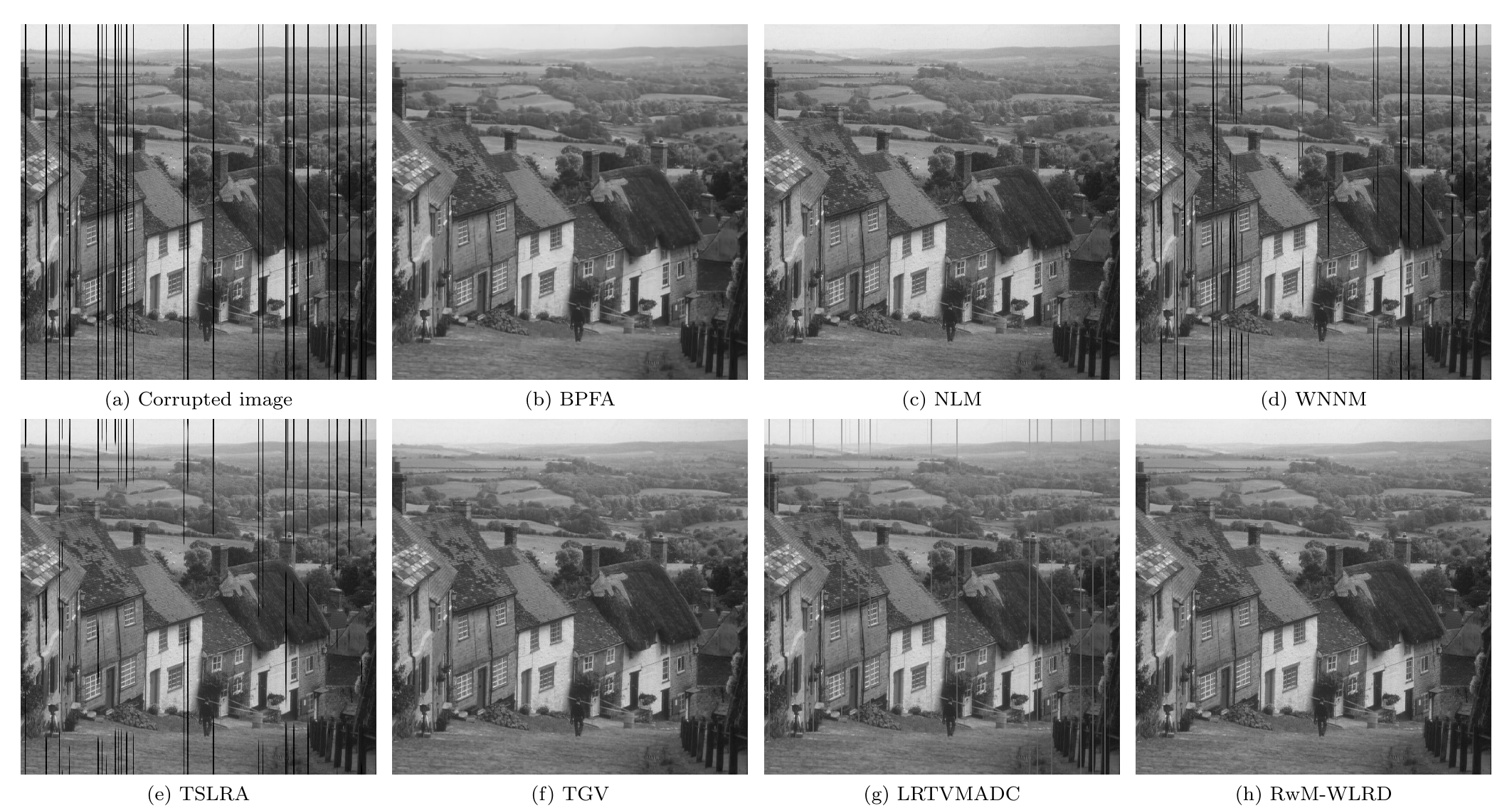}
\caption{Line inpainting results of \textit{Hill} image ($512\times512$).}\label{fig-lineresult-ex3}
\end{figure*}

\begin{table*}[h]
  \centering
  \caption{Line inpainting results of PSNR (dB) in image dataset BM3D.}
    \begin{tabular}{cccccccc}
    \toprule
          & TGV   & WNNM  & TSLRA & NLM   & LRTV-MADC & BPFA  & RwM-WLRD \\
    \midrule
    \textit{C.man} & 31.27  & 14.86  & 16.40  & 31.37  & 26.83  & 31.32  & \textbf{32.60 } \\
    \textit{House} & 44.81  & 14.17  & 15.29  & 42.42  & 23.43  & 43.26  & \textbf{45.49 } \\
    \textit{Peppers} & 30.43  & 16.44  & 18.22  & 31.64  & 29.44  & 32.51  & \textbf{33.00 } \\
    \textit{Montage} & 35.04  & 15.61  & 16.52  & 33.71  & 27.02  & 35.08  & \textbf{35.50 } \\
    \textit{Lena} & 40.33  & 17.22  & 17.52  & 38.50  & 32.58  & 38.52  & \textbf{40.55 } \\
    \textit{Barbara} & 33.27  & 32.51  & 15.60  & 18.06  & 29.41  & 29.79  & \textbf{38.22 } \\
    \textit{Boats} & 36.66  & 17.94  & 18.65  & 35.95  & 31.72  & 35.56  & \textbf{37.12 } \\
    \textit{F.print} & 36.32  & 20.41  & 24.44  & 33.56  & 27.81  & 37.38  & \textbf{38.34 } \\
    \textit{Man} & 38.23  & 19.22  & 21.09  & 36.64  & 31.34  & 37.29  & \textbf{38.30 } \\
    \textit{Couple} & 36.41  & 17.40  & 18.65  & 34.87  & 26.25  & 35.29  & \textbf{36.47 } \\
    \textit{Hill} & 39.34  & 20.19  & 20.71  & 37.92  & 31.43  & 37.19  & \textbf{40.01 } \\
    \textit{Baboon} & 33.40  & 18.38  & 19.86  & 31.64  & 29.51  & 31.90  & \textbf{34.30 } \\
    Average & 36.29  & 18.69  & 18.58  & 33.86  & 28.90  & 35.42  & \textbf{37.49 } \\
    \bottomrule
    \end{tabular}
  \label{tab-line-bm3d-psnr}%
\end{table*}
\begin{table*}[h]
  \centering
  \caption{Line inpainting results of SSIM in image dataset BM3D.}
  \begin{tabular}{cccccccc}
    \toprule
          & TGV   & WNNM  & TSLRA & LRTV-MADC & NLM   & BPFA  & RwM-WLRD  \\
    \midrule
    \textit{C.man} & 0.9664  & 0.5507  & 0.6930  & 0.8675  & 0.9626  & 0.9530  & \textbf{0.9712 } \\
    \textit{House} & 0.9890  & 0.4328  & 0.5851  & 0.6862  & 0.9885  & 0.9799  & \textbf{0.9936 } \\
    \textit{Peppers} & 0.9739  & 0.6326  & 0.7547  & 0.9337  & 0.9613  & 0.9553  & \textbf{0.9758 } \\
    \textit{Montage} & 0.9903  & 0.6368  & 0.6952  & 0.8983  & 0.9874  & 0.9846  & \textbf{0.9913 } \\
    \textit{Lena} & 0.9841  & 0.6621  & 0.6913  & 0.9363  & 0.9812  & 0.9551  & \textbf{0.9888 } \\
    \textit{Barbara} & 0.9596  & 0.9584  & 0.5499  & 0.7099  & 0.9460  & 0.9409  & \textbf{0.9860 } \\
    \textit{Boats} & 0.9741  & 0.7315  & 0.7530  & 0.9328  & 0.9745  & 0.9484  & \textbf{0.9828 } \\
    \textit{F.print} & 0.9934  & 0.9018  & 0.9715  & 0.9506  & 0.9815  & 0.9913  & \textbf{0.9938 } \\
    \textit{Man} & 0.9822  & 0.7372  & 0.8138  & 0.9197  & 0.9765  & 0.9672  & \textbf{0.9848 } \\
    \textit{Couple} & 0.9751  & 0.7082  & 0.7755  & 0.8607  & 0.9669  & 0.9506  & \textbf{0.9777 } \\
    \textit{Hill} & 0.9817  & 0.8084  & 0.8450  & 0.9209  & 0.9759  & 0.9537  & \textbf{0.9859 } \\
    \textit{Baboon} & 0.9761  & 0.8018  & 0.8440  & 0.9158  & 0.9635  & 0.9267  & \textbf{0.9797 } \\
    Average & 0.9788  & 0.7135  & 0.7477  & 0.8777  & 0.9722  & 0.9589  & \textbf{0.9843 } \\
    \bottomrule
    \end{tabular}%
  \label{tab-line-bm3d-ssim}
\end{table*}%
\begin{figure*}[!htbp]
\centering
\newcommand{\fs}{0.22}
\newcommand{\pw}{0.2}
\newcommand{\figmargin}{-3mm}
\includegraphics[width=0.96\linewidth]{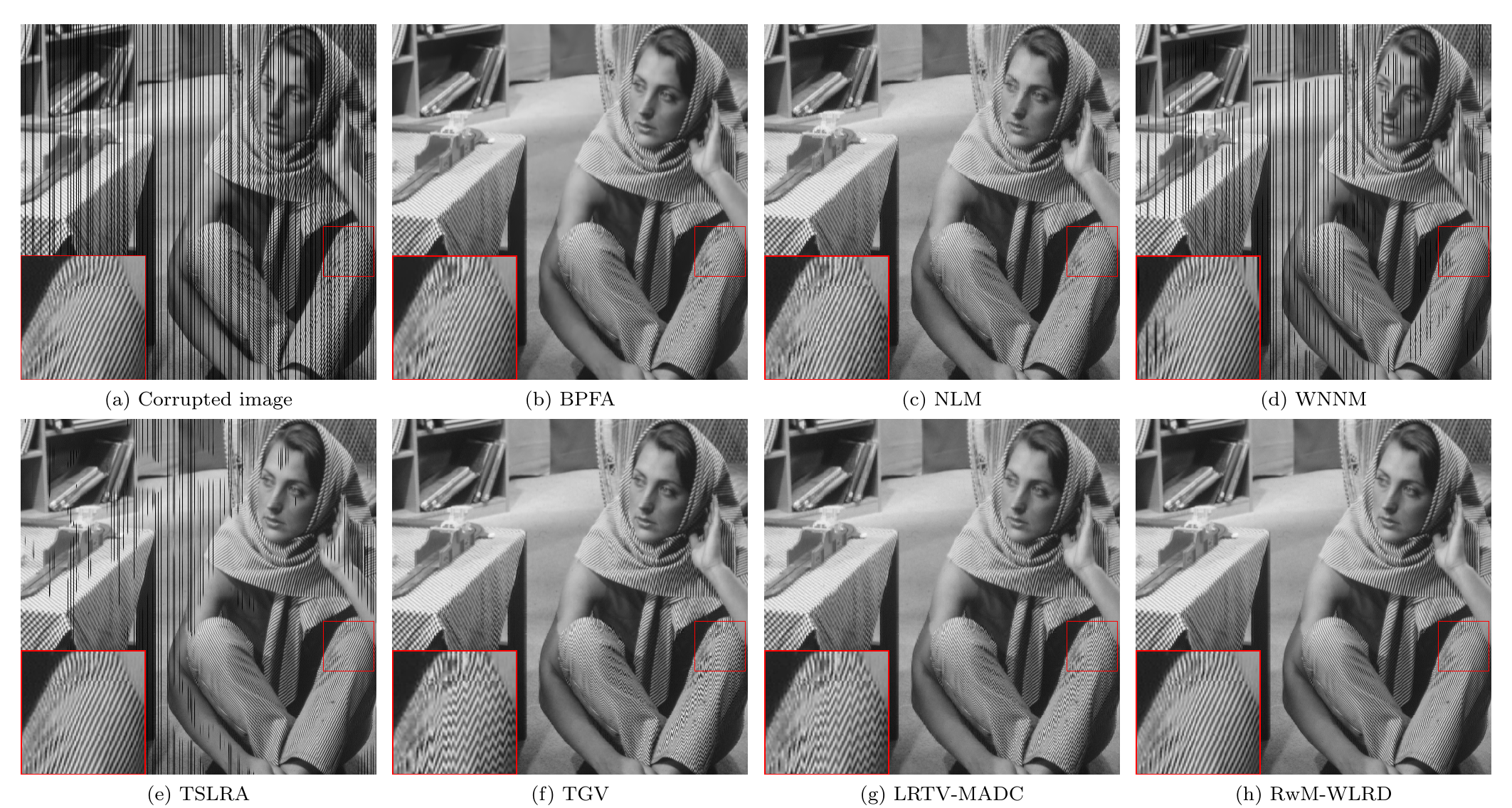}
\caption{Line inpainting results of \textit{Barbara} image ($512\times512$). In the corrupted image (a), the demarcated area is enlarged using the corresponding original data for better visualization.}\label{fig-lineresult-ex-zoom}
\end{figure*}
\begin{figure*}[!htbp]
\centering
\includegraphics[width=0.96\linewidth]{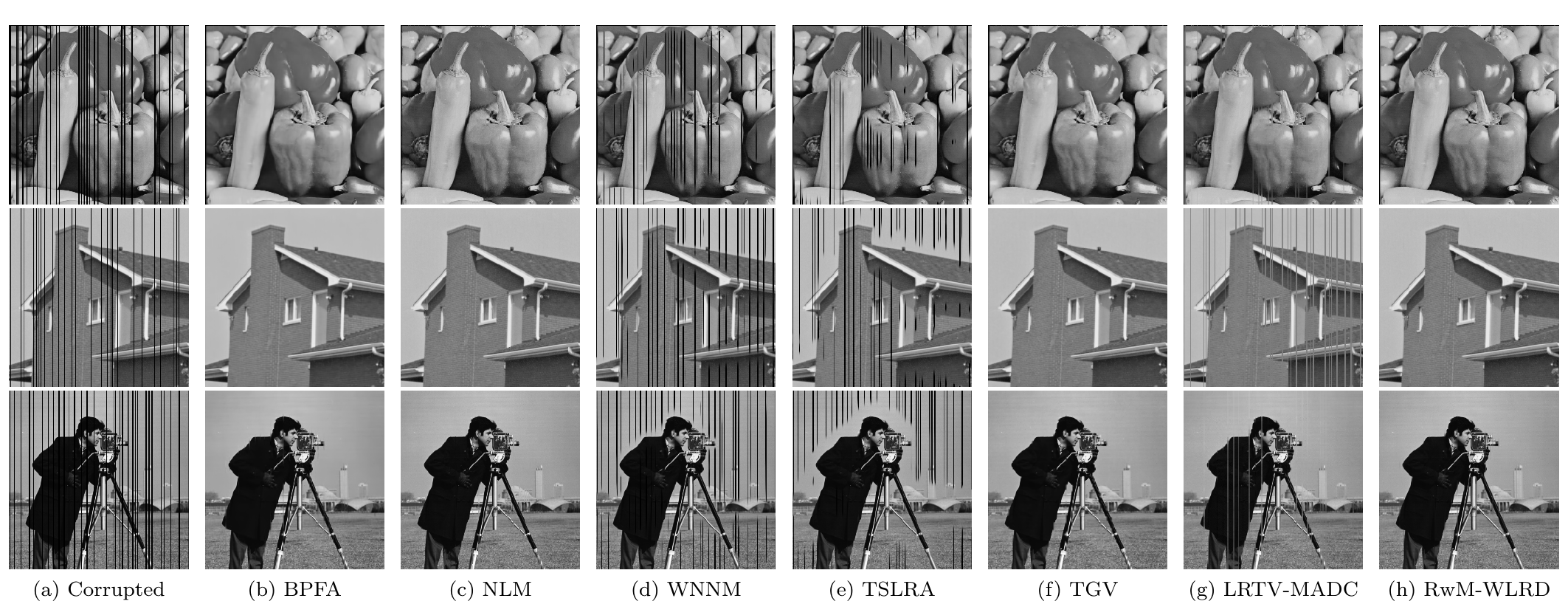}
\caption{Line inpainting results of \textit{Peppers, House} and \textit{Cameraman} images ($256\times 256$).}\label{fig-lineresult-peppers}
\end{figure*}
As shown in \TheTable{tab-line-bm3d-psnr} and \TheTable{tab-line-bm3d-ssim}, the proposed RwM-WLRD method has the best performance in terms of both PSNR and SSIM metrics, and it obtains the highest PSNR value 37.49 and SSIM score 0.9843 on average. Unsurprisingly, both WNNM \cite{gu2017weighted} and TSLRA \cite{guo2017patch} perform badly and fail to reconstruct all missing columns well.

\TheFig{fig-lineresult-ex2} - \TheFig{fig-lineresult-ex3} present the visual comparison results. It is clear that the proposed RwM-WLRD reconstructs missing lines perfectly in texture and smooth regions. However, the inpainted results of WNNM and TSLRA are not satisfactory, retaining lots of segments un-inpainted. To better understand the superiority of the proposed RwM-WLRD, we crop and enlarge a part of the \textit{Barbara} image, which is full of regular textures, and results are shown in \TheFig{fig-lineresult-ex-zoom}. Obviously, BPFA and LRTV-MADC produce blurred results and vertical line artifacts. The diffusion based method TGV, performs badly in this case and produces blocking artifacts. Although TSLRA is competitive with the proposed RwM-WLRD in the rectangle part, it can't fill missing lines in other regions.

\begin{figure*}[!htbp]
\centering
\newcommand{\fs}{0.32}
\newcommand{\figmargin}{-3mm}
\includegraphics[width=0.96\linewidth]{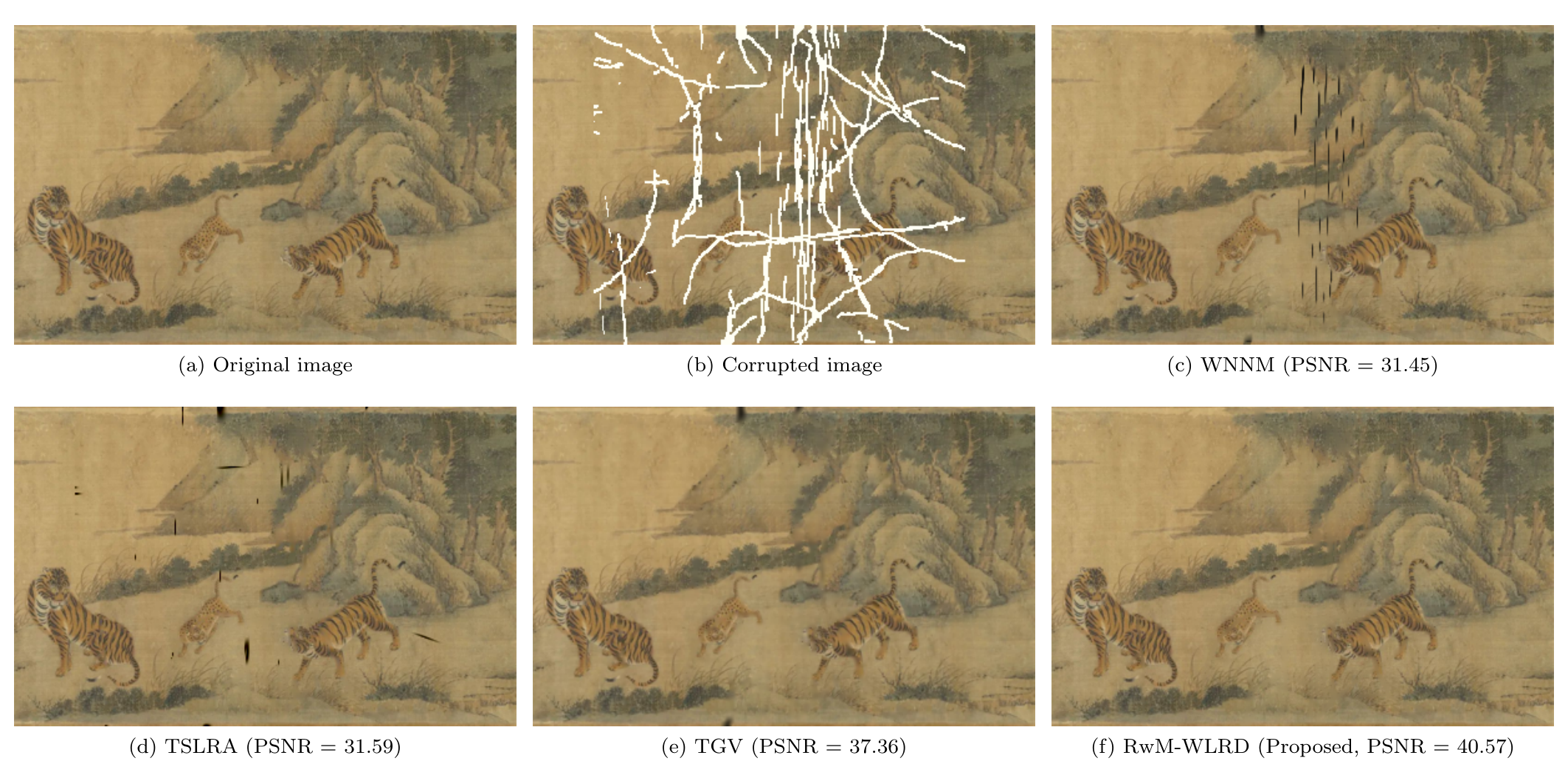}
\caption{Inpainting of scratches.}\label{fig-inpainting-result-hybrid}
\end{figure*}
\textbf{Inpainting of hybrid missing regions}: Although the proposed region-wise matching algorithm is specifically designed to solve the problem of line inpainting, it also works well for other types of missing masks, including text, scratch and blotch etc. \TheFig{fig-inpainting-result-hybrid} presents a comparison for repairing scratches. It is very clearly that the proposed method is superior to other competing methods and restores well all contaminated pixels.
The proposed RwM-WLRD is also effective for removing overlaid text, as illustrated in \TheFig{fig-text-our}. In summary, the proposed region-wise matching algorithm not only solves the problem of line inpainting (entire-missing columns) but also achieves robust and excellent performance in recovering other types of corrupted regions(scratches, text, etc.).

\begin{figure*}[!htbp]
\centering
\newcommand{\fs}{0.32}
\newcommand{\figmargin}{-3mm}
\includegraphics[width=0.96\linewidth]{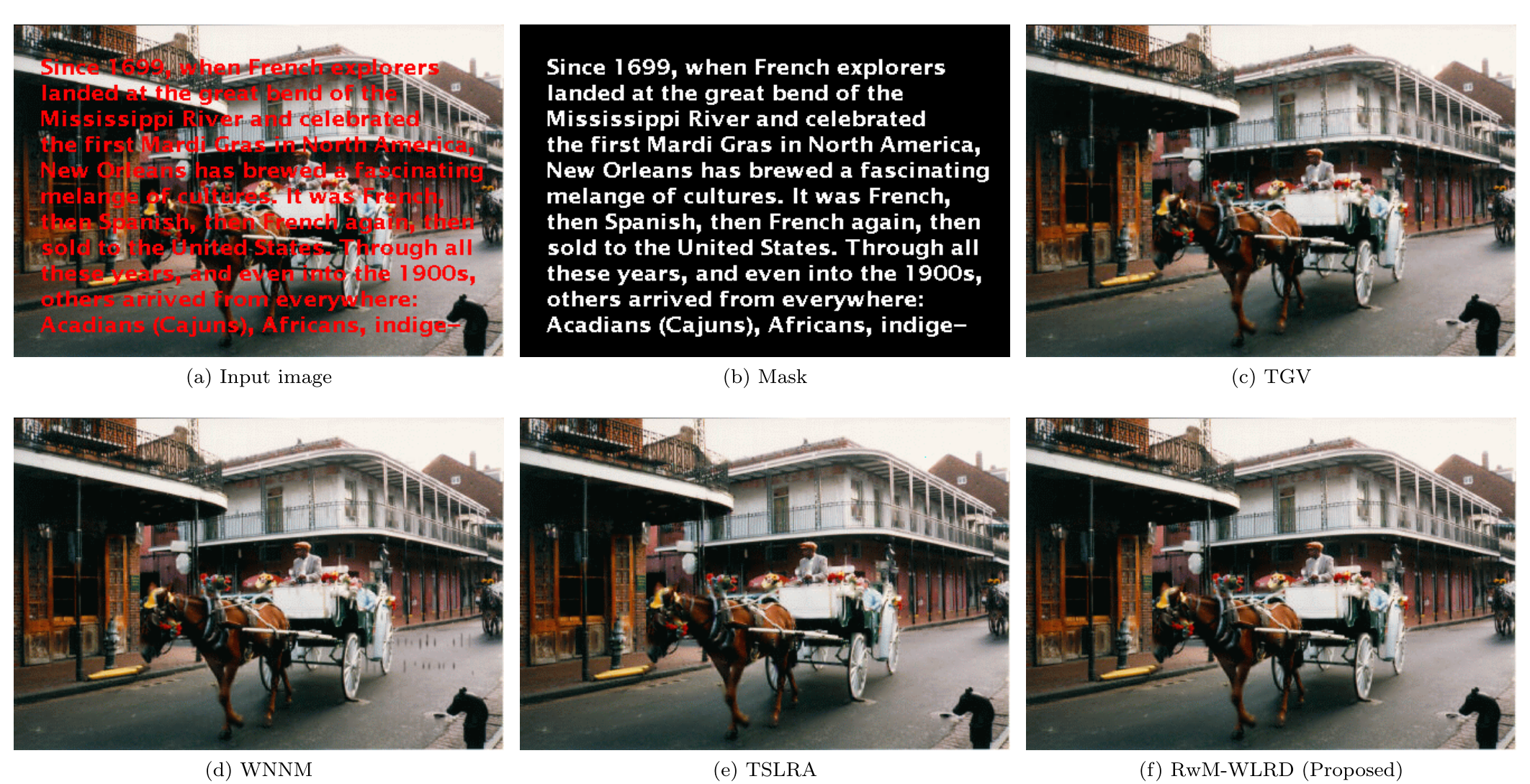}
\caption{Results of text removal using the proposed RwM-WLRD and other methods.}\label{fig-text-our}
\end{figure*}

\subsection{Comparison with deep learning method}
In this part, we compare the proposed algorithm with three deep learning-based methods for line inpainting, i.e. deep image prior (DIP) \cite{ulyanov2018deep}, free-form image inpainting (Deepfill2) \cite{yu2019free}, and conditional texture and structure dual generation (CTSDG) \cite{Guo_2021_ICCV}. We conduct experiments in BM3D and BSD68 datasets to evaluate performance more precisely. \TheTable{tab-comparison-deeplearning} presents the results of mean PSNR and SSIM scores on the two datasets. To demonstrate that the proposed algorithm is significantly superior to the three deep learning methods, we use the Kruskal-Wallis test and calculate the $p$-value (represents the probability of data samples coming from the same distribution) for PSNR scores. As shown in \TheTable{tab-comparison-deeplearning}, the proposed RwM-WLRD exceeds deep learning methods a large gap in terms of PSNR and SSIM, and the corresponding $p-$values are all less than 0.05, hence the proposed method is significant superior than trained deep models.

We also present several inpainted examples of dataset BSD68 in \TheFig{fig-lineresult-dl-bsd68} for visual comparison. It is obvious that CTSDG can't restore missing lines in smooth regions and produces blurry lines. Deepfill2 performs better than CTSDG in homogeneous regions but it outputs artifacts in detailed area. On the contrary, the proposed RwM-WLRD restores all corrupted pixels perfectly.
\begin{table}[!htbp]
  \centering
  \setlength{\tabcolsep}{1mm}
  \caption{Results of comparison with deep learning-based methods.}
    \begin{tabular}{ccccccc}
    \toprule
          & \multicolumn{3}{c}{BSD68} & \multicolumn{3}{c}{BM3D} \\
    \midrule
    method & PSNR  & SSIM  & $p$-value & PSNR  & SSIM  & $p$-value \\
    \midrule
    DIP   &   33.84 & 0.9483      &   9.5045e-08    & 34.49  & 0.9526  & 0.0018  \\
    Deepfill2 & 35.70  & 0.9746  & 0.0109  & 35.56  & 0.9765  & 0.0496  \\
    CTSDG & 35.12  & 0.9634  & 0.0015  & 34.63  & 0.9688  & 0.0111  \\
    RwM-WLRD & \textbf{37.26 } & \textbf{0.9829 } & -     & \textbf{37.71 } & \textbf{0.9857 } & - \\
    \bottomrule
    \end{tabular}%
  \label{tab-comparison-deeplearning}%
\end{table}%
\begin{figure*}
\centering
\newcommand{\fs}{0.18}
\newcommand{\figmargin}{-3mm}
\includegraphics[width=0.96\linewidth]{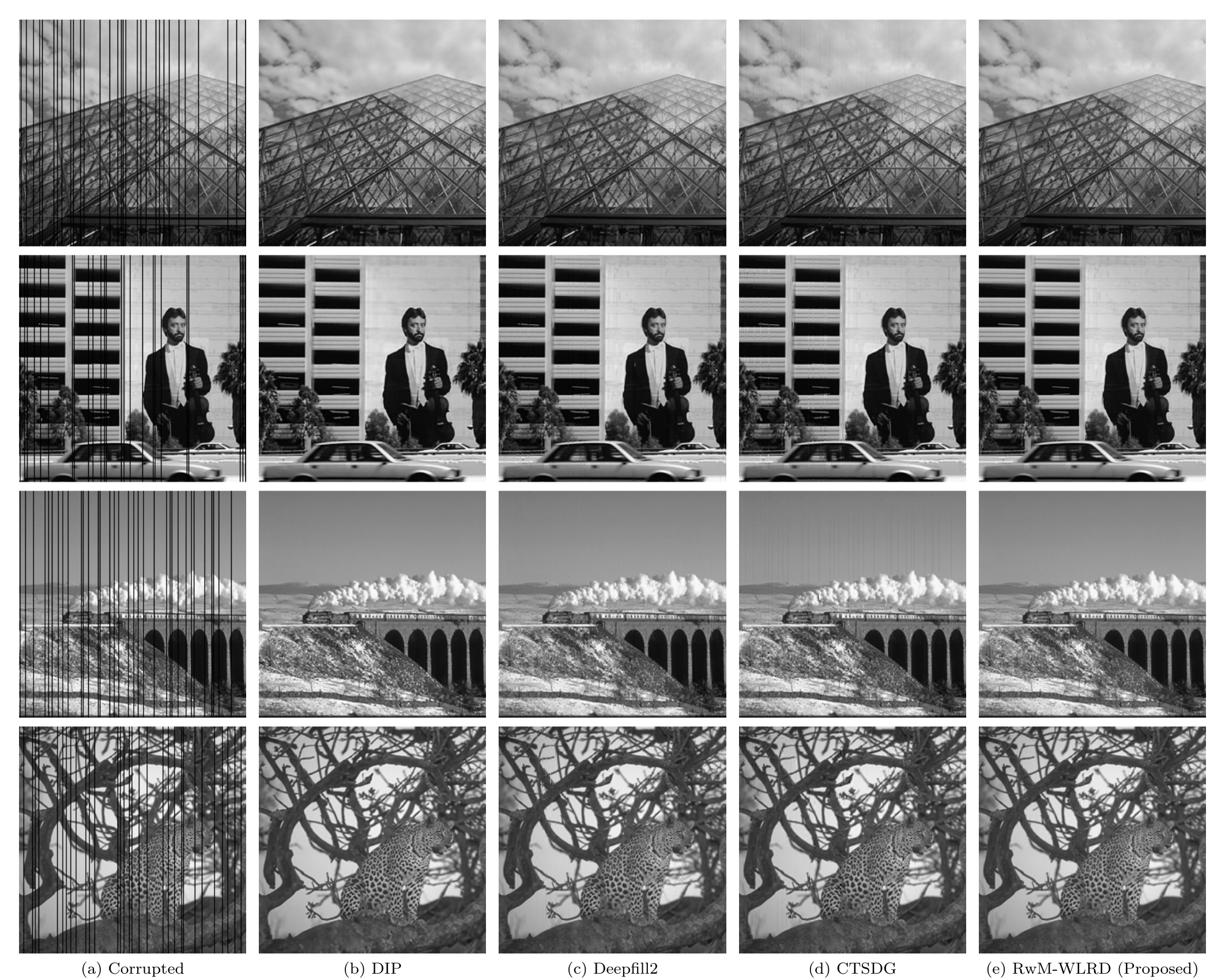}
\caption{Comparison with deep learning-based methods of nature image inpainting in dataset BSD68.}\label{fig-lineresult-dl-bsd68}
\end{figure*}
\subsection{Ablation study}
\textbf{Stability of iterative inpainting}: RwM-WLRD iteratively processes the reconstructed image $\hat{\bm{I}}$(see \TheFig{fig-flowchart} and $\hat{\bm{I}}$ is set to the input image $\bm{I}$ for the first round) to improve inpainting quality. To investigate the stability of iterative processing and the total number of iterations to converge, we use dataset BM3D \cite{BM3D} with some columns missing randomly for each image. The inpainting procedure stops when the difference between consecutive PSNR values is less than 1\%. The iteration-PSNR curves of all images are plotted in \TheFig{iteration-psnr-curves}. One can observe that the PSNR value increases rapidly in the second iteration and then maintains a stable state or keeps increasing slightly, and it converges within 4.67 iterations on average.
\begin{figure}[!htbp]
  \centering
  \includegraphics[width=0.5\linewidth]{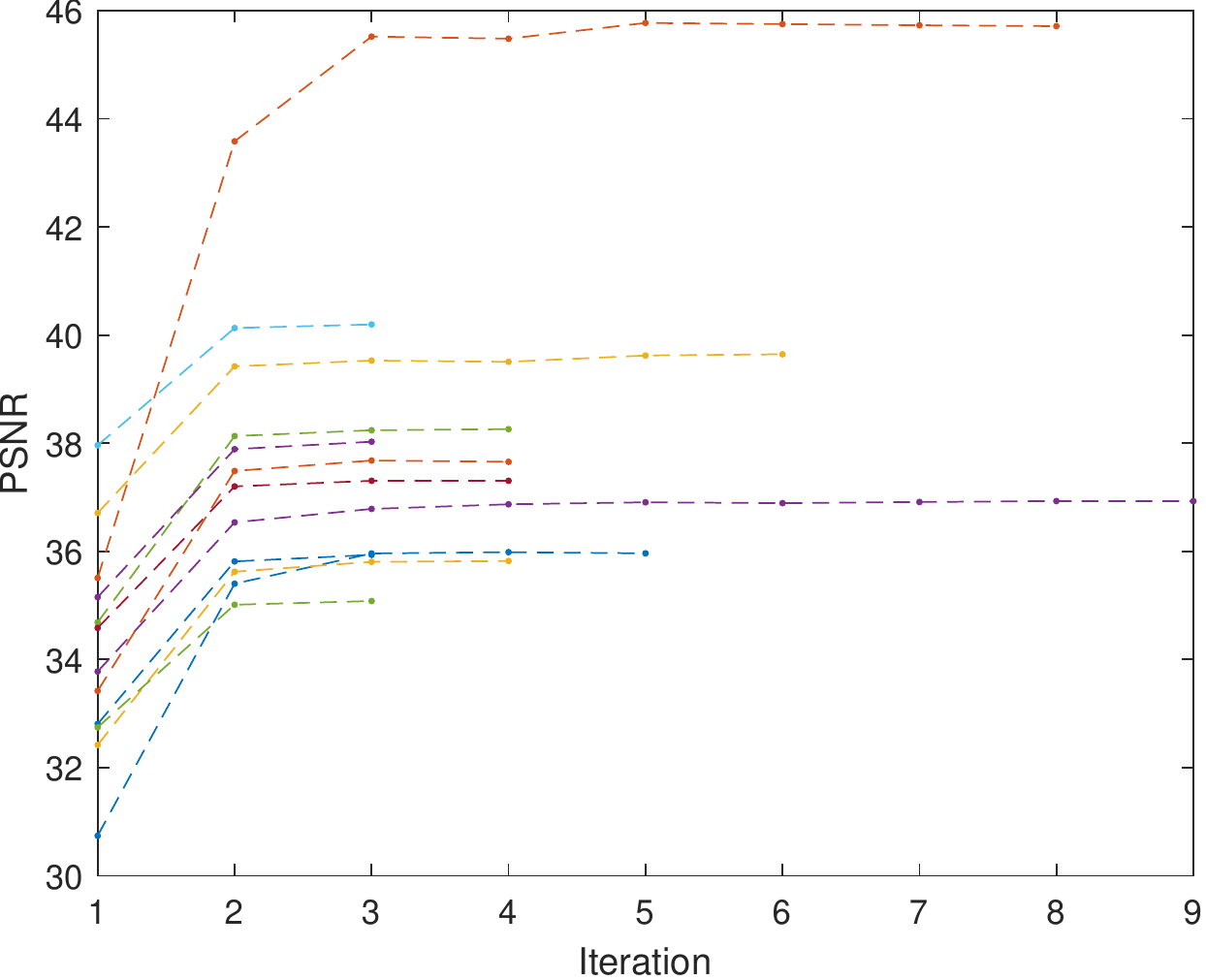}
  \caption{Stability of iterative inpainting. Image dataset BM3D \cite{BM3D} is used. The inpainting procedure stops when the difference between consecutive PSNR values is less than 1\%. }\label{iteration-psnr-curves}
\end{figure}

\textbf{Parameter analysis}:
Our inpainting method involves several parameters, including the number of subregions $\regionnumber$, search radius $r$, patch size $\sqrt{m}$, and the model parameter $\eta$ used in \eqref{eq-adaptive-scad}. To investigate the influence of these parameters, we select 14 images from BSD68 for testing. We first inspect the impact of $\regionnumber$ by fixing other parameters and varying $\regionnumber$ from 20 to 120 with step 10. \TheFig{subfig-ablation-numregions} plots the average PSNR curve. It is clear that the average PSNR curve converges to a stable state when $\regionnumber\geq 60$. Although the average PSNR with $\regionnumber=120$ is slightly better than $\regionnumber=60$, the running time is much larger than that with $\regionnumber=60$ since the computational complexity is quadratic to $\regionnumber$. Therefore we set $\regionnumber=60$ to balance performance and speed. We then study the influence of $r$ with fixed $\regionnumber=60$, and the result is presented in \TheFig{subfig-ablation-radius}. The influence of $r$ on average PSNR is very little though the average PSNR keeps increasing. Therefore, the proposed algorithm is robust to $r$. As for patch size, we vary $\sqrt{m}$ from 4 to 12 with fixed $\regionnumber=60,r=120$. As shown in \TheFig{subfig-ablation-patsize}, the optimal patch size is $\sqrt{m}=5$. However, with $\sqrt{m}=5$, the total number of iterations needed to converge is much larger than that with $\sqrt{m}=8$. On the other hand, if $\sqrt{m}$ is too small such that a target patch is contained entirely in the missing area, then the missing region will not be reconstructed well. Hence a patch size of $8$ is still recommended in practice. \TheFig{subfig-ablation-adp} shows the PSNR results with different values of $\eta$. It turns out that $\eta=0.1$ makes the model performs best, and as $\eta$ keeps increasing, the performance drops slightly. Our analysis shows that the proposed RwM-WLRD is robust to a wide range of parameters. On the other hand, as mentioned before, the proposed matrix completion model \eqref{eq-model} is adaptive to the missing rate $\alpha$, we find that the PSNR score fluctuates if we drop the term $\alpha$. Hence the adaptive design of a regularization function is significant for model stability and robustness.

\begin{figure*}
\centering
\newcommand{\fs}{0.235}
\newcommand{\figmargin}{-3mm}
\subfloat[]{\label{subfig-ablation-numregions}
\includegraphics[width=\fs\linewidth]{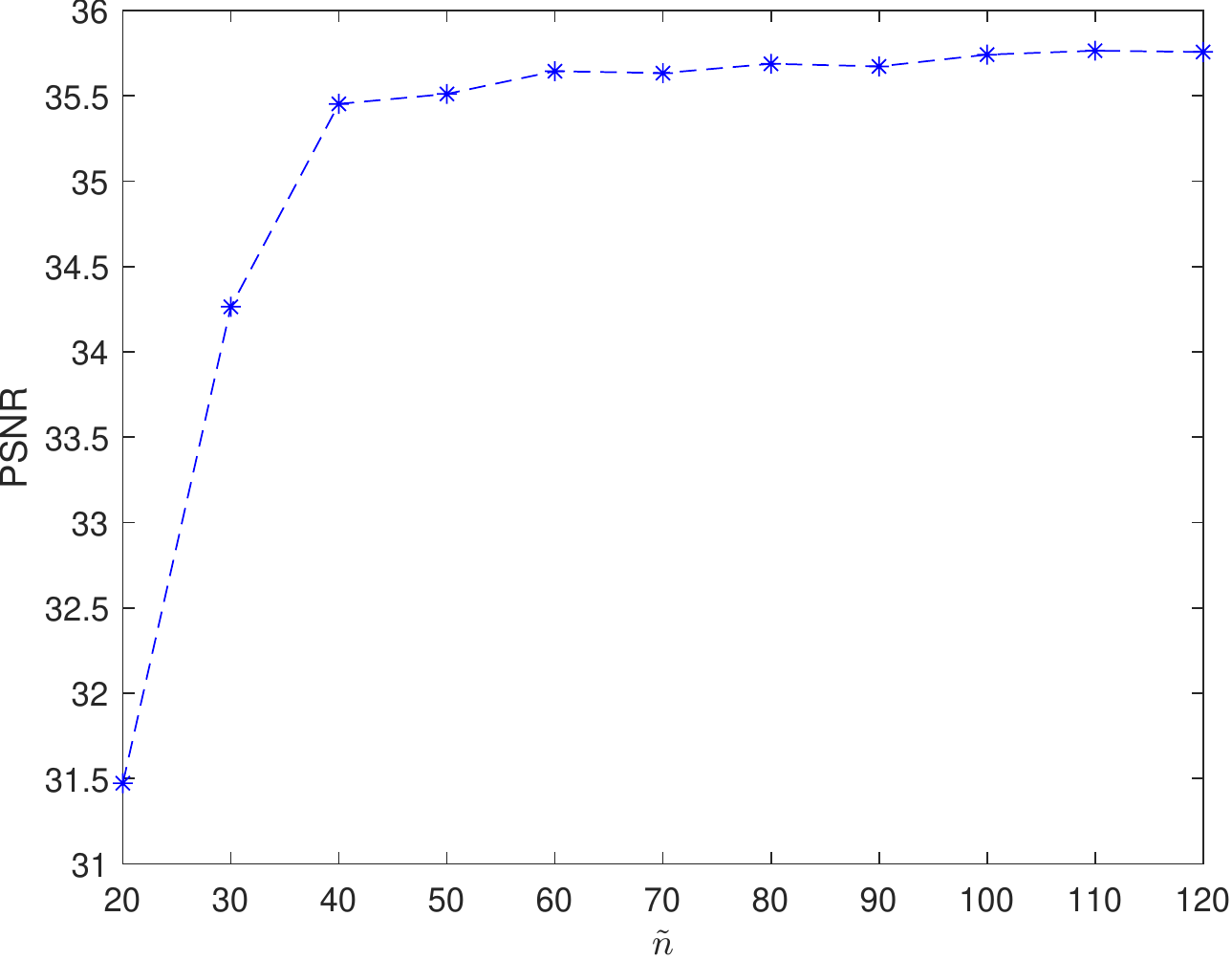}
}
\subfloat[]{\label{subfig-ablation-radius}
\includegraphics[width=\fs\linewidth]{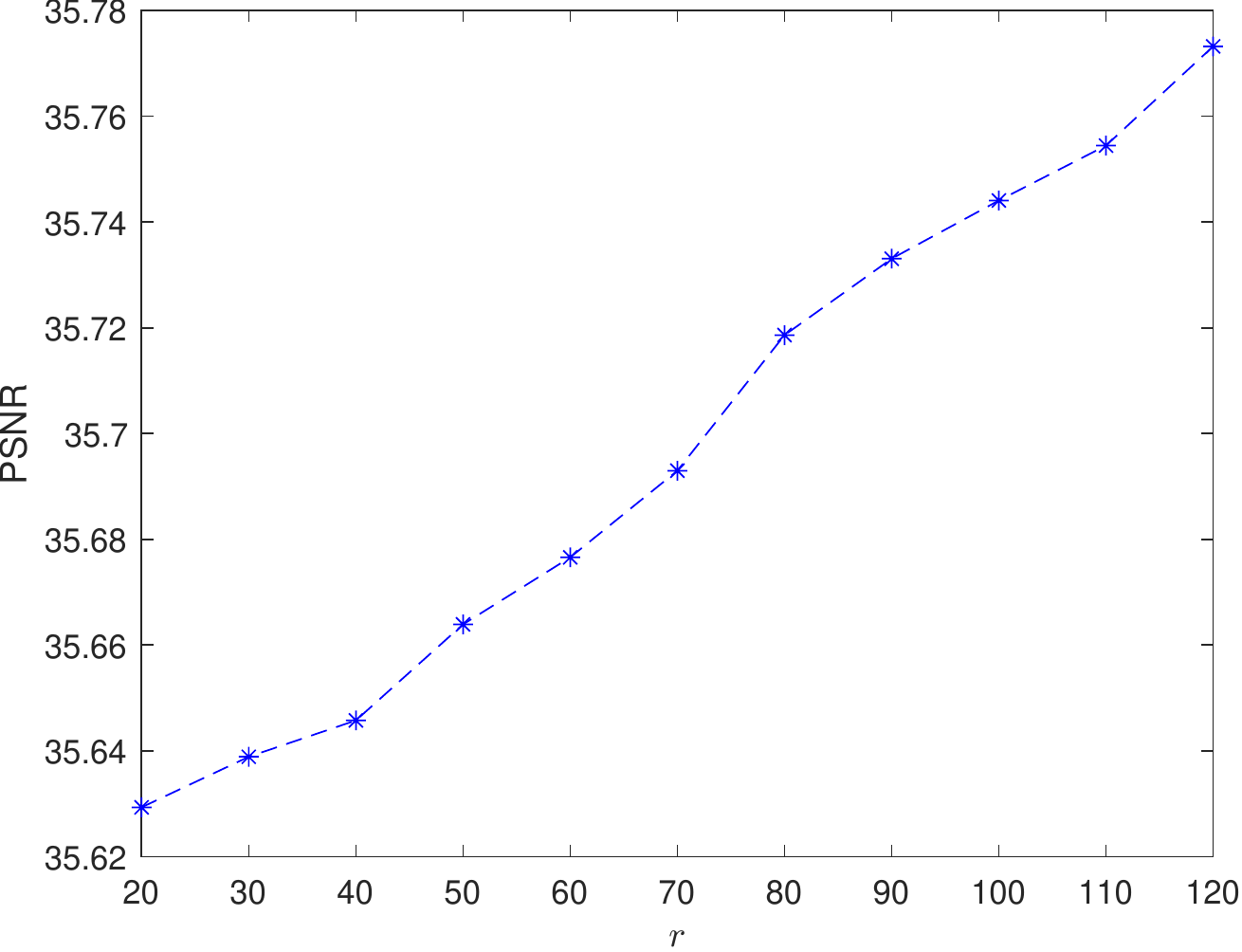}
}
\subfloat[]{\label{subfig-ablation-patsize}
\includegraphics[width=\fs\linewidth]{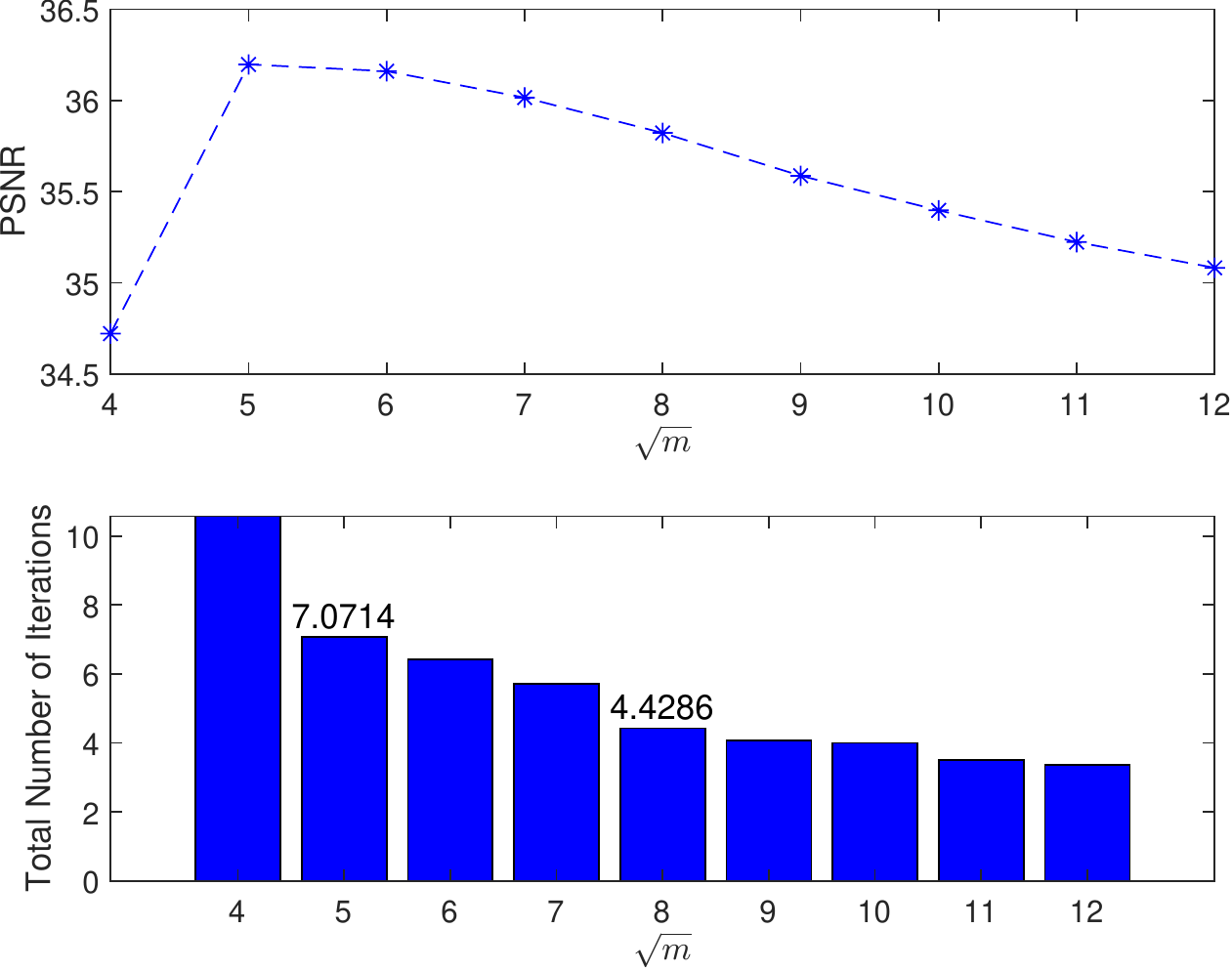}
}
\subfloat[]{\label{subfig-ablation-adp}
\includegraphics[width=\fs\linewidth]{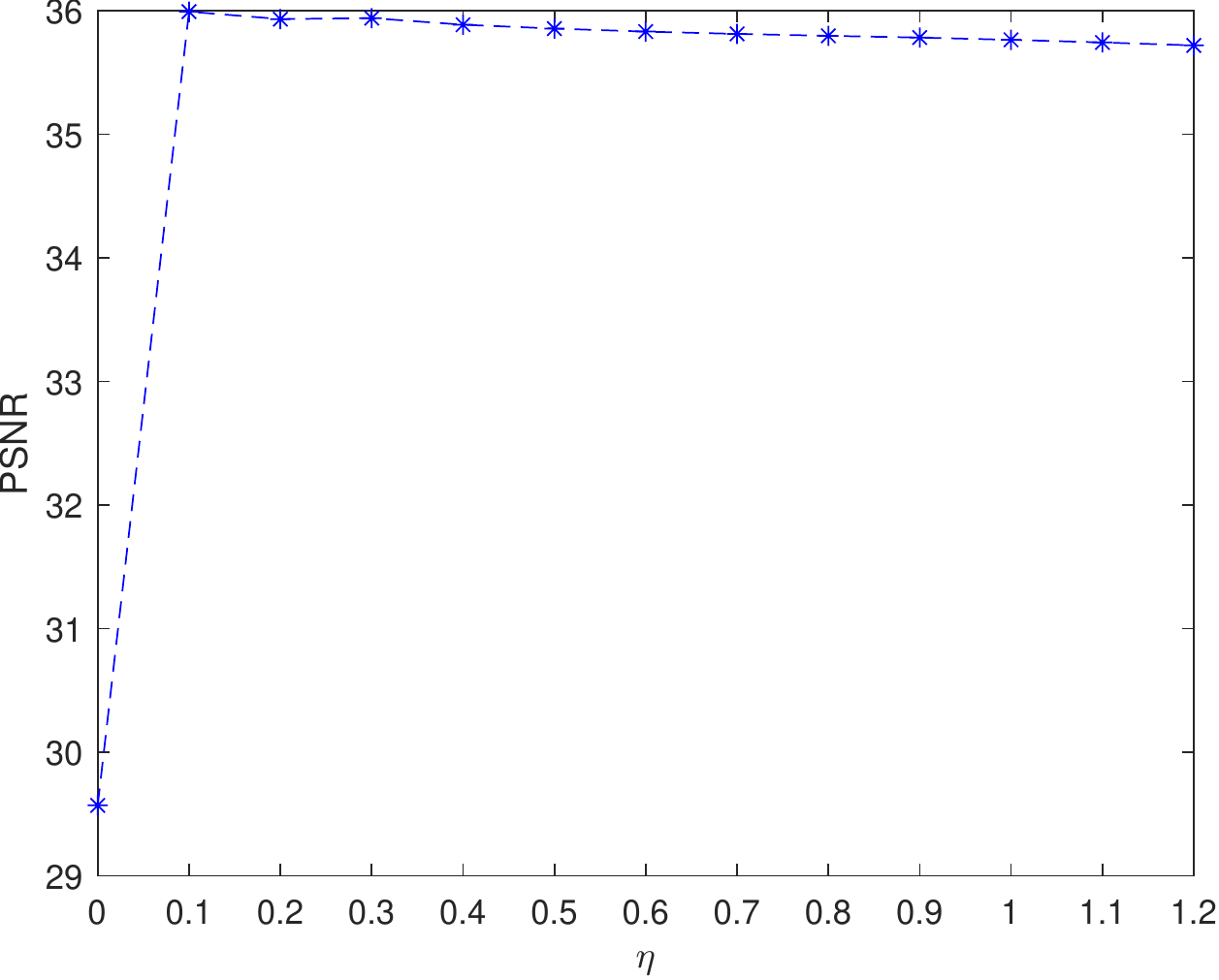}
}
\caption{Influence of the number of subregions $\regionnumber$, search radius $r$, patch size $\sqrt{m}$, and model parameter $\eta$.}\label{fig-ablation}
\end{figure*}

\textbf{Partition of neighborhood}: The core idea of the proposed region-wise matching is to divide the neighbor area of a target patch into multiple subregions and search a given number of similar patches in each subregion separately. We have studied partitioning the $r-$neighbor of a target patch into $\regionnumber$ sectors (see \eqref{eq-dir-points}) in the previous section. Another feasible strategy is to divide the square neighborhood into equal-size grids and search for one similar patch within each grid. \TheFig{fig-rwm-ways} illustrates the ideas of grids and sectors partition, as well as the traditional exhaustive search method. We discover that the grids-partition is also effective for line inpainting, with a slight performance drop compared to the sectors-partition. \TheTable{tab-partition-comparison} summarizes the line inpainting results with different partition approaches of neighborhood.
As illustrated, the sectors-partition method has better performance. We infer the main reason is that the sectors-partition approach works more like propagating information along multiple directions.

\begin{table}[!htbp]
  \centering
  \caption{Line inpainting results in BSD68 dataset with different partition approaches. Second column (\#subregion): number of subregions, third column (\#patch): number of similar patches in each subregion.}
    \begin{tabular}{ccccc}
    \toprule
     Partition Method & \#subregion &   \#patch    & \multicolumn{1}{l}{PSNR} & \multicolumn{1}{l}{SSIM} \\
    \midrule
    None-Partition & 1 & $\regionnumber$   & 20.92 & 0.7292 \\
    Grids-Partition & $\regionnumber$ & 1  & 35.81 & 0.9792 \\
    Sectors-Partition & $\regionnumber$ & 1 & 37.26 & 0.9829 \\
    \bottomrule
    \end{tabular}%
  \label{tab-partition-comparison}%
\end{table}%

\subsection{Extension to blind inpainting}
The proposed weighted low-rank decomposition model \eqref{eq-model} uses a weight matrix $\lambda\MO$ to regularize the sparse component $\MS$. In previous inpainting experiments, we set $\lambda=1$ and achieve promising results. In this part, we show that the proposed model is able to complete missing pixels blindly and remove impulse noise simultaneously by tuning $\lambda$ only, without detection of impulse.

Suppose a grayscale image $\bm{I}\in \realR^{H\times W}$ is corrupted by impulse noise, then we set $\lambda = 1/\sqrt{\max(H,W)}$ to guide the proposed model \eqref{eq-model} to simultaneously remove impulse noise and complete missing pixels.

We compare the proposed RwM-WLRD using the above setting with KALS\cite{doi:10.1137/110843642}\footnote{https://www-users.cse.umn.edu/\~{}lerman/kals/kals\_publish.zip} and median filter based inpainting. Three popular images are selected for demonstration: \emph{Boat}, \emph{House}, and grayscale \emph{Monarch}. For the \emph{Boat} image, $20\%$ of pixels are randomly selected as missing, however the random mask is not provided to the algorithms, thus it is a blind inpainting task. Salt \& pepper noise with density $0.2$ is added to the \emph{House} image using MATLAB function \textit{imnoise}. To demonstrate the proposed RwM-WLRD is able to remove impulse noise simultaneously, $20\%$ of pixels of the grayscale \emph{Monarch} image are damaged with random-valued impulse noise uniformly distributed in $[0,255]$. Besides random-valued impulse noise, we also create a line mask randomly and multiply it with the noisy \emph{Monarch} image. The line mask is provided to all algorithms.
Results are shown in \TheFig{fig-extension-results}. As presented, the proposed RwM-WLRD with weight $\lambda = 1/\sqrt{\max(H,W)}$ is effective for blind inpainting and achieves the best visual quality. By tuning the weight $\lambda$ only, RwM-WLRD removes impulse noise successfully, especially for random-valued impulse noise.

\begin{figure*}[t]
  \centering
  \includegraphics[width=0.96\linewidth]{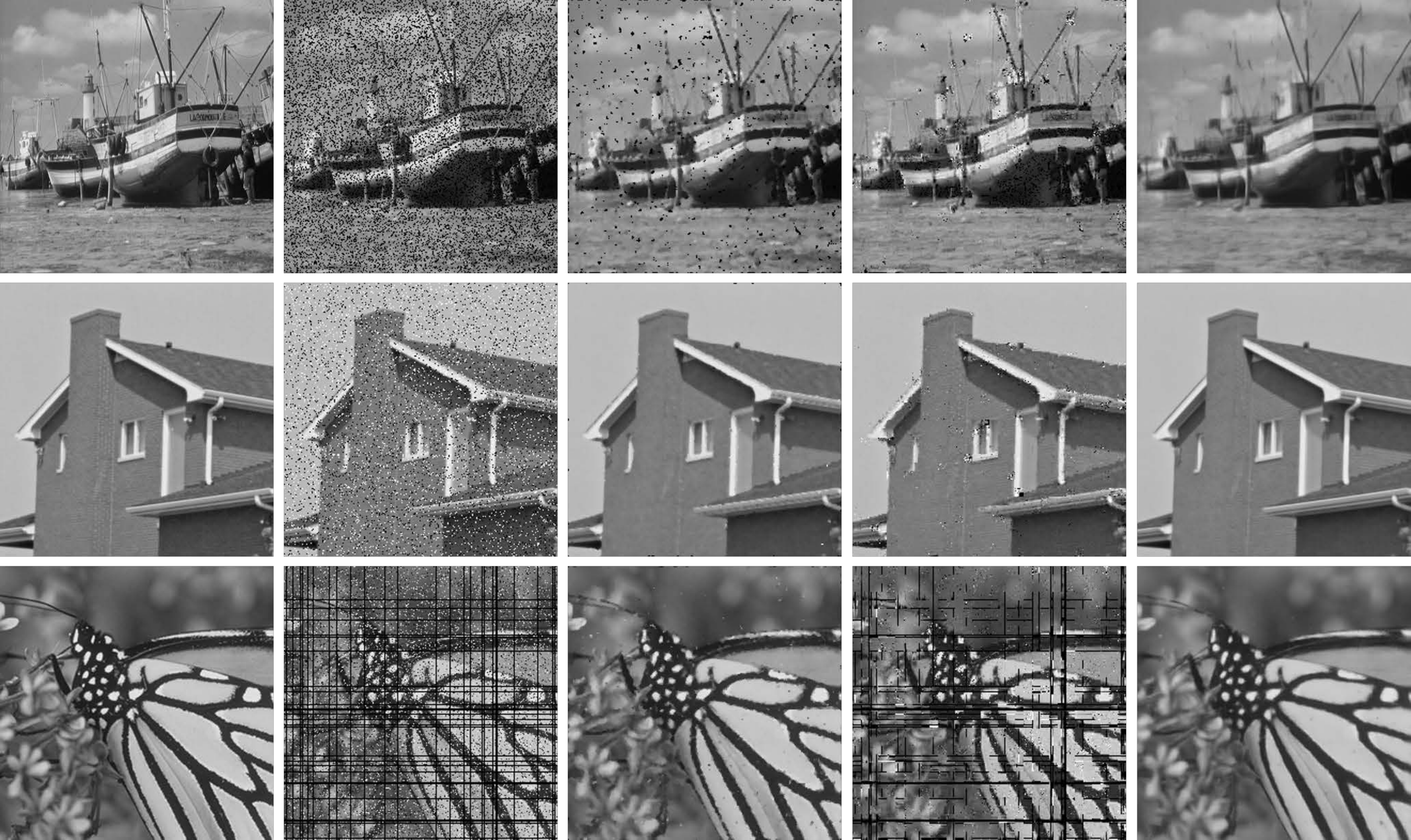}
  \caption{Extension to blind inpainting and impulse noise removal.
  First row: blind inpainting with $20\%$ random missing; Second row: salt \& pepper noise with density $0.2$; Bottom row: random-valued impulse noise (20\%) and entire-row (-column) missing. First column: original image; Second column: damaged image; Third column: median filter (PSNR=25.93, 31.19, 25.30); Fourth column: KALS (PSNR=13.09, 20.15, 12.73); Last column: the porposed RwM-WLRD with $\lambda=\frac{1}{\sqrt{\max(H,W)}}$ (PSNR=27.14, 38.29, 26.72).}\label{fig-extension-results}
\end{figure*}

\subsection{Restoration of remote sensing image}
In this part, we test the proposed algorithm on repairing real remote sensing images degraded by dead lines and stripes. In addition, we compare with stripe removal methods UTV \cite{Bouali2011} and TSWEU \cite{Chang2020}.
As discussed above, the proposed RwM-WLRD removes random noise simultaneously if the weight parameter $\lambda$ is set properly. Now we show that without grouping similar patches, the proposed matrix completion model NC-WLRD is suitable for destriping of remote sensing images.
\TheFig{fig-destripe} illustrates the effectiveness of destriping. The proposed NC-WLRD is competitive to the trained destriping CNN model TSWEU \cite{Chang2020}. Note that for the task of destriping, the stripes component is represented by the low rank part $\ML$ in \eqref{eq-model}, and the output image is $\MS$.

The inpainting results of horizontal dead lines are shown in \TheFig{fig-rsi}. Note that UTV \cite{Bouali2011} and TSWEU \cite{Chang2020} are two destriping algorithms, the missing mask (i.e. position of dead lines) is not required. For other inpainting algorithms, the missing mask is provided by the detection result of NC-WLRD. The proposed NC-WLRD is able to output accurate positions of dead lines (low rank part), but the destriping result in this case is not as satisfactory as in \TheFig{fig-destripe}. Thus we follow the detection-inpainting pipeline and apply the proposed RwM-WLRD to repair all corrupted lines. As shown in \TheFig{fig-rsi}, RwM-WLRD successfully restores the remote sensing image degraded by dead lines.
\begin{figure*}
\centering
\newcommand{\fs}{0.24}
\newcommand{\figmargin}{-3mm}
\includegraphics[width=0.98\linewidth]{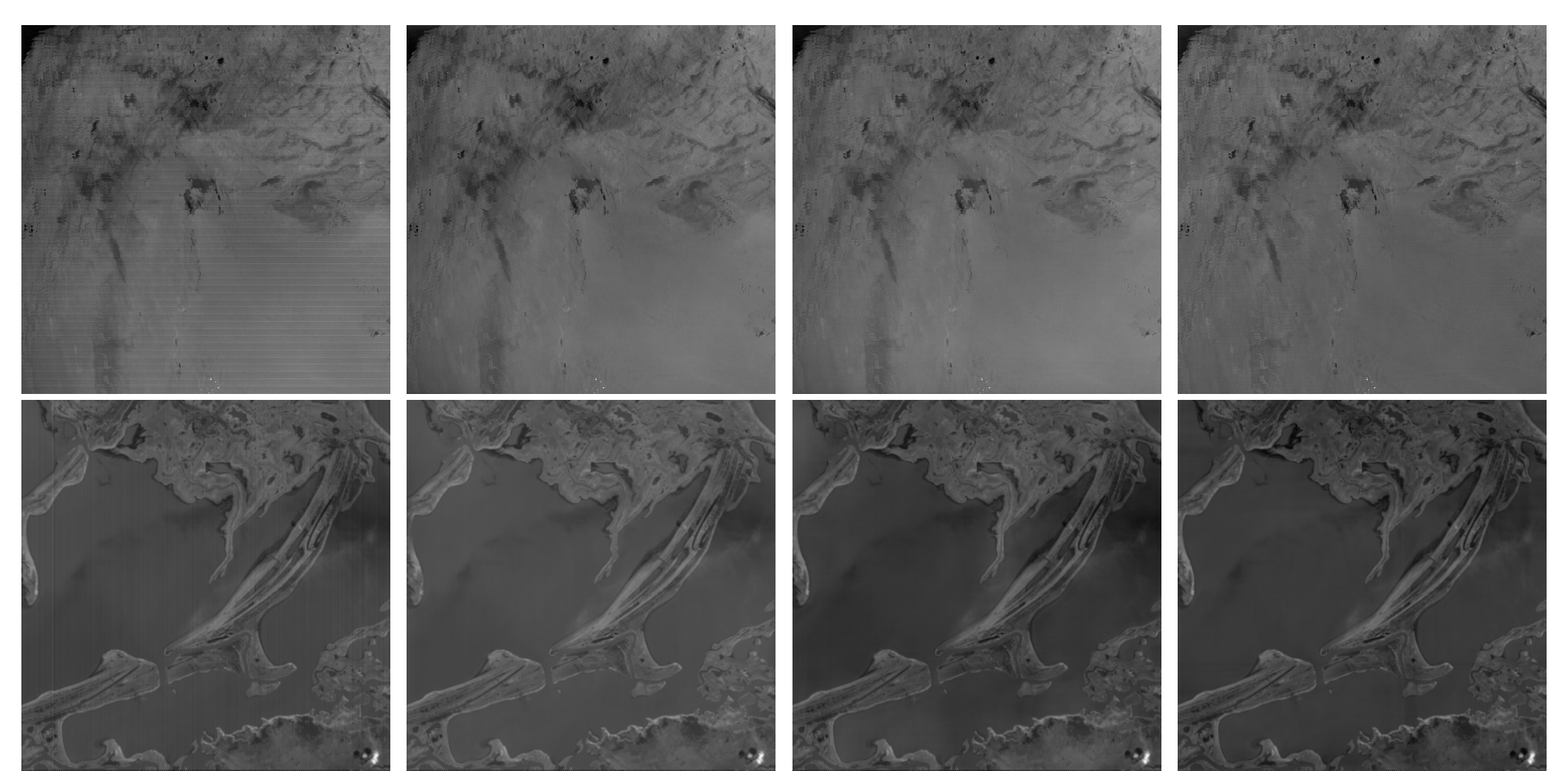}
\caption{Destriping of remote sensing image. We set $\lambda=0.1/\sqrt{\max(H,W)}$ and $\MO=\bm{1}$ in the proposed model NC-WLRD. From left to right: degraded image, UTV \cite{Bouali2011}, TSWEU \cite{Chang2020}, NC-WLRD (proposed).}\label{fig-destripe}
\end{figure*}

\begin{figure*}
\centering
\newcommand{\fs}{0.19}
\newcommand{\figmargin}{-3mm}
\includegraphics[width=0.98\linewidth]{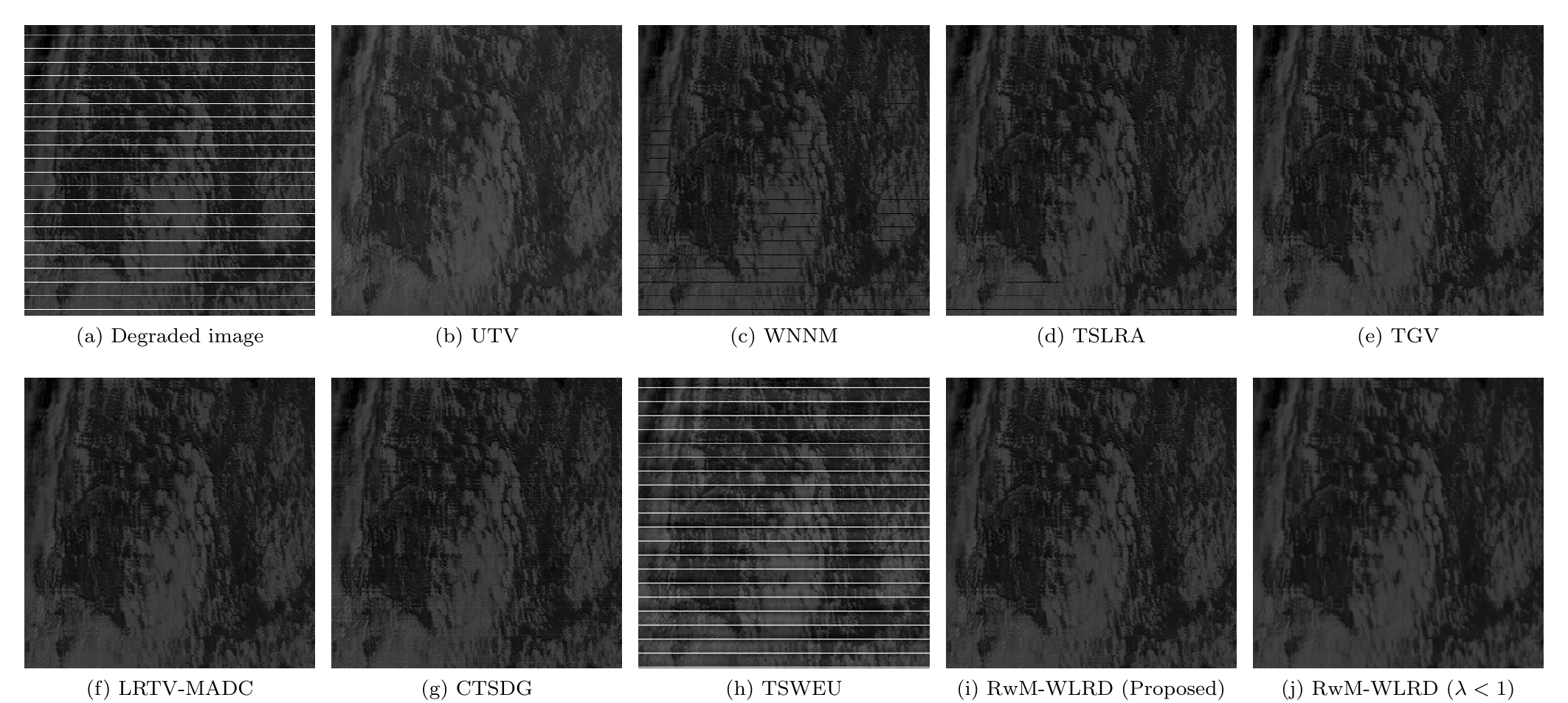}
\caption{Results of remote sensing image inpainting (real case of dead lines). The trained CNN model TSWEU can't repair the image well. For other inpainting methods, the inpainting mask is provided by the detection result of the proposed NC-WLRD. We set $\lambda=1/\sqrt{\max(H,W)}$ in the proposed simultaneously denoising version of RwM-WLRD ($\lambda<1$).}\label{fig-rsi}
\end{figure*}

\subsection{Limitations and further improvements}
The proposed region-wise matching (RwM) approach solves the problem of line inpainting, while previous low-rank-based SOTA methods remain artifacts when inpainting entire missing columns. RwM is also effective for repairing scratches and removing overlaid text.
RwM works well for these structurally-missing regions (line, scratch, and text), however, for random missing mask, we find that traditional block matching method (non-partition, see \TheFig{fig-rwm-ways}) generates better results in some examples. Therefore, the proposed RwM is less competitive for inpainting random missing pixels. One possible technique to address this problem is to combine RwM with traditional block matching method, or to use a smarter partition of neighbor area.

On the other hand, the proposed RwM-WLRD has difficulty repairing large holes. The iterative diffusion strategy used in TSLRA \cite{guo2017patch} makes it possible to fill large blocks ($15\times15$) in texture regions. Although our method performs well with missing holes of larger size (say $20\times20$), it relies heavily on finding similar patterns around the hole.
To solve this limitation, we intend to apply the proposed region-wise matching method to search for similar patches in a large dataset.
\section{Conclusions}
\label{section-conclusions}
This paper proposes a non-convex weighted low-rank decomposition model NC-WLRD for matrix completion and optimizes the model by using the alternating direction method of multipliers (ADMM). 
Compared to several state-of-the-art matrix completion methods, NC-WLRD has a competitive convergence speed and achieves the best image completion results.
We also propose a region-wise matching strategy to tackle the challenge that most of traditional low-rank based inpainting methods fail to fill entirely-missing rows (columns). The core idea of region-wise matching is to divide the neighbor area of a target patch into multiple subregions and search a given number of similar patches in each subregion separately. By using the region-wise matching strategy, the proposed RwM-WLRD restores entirely-missing rows (columns) robustly. Extensive
experiments on line inpainting demonstrate the effectiveness and superiority of RwM-WLRD over other competitive inpainting algorithms. By changing only the weight parameter $\lambda$, RwM-WLRD successfully removes  random-valued impulse noise simultaneously and is effective for blind inpainting. Experiments on the restoration of remote sensing images demonstrate that the proposed model has competitive performance for destriping (stripe noise removal) and repairing dead lines.
\section*{Acknowledgments}
The research has been supported in part by the National Natural Science Foundation of China (12071263, 11971269, 61671276) and Natural Science Foundation of Shandong Province (ZR2019MF045). The source code can be found at \href {https://doi.org/10.17632/xvhvwhtbj3.1}
  {\path{doi: 10.17632/xvhvwhtbj3.1}}
\bibliographystyle{ieeetr}
\bibliography{rpcabibs}

\begin{thebibliography}{10}

\bibitem{bertalmio2000image}
M.~Bertalmio, G.~Sapiro, V.~Caselles, and C.~Ballester, ``Image inpainting,''
  in {\em Proceedings of the 27th Annual Conference on Computer Graphics and
  Interactive Techniques}, pp.~417--424, ACM Press/Addison-Wesley Publishing
  Co., 2000.

\bibitem{guillemot2014image}
C.~Guillemot and O.~Le~Meur, ``Image inpainting: Overview and recent
  advances,'' {\em IEEE Signal Processing Magazine}, vol.~31, no.~1,
  pp.~127--144, 2014.

\bibitem{YU2021590}
H.~Yu, Y.~Liu, S.~He, P.~Jiang, J.~Xin, and J.~Wen, ``A practical generative
  adversarial network architecture for restoring damaged character
  photographs,'' {\em Neurocomputing}, vol.~423, pp.~590--600, 2021.

\bibitem{ji2011robust}
H.~Ji, S.~Huang, Z.~Shen, and Y.~Xu, ``Robust video restoration by joint sparse
  and low rank matrix approximation,'' {\em SIAM Journal on Imaging Sciences},
  vol.~4, no.~4, pp.~1122--1142, 2011.

\bibitem{li2013detection}
H.~Li, Z.~Lu, Z.~Wang, Q.~Ling, and W.~Li, ``Detection of blotch and scratch in
  video based on video decomposition,'' {\em IEEE Transactions on Circuits and
  Systems for Video Technology}, vol.~23, no.~11, pp.~1887--1900, 2013.

\bibitem{Herling2014}
J.~Herling and W.~Broll, ``High-quality real-time video inpaintingwith
  pixmix,'' {\em IEEE Transactions on Visualization and Computer Graphics},
  vol.~20, no.~6, pp.~866--879, 2014.

\bibitem{shen2002mathematical}
J.~Shen and T.~F. Chan, ``Mathematical models for local nontexture
  inpaintings,'' {\em SIAM Journal on Applied Mathematics}, vol.~62, no.~3,
  pp.~1019--1043, 2002.

\bibitem{wali2019new}
S.~Wali, H.~Zhang, H.~Chang, and C.~Wu, ``A new adaptive boosting total
  generalized variation (tgv) technique for image denoising and inpainting,''
  {\em Journal of Visual Communication and Image Representation}, vol.~59,
  pp.~39--51, 2019.

\bibitem{criminisi2004region}
A.~Criminisi, P.~P{\'e}rez, and K.~Toyama, ``Region filling and object removal
  by exemplar-based image inpainting,'' {\em IEEE Transactions on Image
  Processing}, vol.~13, no.~9, pp.~1200--1212, 2004.

\bibitem{Newson2017}
A.~Newson, A.~Almansa, Y.~Gousseau, and P.~Pérez, ``{Non-Local Patch-Based
  Image Inpainting},'' {\em {Image Processing On Line}}, vol.~7, pp.~373--385,
  2017.
\newblock \url{https://doi.org/10.5201/ipol.2017.189}.

\bibitem{Xu2022}
T.~Xu, T.~Huang, L.~Deng, X.~Zhao, and J.~Hu, ``Exemplar-based image inpainting
  using adaptive two-stage structure-tensor based priority function and
  nonlocal filtering,'' {\em Journal of Visual Communication and Image
  Representation}, vol.~83, p.~103430, 2022.

\bibitem{mairal2007sparse}
J.~Mairal, M.~Elad, and G.~Sapiro, ``Sparse representation for color image
  restoration,'' {\em IEEE Transactions on Image Processing}, vol.~17, no.~1,
  pp.~53--69, 2007.

\bibitem{Zhou2012BPFA}
M.~Zhou, H.~Chen, J.~Paisley, L.~Ren, L.~Li, Z.~Xing, D.~Dunson, G.~Sapiro, and
  L.~Carin, ``Nonparametric bayesian dictionary learning for analysis of noisy
  and incomplete images,'' {\em IEEE Transactions on Image Processing},
  vol.~21, no.~1, pp.~130--144, 2012.

\bibitem{Lu2016IRNN}
C.~{Lu}, J.~{Tang}, S.~{Yan}, and Z.~{Lin}, ``Nonconvex nonsmooth low rank
  minimization via iteratively reweighted nuclear norm,'' {\em IEEE
  Transactions on Image Processing}, vol.~25, no.~2, pp.~829--839, 2016.

\bibitem{guo2017patch}
Q.~Guo, S.~Gao, X.~Zhang, Y.~Yin, and C.~Zhang, ``Patch-based image inpainting
  via two-stage low rank approximation,'' {\em IEEE Transactions on
  Visualization and Computer Graphics}, vol.~24, no.~6, pp.~2023--2036, 2017.

\bibitem{gu2017weighted}
S.~Gu, Q.~Xie, D.~Meng, W.~Zuo, X.~Feng, and L.~Zhang, ``Weighted nuclear norm
  minimization and its applications to low level vision,'' {\em International
  Journal of Computer Vision}, vol.~121, no.~2, pp.~183--208, 2017.

\bibitem{Zheng2020MADC}
J.~Zheng, P.~Yang, X.~Yang, and S.~Chen, ``Truncated low-rank and total p
  variation constrained color image completion and its moreau approximation
  algorithm,'' {\em IEEE Transactions on Image Processing}, vol.~29,
  pp.~7861--7874, 2020.

\bibitem{yu2018generative}
J.~Yu, Z.~Lin, J.~Yang, X.~Shen, X.~Lu, and T.~S. Huang, ``Generative image
  inpainting with contextual attention,'' in {\em Proceedings of the IEEE
  Conference on Computer Vision and Pattern Recognition}, pp.~5505--5514, 2018.

\bibitem{ulyanov2018deep}
D.~Ulyanov, A.~Vedaldi, and V.~Lempitsky, ``Deep image prior,'' in {\em
  Proceedings of the IEEE Conference on Computer Vision and Pattern
  Recognition}, pp.~9446--9454, 2018.

\bibitem{nazeri2019edgeconnect}
K.~Nazeri, E.~Ng, T.~Joseph, F.~Qureshi, and M.~Ebrahimi, ``Edgeconnect:
  Generative image inpainting with adversarial edge learning,'' {\em arXiv
  preprint arXiv:1901.00212}, 2019.

\bibitem{Shang2018}
F.~Shang, J.~Cheng, Y.~Liu, Z.-Q. Luo, and Z.~Lin, ``Bilinear factor matrix
  norm minimization for robust pca: Algorithms and applications,'' {\em IEEE
  Transactions on Pattern Analysis and Machine Intelligence}, vol.~40, no.~9,
  pp.~2066--2080, 2018.

\bibitem{Xu2017}
F.~Xu, Y.~Chen, C.~Peng, Y.~Wang, X.~Liu, and G.~He, ``Denoising of
  hyperspectral image using low-rank matrix factorization,'' {\em IEEE
  Geoscience and Remote Sensing Letters}, vol.~14, no.~7, pp.~1141--1145, 2017.

\bibitem{cai2010singular}
J.-F. Cai, E.~J. Cand{\`e}s, and Z.~Shen, ``A singular value thresholding
  algorithm for matrix completion,'' {\em SIAM Journal on Optimization},
  vol.~20, no.~4, pp.~1956--1982, 2010.

\bibitem{Mohan2012}
K.~Mohan and M.~Fazel, ``Iterative reweighted algorithms for matrix rank
  minimization,'' {\em The Journal of Machine Learning Research}, vol.~13,
  no.~1, pp.~3441--3473, 2012.

\bibitem{candes2009exact}
E.~J. Cand{\`e}s and B.~Recht, ``Exact matrix completion via convex
  optimization,'' {\em Foundations of Computational Mathematics}, vol.~9,
  no.~6, p.~717, 2009.

\bibitem{lu2014generalized}
C.~Lu, J.~Tang, S.~Yan, and Z.~Lin, ``Generalized nonconvex nonsmooth low-rank
  minimization,'' in {\em Proceedings of the IEEE Conference on Computer Vision
  and Pattern Recognition}, pp.~4130--4137, 2014.

\bibitem{zhang2012matrix}
D.~Zhang, Y.~Hu, J.~Ye, X.~Li, and X.~He, ``Matrix completion by truncated
  nuclear norm regularization,'' in {\em 2012 IEEE Conference on Computer
  Vision and Pattern Recognition}, pp.~2192--2199, IEEE, 2012.

\bibitem{Xie2016}
Y.~Xie, S.~Gu, Y.~Liu, W.~Zuo, W.~Zhang, and L.~Zhang, ``Weighted schatten $p$
  -norm minimization for image denoising and background subtraction,'' {\em
  IEEE Transactions on Image Processing}, vol.~25, no.~10, pp.~4842--4857,
  2016.

\bibitem{Nie2019NCMC}
F.~{Nie}, Z.~{Hu}, and X.~{Li}, ``Matrix completion based on non-convex
  low-rank approximation,'' {\em IEEE Transactions on Image Processing},
  vol.~28, no.~5, pp.~2378--2388, 2019.

\bibitem{Chen2020tip}
Y.~{Chen}, X.~{Xiao}, and Y.~{Zhou}, ``Low-rank quaternion approximation for
  color image processing,'' {\em IEEE Transactions on Image Processing},
  vol.~29, pp.~1426--1439, 2020.

\bibitem{tschumperle2006fast}
D.~Tschumperl{\'e}, ``Fast anisotropic smoothing of multi-valued images using
  curvature-preserving pde's,'' {\em International Journal of Computer Vision},
  vol.~68, no.~1, pp.~65--82, 2006.

\bibitem{Barnes2009PatchMatch}
C.~Barnes, E.~Shechtman, A.~Finkelstein, and D.~B. Goldman, ``Patchmatch: A
  randomized correspondence algorithm for structural image editing,'' {\em ACM
  Transactions on Graphics}, vol.~28, July 2009.

\bibitem{shen2009image}
B.~Shen, W.~Hu, Y.~Zhang, and Y.~J. Zhang, ``Image inpainting via sparse
  representation,'' in {\em 2009 IEEE International Conference on Acoustics,
  Speech and Signal Processing}, pp.~697--700, IEEE, 2009.

\bibitem{zhang2014group}
J.~Zhang, D.~Zhao, and W.~Gao, ``Group-based sparse representation for image
  restoration,'' {\em IEEE Transactions on Image Processing}, vol.~23, no.~8,
  pp.~3336--3351, 2014.

\bibitem{dong2012nonlocal}
W.~Dong, G.~Shi, and X.~Li, ``Nonlocal image restoration with bilateral
  variance estimation: a low-rank approach,'' {\em IEEE Transactions on Image
  Processing}, vol.~22, no.~2, pp.~700--711, 2012.

\bibitem{Hu2013tnnr}
Y.~{Hu}, D.~{Zhang}, J.~{Ye}, X.~{Li}, and X.~{He}, ``Fast and accurate matrix
  completion via truncated nuclear norm regularization,'' {\em IEEE
  Transactions on Pattern Analysis and Machine Intelligence}, vol.~35, no.~9,
  pp.~2117--2130, 2013.

\bibitem{JI2017410}
T.~Ji, T.~Huang, X.~Zhao, T.~Ma, and L.~Deng, ``A non-convex tensor rank
  approximation for tensor completion,'' {\em Applied Mathematical Modelling},
  vol.~48, pp.~410--422, 2017.

\bibitem{zhang2019low}
H.~Zhang, J.~Qian, B.~Zhang, J.~Yang, C.~Gong, and Y.~Wei, ``Low-rank matrix
  recovery via modified schatten-$ p $ norm minimization with convergence
  guarantees,'' {\em IEEE Transactions on Image Processing}, vol.~29,
  pp.~3132--3142, 2019.

\bibitem{chen2020robust}
L.~Chen, X.~Jiang, X.~Liu, and Z.~Zhou, ``Robust low-rank tensor recovery via
  nonconvex singular value minimization,'' {\em IEEE Transactions on Image
  Processing}, vol.~29, pp.~9044--9059, 2020.

\bibitem{wright2009robust}
J.~Wright, A.~Ganesh, S.~Rao, Y.~Peng, and Y.~Ma, ``Robust principal component
  analysis: Exact recovery of corrupted low-rank matrices via convex
  optimization,'' in {\em Advances in Neural Information Processing Systems},
  pp.~2080--2088, 2009.

\bibitem{hsu2011robust}
D.~Hsu, S.~M. Kakade, and T.~Zhang, ``Robust matrix decomposition with sparse
  corruptions,'' {\em IEEE Transactions on Information Theory}, vol.~57,
  no.~11, pp.~7221--7234, 2011.

\bibitem{candes2011robust}
E.~J. Cand{\`e}s, X.~Li, Y.~Ma, and J.~Wright, ``Robust principal component
  analysis?,'' {\em Journal of the ACM}, vol.~58, no.~3, pp.~1--37, 2011.

\bibitem{lin2010augmented}
Z.~Lin, M.~Chen, and Y.~Ma, ``The augmented lagrange multiplier method for
  exact recovery of corrupted low-rank matrices,'' {\em arXiv preprint
  arXiv:1009.5055}, 2010.

\bibitem{Wen2020RPCA}
F.~{Wen}, R.~{Ying}, P.~{Liu}, and R.~C. {Qiu}, ``Robust pca using generalized
  nonconvex regularization,'' {\em IEEE Transactions on Circuits and Systems
  for Video Technology}, vol.~30, no.~6, pp.~1497--1510, 2020.

\bibitem{Boyd2011}
S.~Boyd, N.~Parikh, E.~Chu, B.~Peleato, and J.~Eckstein, ``Distributed
  optimization and statistical learning via the alternating direction method of
  multipliers,'' {\em Foundations and Trends® in Machine Learning}, vol.~3,
  p.~1–122, jan 2011.

\bibitem{mirsky1975trace}
L.~Mirsky, ``A trace inequality of john von neumann,'' {\em Monatshefte F{\"u}r
  Mathematik}, vol.~79, no.~4, pp.~303--306, 1975.

\bibitem{xu2015patch}
J.~Xu, L.~Zhang, W.~Zuo, D.~Zhang, and X.~Feng, ``Patch group based nonlocal
  self-similarity prior learning for image denoising,'' in {\em Proceedings of
  the IEEE International Conference on Computer Vision}, pp.~244--252, 2015.

\bibitem{li2019low}
X.~Li, X.~Zhao, T.~Jiang, Y.~Zheng, T.~Ji, and T.~Huang, ``Low-rank tensor
  completion via combined non-local self-similarity and low-rank
  regularization,'' {\em Neurocomputing}, vol.~367, pp.~1--12, 2019.

\bibitem{xue2019double}
S.~Xue, W.~Qiu, F.~Liu, and X.~Jin, ``Double weighted truncated nuclear norm
  regularization for low-rank matrix completion,'' {\em arXiv preprint
  arXiv:1901.01711}, 2019.

\bibitem{roth2009fields}
S.~Roth and M.~J. Black, ``Fields of experts,'' {\em International Journal of
  Computer Vision}, vol.~82, no.~2, pp.~205--209, 2009.

\bibitem{zhang2017beyond}
K.~Zhang, W.~Zuo, Y.~Chen, D.~Meng, and L.~Zhang, ``Beyond a {Gaussian}
  denoiser: Residual learning of deep {CNN} for image denoising,'' {\em IEEE
  Transactions on Image Processing}, vol.~26, no.~7, pp.~3142--3155, 2017.

\bibitem{zhang2018ffdnet}
K.~Zhang, W.~Zuo, and L.~Zhang, ``Ffdnet: Toward a fast and flexible solution
  for {CNN} based image denoising,'' {\em IEEE Transactions on Image
  Processing}, 2018.

\bibitem{BM3D}
K.~Dabov, A.~Foi, V.~Katkovnik, and K.~Egiazarian, ``Image denoising by sparse
  3-d transform-domain collaborative filtering,'' {\em IEEE Transactions on
  Image Processing}, vol.~16, no.~8, pp.~2080--2095, 2007.

\bibitem{yu2019free}
J.~Yu, Z.~Lin, J.~Yang, X.~Shen, X.~Lu, and T.~S. Huang, ``Free-form image
  inpainting with gated convolution,'' in {\em Proceedings of the IEEE/CVF
  International Conference on Computer Vision (ICCV)}, October 2019.

\bibitem{Guo_2021_ICCV}
X.~Guo, H.~Yang, and D.~Huang, ``Image inpainting via conditional texture and
  structure dual generation,'' in {\em Proceedings of the IEEE/CVF
  International Conference on Computer Vision (ICCV)}, pp.~14134--14143,
  October 2021.

\bibitem{doi:10.1137/110843642}
Y.~Wang, A.~Szlam, and G.~Lerman, ``Robust locally linear analysis with
  applications to image denoising and blind inpainting,'' {\em SIAM Journal on
  Imaging Sciences}, vol.~6, no.~1, pp.~526--562, 2013.

\bibitem{Bouali2011}
M.~Bouali and S.~Ladjal, ``Toward optimal destriping of modis data using a
  unidirectional variational model,'' {\em IEEE Transactions on Geoscience and
  Remote Sensing}, vol.~49, no.~8, pp.~2924--2935, 2011.

\bibitem{Chang2020}
Y.~Chang, M.~Chen, L.~Yan, X.~Zhao, Y.~Li, and S.~Zhong, ``Toward universal
  stripe removal via wavelet-based deep convolutional neural network,'' {\em
  IEEE Transactions on Geoscience and Remote Sensing}, vol.~58, pp.~2880--2897,
  2020.

\end{thebibliography}
\clearpage
\onecolumn 
\appendix
\section{Convergence analysis}
\label{appendix}
\numberwithin{equation}{section}
\counterwithin{equation}{section}
\setcounter{equation}{0}
In this section, we give the convergence analysis of Algorithm \ref{alg-admm-rpca}. We first prove that the generated matrix sequences $\{\ML^{(k)}\},\{\MS^{(k)}\},\{A^{(k)}\}$ are bounded.
\newtheorem{lemma}{Lemma}[section]
\begin{lemma}\label{lemma1}
$\{\MATRIX{A}^{(k+1)} \}$ is bounded.
\end{lemma}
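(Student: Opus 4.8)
The plan is to show that, up to left/right rotation by the singular vectors of $\MATRIX{D}^{(k)}$, the multiplier $\MATRIX{A}^{(k+1)}$ is a diagonal matrix whose entries are exactly the values of $\varphi'$ at the updated singular values, and then to exploit the fact that $\varphi'$ never exceeds $1$. First I would remove $\MATRIX{D}^{(k)}$ from the dual update. Combining the definition $\MATRIX{D}^{(k)}=\frac{1}{\mu^{(k)}}\MATRIX{A}^{(k)}+\MY-\MS^{(k+1)}$ with the update $\MATRIX{A}^{(k+1)}=\MATRIX{A}^{(k)}+\mu^{(k)}(\MY-\MS^{(k+1)}-\ML^{(k+1)})$ gives the compact identity
\begin{equation}
  \MATRIX{A}^{(k+1)}=\mu^{(k)}\left(\MATRIX{D}^{(k)}-\ML^{(k+1)}\right).
\end{equation}

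Next I would invoke the closed-form structure of the $\ML$-update. By the derivation of \eqref{Xk1}, $\ML^{(k+1)}$ is built from the singular vectors of $\MATRIX{D}^{(k)}$: writing $\MATRIX{D}^{(k)}=\MATRIX{U}\MATRIX{S}\MATRIX{V}^{T}$ with $\MATRIX{S}=diag(s_1,\dots,s_n)$ we have $\ML^{(k+1)}=\MATRIX{U}\MATRIX{\Sigma}\MATRIX{V}^{T}$ with $\MATRIX{\Sigma}=diag(\sigma_1,\dots,\sigma_n)$ and the same $\MATRIX{U},\MATRIX{V}$. Substituting into the identity above yields
\begin{equation}
  \MATRIX{A}^{(k+1)}=\MATRIX{U}\,diag\!\big(\mu^{(k)}(s_1-\sigma_1),\dots,\mu^{(k)}(s_n-\sigma_n)\big)\MATRIX{V}^{T}.
\end{equation}
The optimality condition \eqref{eq-solve-sigma} reads $\mu^{(k)}(s_i-\sigma_i)=\varphi'(\sigma_i)$, and inspection of \eqref{eq-adaptive-scad} shows $\varphi'(\sigma)\in[0,1]$ for every $\sigma\ge 0$ (it equals $1$ on $[0,1]$, decreases linearly to $0$ on $[1,\gamma]$, and vanishes beyond $\gamma$). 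Thus the diagonal entries are nonnegative and the singular values of $\MATRIX{A}^{(k+1)}$ are precisely $\varphi'(\sigma_i)\in[0,1]$, whence $\|\MATRIX{A}^{(k+1)}\|_2\le 1$ uniformly in $k$, which is the desired bound.

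The step I expect to demand the most care is the boundary behaviour, since \eqref{eq-solve-sigma} is only the interior stationarity equation and does not literally hold when $\sigma_i=0$ or at the kinks $\sigma=1,\gamma$. To make the argument airtight I would verify $\mu^{(k)}(s_i-\sigma_i)\in[0,1]$ directly from the three branches of \eqref{eq-shrinkage-sv-solution}: in the thresholding branch one gets $\mu^{(k)}(s_i-\sigma_i)=\min(\mu^{(k)}s_i,1)\le 1$; in the middle branch a short computation gives $\mu^{(k)}(s_i-\sigma_i)=\mu^{(k)}(\gamma-s_i)/\big((\gamma-1)\mu^{(k)}-1\big)$, which lies in $[0,1]$ precisely because $s_i>1+\tfrac{1}{\mu^{(k)}}$ on this branch (and because $(\gamma-1)\mu^{(k)}-1>0$, already noted after \eqref{eq-shrinkage-sv-solution}); and in the last branch $\sigma_i=s_i$ gives $0$. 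These bounds are independent of $k$ and of the matrix being decomposed, so $\{\MATRIX{A}^{(k+1)}\}$ is bounded, proving Lemma~\ref{lemma1}. A coordinate-free rephrasing is that the $\ML$-optimality gives $\MATRIX{A}^{(k+1)}\in\partial\|\ML^{(k+1)}\|_{\varphi,*}$ and that this subdifferential sits in the unit spectral-norm ball because $\varphi$ is $1$-Lipschitz; the explicit branch check is simply the concrete realization of this fact.
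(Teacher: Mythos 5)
Your proof is correct and follows essentially the same route as the paper's: both reduce the dual update to the identity $\MATRIX{A}^{(k+1)}=\mu^{(k)}(\MATRIX{D}^{(k)}-\ML^{(k+1)})$, diagonalize it using the shared singular vectors of $\MATRIX{D}^{(k)}$ and $\ML^{(k+1)}$, and verify branch by branch that $\mu^{(k)}(s_i-\sigma_i)\in[0,1]$. The only difference is cosmetic: you conclude with the spectral-norm bound $\|\MATRIX{A}^{(k+1)}\|_2\le 1$, whereas the paper states the equivalent Frobenius bound $\|\MATRIX{A}^{(k+1)}\|_F\le\sqrt{n}$.
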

\begin{proof}
  Let $\MATRIX{U}\MATRIX{S}\MATRIX{V}^{T}$ be the SVD of $\MATRIX{D}^{(k)}=\frac{1}{\mu^{(k)}}\MATRIX{A}^{(k)}+\MY-\MS^{(k+1)}$, where $\MATRIX{S}=diag(s_1,\cdots,s_n)$ is the diagonal singular value matrix.
We have proved that in the $(k+1)$-th iteration of Algorithm 1, the updating of $\ML^{(k+1)}$ follows
\begin{equation}\label{xk1}
  \ML^{(k+1)} = \MATRIX{U}\MATRIX{\Sigma}\MATRIX{V}^{T},
\end{equation}
where $\MATRIX{\Sigma}=diag(\sigma_1,\cdots,\sigma_n)$, and the calculation of $\sigma_i$ is given by
\begin{equation}\label{eq-shrinkage-sv-solution-a}
  \sigma_{i} = \begin{cases}
                 \max(\frac{\mu^{(k)}\cdot s_i-1}{\mu^{(k)}},0), & \mbox{if } \ s_i\leq \frac{\mu^{(k)}+1}{\mu^{(k)}} \\
                 \frac{\mu^{(k)}\cdot s_i-\frac{\gamma}{\gamma-1}}
                 {\mu^{(k)}-\frac{1}{\gamma-1}}, & \mbox{if } \frac{\mu^{(k)}+1}{\mu^{(k)}}<s_i<\gamma \\
                 s_i, & \mbox{otherwise}.
               \end{cases}
\end{equation}
Now we show that the generated sequence of the Lagrange multiplier matrix $\{\MATRIX{A}^{(k)}\}$ is bounded. Since $\MATRIX{A}^{(k+1)}=\MATRIX{A}^{(k)}+\mu^{(k)}(\MY-\MS^{(k+1)}-\ML^{(k+1)})$, we have
\begin{equation}\label{eq-Akplus1}
\begin{aligned}
    \| \MATRIX{A}^{(k+1)} \|_F  &= \|  \MATRIX{A}^{(k)}+\mu^{(k)}(\MY-\MS^{(k+1)}-\ML^{(k+1)})  \|_F \\
                                &= \mu^{(k)} \| \frac{1}{\mu^{(k)}}\MATRIX{A}^{(k)}+\MY-\MS^{(k+1)} - \ML^{(k+1)} \|_F \\
                                &= \mu^{(k)} \| \MATRIX{U}\MATRIX{S}\MATRIX{V}^{T} - \MATRIX{U}\MATRIX{\Sigma}\MATRIX{V}^{T} \|_F \\
                                &= \mu^{(k)} \| \MATRIX{S} - \MATRIX{\Sigma} \|_F \\
                                &= \mu^{(k)} \sqrt{\sum_{s_i\leq 1+\frac{1}{\mu^{(k)}}}\left(\max(\frac{\mu^{(k)}\cdot s_i-1}{\mu^{(k)}},0)-s_i\right)^2 + \sum_{\frac{\mu^{(k)}+1}{\mu^{(k)}}<s_i<\gamma}\left(\frac{\mu^{(k)}\cdot s_i-\frac{\gamma}{\gamma-1}}{\mu^{(k)}-\frac{1}{\gamma-1}}-s_i\right)^2}\\
                                &\leq \mu^{(k)}\sqrt{\sum_{s_i\leq 1+\frac{1}{\mu^{(k)}}}(\frac{1}{\mu^{(k)}})^2 + \sum_{\frac{\mu^{(k)}+1}{\mu^{(k)}}<s_i<\gamma}\left(\frac{\frac{s_i-\gamma}{\gamma-1}}{\mu^{(k)}-\frac{1}{\gamma-1}}\right)^2}\\
                                &\leq  \mu^{(k)}\sqrt{\sum_{s_i\leq 1+\frac{1}{\mu^{(k)}}}(\frac{1}{\mu^{(k)}})^2 + \sum_{\frac{\mu^{(k)}+1}{\mu^{(k)}}<s_i<\gamma}\frac{(\gamma-1-\frac{1}{\mu^{(k)}})^2}{(\mu^{(k)}(\gamma-1)-1)^2}}\\
                                &= \mu^{(k)}\sqrt{\sum_{s_i\leq 1+\frac{1}{\mu^{(k)}}}(\frac{1}{\mu^{(k)}})^2 + \sum_{\frac{\mu^{(k)}+1}{\mu^{(k)}}<s_i<\gamma}(\frac{1}{\mu^{(k)}})^2}=\sqrt{n}.\\
\end{aligned}
\end{equation}
Thus $\{\MATRIX{A}^{(k+1)} \}$ is bounded.
\end{proof}

\begin{lemma}
$\{\ML^{(k+1)} \},\{\MS^{(k+1)} \}$ are bounded.
\end{lemma}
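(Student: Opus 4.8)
The plan is to combine the boundedness of $\{\MATRIX{A}^{(k)}\}$ just established in Lemma \ref{lemma1} (namely $\|\MATRIX{A}^{(k)}\|_F\le\sqrt{n}$) with two elementary facts about the penalty sequence: $\mu^{(k)}=\rho^k\mu^{(0)}$ grows geometrically, so $\sum_k 1/\mu^{(k)}<\infty$. First I would bound the augmented Lagrangian values $\Gamma^{(k)}:=\Gamma(\ML^{(k)},\MS^{(k)},\MATRIX{A}^{(k)},\mu^{(k)})$ from above. Since $\MS^{(k+1)}$ and then $\ML^{(k+1)}$ are successive exact minimizers of $\Gamma(\cdot,\cdot,\MATRIX{A}^{(k)},\mu^{(k)})$, we have $\Gamma(\ML^{(k+1)},\MS^{(k+1)},\MATRIX{A}^{(k)},\mu^{(k)})\le\Gamma^{(k)}$; substituting the residual identity $\MY-\ML^{(k+1)}-\MS^{(k+1)}=(\MATRIX{A}^{(k+1)}-\MATRIX{A}^{(k)})/\mu^{(k)}$ into the multiplier and penalty terms then gives
\begin{equation*}
\Gamma^{(k+1)}\le\Gamma^{(k)}+\frac{\mu^{(k+1)}+\mu^{(k)}}{2(\mu^{(k)})^2}\|\MATRIX{A}^{(k+1)}-\MATRIX{A}^{(k)}\|_F^2 .
\end{equation*}
As $\|\MATRIX{A}^{(k+1)}-\MATRIX{A}^{(k)}\|_F\le 2\sqrt{n}$, the correction is $O(1/\mu^{(k)})$ and hence summable, so $\{\Gamma^{(k)}\}$ is bounded above by some constant $M$.

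Next I would complete the square in the coupling and penalty terms, writing $\Gamma^{(k)}$ as the sum of the three nonnegative quantities $\knn$, $\lambda\|\MO\odot\MS^{(k)}\|_1$, and $\frac{\mu^{(k)}}{2}\|\MY-\ML^{(k)}-\MS^{(k)}+\MATRIX{A}^{(k)}/\mu^{(k)}\|_F^2$, minus the bounded term $\|\MATRIX{A}^{(k)}\|_F^2/(2\mu^{(k)})$. Each nonnegative piece is therefore bounded; in particular $\lambda\|\MO\odot\MS^{(k)}\|_1\le M+n/(2\mu^{(0)})$, which, by equivalence of norms in finite dimension, bounds $\MS^{(k)}$ on the observed set $\MO$.

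The hard part is that the adaptive regularizer $\varphi$ saturates, $\varphi(\sigma)=(\gamma+1)/2$ for $\sigma>\gamma$, so the bound on $\knn$ carries no information about the size of $\ML^{(k)}$: the standard RPCA argument ``bounded nuclear norm $\Rightarrow$ bounded low-rank iterate'' is unavailable here. To control $\ML^{(k)}$ on the unobserved set I would instead use the explicit updates: on entries with $\MO_{ij}=0$ (where $\MY_{ij}=0$) the shrinkage step collapses to $\MS^{(k+1)}=\tfrac{1}{\mu^{(k)}}\MATRIX{A}^{(k)}-\ML^{(k)}$, and plugging this into $\MATRIX{A}^{(k+1)}=\MATRIX{A}^{(k)}+\mu^{(k)}(\MY-\MS^{(k+1)}-\ML^{(k+1)})$ yields the exact identity
\begin{equation*}
(1-\MO)\odot(\ML^{(k+1)}-\ML^{(k)})=-\tfrac{1}{\mu^{(k)}}(1-\MO)\odot\MATRIX{A}^{(k+1)} .
\end{equation*}
Since $\|\MATRIX{A}^{(k+1)}\|_F\le\sqrt{n}$ and $\sum_k 1/\mu^{(k)}<\infty$, the increments of $(1-\MO)\odot\ML^{(k)}$ are absolutely summable, so this sequence is Cauchy and thus bounded.

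Finally I would assemble the pieces. The residual identity gives the bounded relation $\ML^{(k+1)}+\MS^{(k+1)}=\MY+(\MATRIX{A}^{(k)}-\MATRIX{A}^{(k+1)})/\mu^{(k)}$; restricting it to the observed set and using the bound on $\MO\odot\MS^{(k)}$ bounds $\MO\odot\ML^{(k)}$, which together with the previous paragraph bounds $\ML^{(k)}$ on all entries. Then $\MS^{(k)}=\MY+(\MATRIX{A}^{(k-1)}-\MATRIX{A}^{(k)})/\mu^{(k-1)}-\ML^{(k)}$ is bounded as well, which is the claim. I expect the saturation obstacle to be the only genuinely non-routine point; the remainder is norm bookkeeping driven by $\|\MATRIX{A}^{(k)}\|_F\le\sqrt{n}$ and the geometric growth of $\mu^{(k)}$.
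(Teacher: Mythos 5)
Your proof is correct, and although its first half coincides with the paper's (telescoping the augmented Lagrangian via the optimality of the two subproblem solutions and the bound $\|\MATRIX{A}^{(k)}\|_F\le\sqrt{n}$ to conclude that $\Gamma(\ML^{(k+1)},\MS^{(k+1)},\MATRIX{A}^{(k)},\mu^{(k)})$ is bounded), the second half genuinely diverges---and for good reason. The paper stops at the identity $\|\ML^{(k+1)}\|_{\varphi,*}+\lambda\|\MO\odot\MS^{(k+1)}\|_1=\Gamma(\cdots)-\frac{1}{2\mu^{(k)}}\bigl(\|\MATRIX{A}^{(k+1)}\|_F^2-\|\MATRIX{A}^{(k)}\|_F^2\bigr)$ and declares the boundedness of the iterates ``easy to see.'' As you observe, that step does not actually close: $\varphi$ saturates at $(\gamma+1)/2$, so $\knn\le n(\gamma+1)/2$ for \emph{every} matrix and the bound on $\|\ML^{(k+1)}\|_{\varphi,*}$ carries no information about the size of $\ML^{(k+1)}$; likewise $\|\MO\odot\MS\|_1$ controls $\MS$ only on the observed support. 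Your repair---the exact identity $(1-\MO)\odot(\ML^{(k+1)}-\ML^{(k)})=-\frac{1}{\mu^{(k)}}(1-\MO)\odot\MATRIX{A}^{(k+1)}$, extracted from the unobserved-entry form of the shrinkage step, which makes the unobserved part of $\ML^{(k)}$ absolutely summable in increments and hence bounded, combined with the residual identity to transfer the $\ell_1$ bound on $\MO\odot\MS^{(k)}$ to $\MO\odot\ML^{(k)}$---is a correct and necessary supplement. The one bookkeeping point worth stating explicitly is that the unobserved entries of $\MY$ vanish (which holds by the paper's convention $\MY=\ML\odot\MO$), since your unobserved-increment identity uses $\MY_{ij}=0$ there. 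Net effect: your argument proves the lemma in full, whereas the paper's, as written, leaves the final implication unjustified.
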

\begin{proof}
We first show that the augmented lagrange function $\Gamma$ is bounded in the generated sequences, i.e. $\Gamma(\ML^{(k+1)},\MS^{(k+1)},\MATRIX{A}^{(k)},\mu^{(k)})<\infty$.
  In each step $k+1$, since $\ML^{(k+1)}$ and $\MS^{(k+1)}$ are the optimal solutions of the $\ML$ subproblem and $\MS$ subproblem respectively, we have
\begin{equation*}
\begin{aligned}
  &\Gamma(\ML^{(k+1)},\MS^{(k+1)},\MATRIX{A}^{(k)},\mu^{(k)})\\
  &=\|\ML^{(k+1)}\|_{\varphi,*}+\lambda\|\MO\odot\MS^{(k+1)}\|_{1} +\langle\MATRIX{A}^{(k)},\MY-\ML^{(k+1)}-\MS^{(k+1)}\rangle +\frac{\mu^{(k)}}{2}\|\MY-\ML^{(k+1)}-\MS^{(k+1)}\|_{F}^{2}\\
  &\leq \|\ML^{(k)}\|_{\varphi,*}+\lambda\|\MO\odot\MS^{(k)}\|_{1} +\langle\MATRIX{A}^{(k)},\MY-\ML^{(k)}-\MS^{(k)}\rangle +\frac{\mu^{(k)}}{2}\|\MY-\ML^{(k)}-\MS^{(k)}\|_{F}^{2}\\
\end{aligned}
\end{equation*}
Substitute $\MATRIX{A}^{(k)}=\MATRIX{A}^{(k-1)}+\mu^{(k-1)}(\MY-\ML^{(k)}-\MS^{(k)})$ into the above formula we have
\begin{equation*}
\begin{aligned}
  &  \|\ML^{(k)}\|_{\varphi,*}+\lambda\|\MO\odot\MS^{(k)}\|_{1} +\langle\MATRIX{A}^{(k-1)}+\mu^{(k-1)}(\MY-\ML^{(k)}-\MS^{(k)}),\MY-\ML^{(k)}-\MS^{(k)}\rangle +\frac{\mu^{(k)}}{2}\|\MY-\ML^{(k)}-\MS^{(k)}\|_{F}^{2}\\
  & = \|\ML^{(k)}\|_{\varphi,*}+\lambda\|\MO\odot\MS^{(k)}\|_{1} +\langle\MATRIX{A}^{(k-1)},\MY-\ML^{(k)}-\MS^{(k)}\rangle +\frac{\mu^{(k-1)}}{2}\|\MY-\ML^{(k)}-\MS^{(k)}\|_{F}^{2}\\
  & + \frac{\mu^{(k)}-\mu^{(k-1)}}{2}\|\MY-\ML^{(k)}-\MS^{(k)}\|_{F}^{2} + \langle\mu^{(k-1)}(\MY-\ML^{(k)}-\MS^{(k)}),\MY-\ML^{(k)}-\MS^{(k)}\rangle\\
  & = \Gamma(\ML^{(k)},\MS^{(k)},\MATRIX{A}^{(k-1)},\mu^{(k-1)}) + \frac{\mu^{(k)}+\mu^{(k-1)}}{2}\|\MY-\ML^{(k)}-\MS^{(k)}\|_{F}^{2} \\
\end{aligned}
\end{equation*}

Note that $\MY-\ML^{(k)}-\MS^{(k)} = \frac{1}{\mu^{(k-1)}}(\MATRIX{A}^{(k)}-\MATRIX{A}^{(k-1)})$, hence
\begin{equation*}
\begin{aligned}
  &\Gamma(\ML^{(k+1)},\MS^{(k+1)},\MATRIX{A}^{(k)},\mu^{(k)})\\
  & = \Gamma(\ML^{(k)},\MS^{(k)},\MATRIX{A}^{(k-1)},\mu^{(k-1)}) + \frac{\mu^{(k)}+\mu^{(k-1)}}{2(\mu^{(k-1)})^2}\|\MATRIX{A}^{(k)}-\MATRIX{A}^{(k-1)}\|_{F}^{2}\\
  & = \Gamma(\ML^{(1)},\MS^{(1)},\MATRIX{A}^{(0)},\mu^{(0)}) + \sum_{j=1}^{k}\frac{\mu^{(j)}+\mu^{(j-1)}}{2(\mu^{(j-1)})^2}\|\MATRIX{A}^{(j)}-\MATRIX{A}^{(j-1)}\|_{F}^{2}\\
\end{aligned}
\end{equation*}
Because $\| \MATRIX{A}^{(j)} \|_F\leq \sqrt{n}$ holds for any $j$, and $\|\MATRIX{A}^{(j)}-\MATRIX{A}^{(j-1)}\|_{F}^{2}\leq 2n$, it follows that
\begin{equation}
  \Gamma(\ML^{(k+1)},\MS^{(k+1)},\MATRIX{A}^{(k)},\mu^{(k)})\leq  \Gamma(\ML^{(1)},\MS^{(1)},\MATRIX{A}^{(0)},\mu^{(0)}) + 2n\sum_{j=1}^{\infty}\frac{\mu^{(j)}+\mu^{(j-1)}}{2(\mu^{(j-1)})^2}
\end{equation}
Since $\mu^{(k+1)}=\rho\mu^{(k)}$ and $\rho>1$,
\begin{equation}
  \sum_{j=1}^{\infty}\frac{\mu^{(j)}+\mu^{(j-1)}}{2(\mu^{(j-1)})^2}<\sum_{j=1}^{\infty}\frac{\rho^j \mu^{(0)}}{(\rho^{j-1}\mu^{(0)})^2}<\infty.
\end{equation}
Therefore $\Gamma(\ML^{(k+1)},\MS^{(k+1)},\MATRIX{A}^{(k)},\mu^{(k)})$ is bounded. To prove $\{\ML^{(k+1)} \},\{\MS^{(k+1)} \}$ are bounded, move the first two term of $\Gamma(\ML^{(k+1)},\MS^{(k+1)},\MATRIX{A}^{(k)},\mu^{(k)})$ to one side, we obtain
\begin{equation*}
\begin{aligned}
&\|\ML^{(k+1)}\|_{\varphi,*}+\lambda\|\MO\odot\MS^{(k+1)}\|_{1} \\
&=  \Gamma(\ML^{(k+1)},\MS^{(k+1)},\MATRIX{A}^{(k)},\mu^{(k)}) - \langle\MATRIX{A}^{(k)},\MY-\ML^{(k+1)}-\MS^{(k+1)}\rangle -\frac{\mu^{(k)}}{2}\|\MY-\ML^{(k+1)}-\MS^{(k+1)}\|_{F}^{2}\\
&= \Gamma(\ML^{(k+1)},\MS^{(k+1)},\MATRIX{A}^{(k)},\mu^{(k)}) - \frac{\mu^{(k)}}{2}(\|\MY-\ML^{(k+1)}-\MS^{(k+1)}+\frac{1}{\mu^{(k)}}\MATRIX{A}^{(k)}\|_{F}^{2}-\|\frac{1}{\mu^{(k)}}\MATRIX{A}^{(k)}  \|_F^2)\\
&= \Gamma(\ML^{(k+1)},\MS^{(k+1)},\MATRIX{A}^{(k)},\mu^{(k)}) - \frac{\mu^{(k)}}{2}(\|\frac{1}{\mu^{(k)}}\MATRIX{A}^{(k+1)}\|_{F}^{2}-\|\frac{1}{\mu^{(k)}}\MATRIX{A}^{(k)}  \|_F^2)\\
&= \Gamma(\ML^{(k+1)},\MS^{(k+1)},\MATRIX{A}^{(k)},\mu^{(k)}) - \frac{1}{2\mu^{(k)}}(\|\MATRIX{A}^{(k+1)}\|_{F}^{2}-\|\MATRIX{A}^{(k)}  \|_F^2)\\
\end{aligned}
\end{equation*}
Use lemma \ref{lemma1}, it is easy to see that $\ML^{(k+1)}$ and $\MS^{(k+1)}$ are bounded.
\end{proof}


\begin{theorem}
The sequences $\{\ML^{(k+1)}\}$ and $\{\MS^{(k+1)}\}$ generated by algorithm 1 satisfy the following property:
\begin{equation}\label{convg}
  \begin{aligned}
  &\mathop{\lim}_{k\to\infty}\|\ML^{(k+1)}- \ML^{(k)}\|_F =0\\
  &\mathop{\lim}_{k\to\infty}\|\MS^{(k+1)}- \MS^{(k)}\|_F =0\\
  \end{aligned}
\end{equation}
\label{thm-1}
\end{theorem}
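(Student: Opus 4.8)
The plan is to derive everything from the two lemmas just proved---in particular the uniform bound $\|\MATRIX{A}^{(k)}\|_F\le\sqrt{n}$---together with the fact that $\mu^{(k)}=\rho^{k}\mu^{(0)}\to\infty$ since $\rho>1$. The key structural identity comes from rewriting the multiplier update as
\begin{equation*}
  \MY-\ML^{(k+1)}-\MS^{(k+1)}=\tfrac{1}{\mu^{(k)}}\left(\MATRIX{A}^{(k+1)}-\MATRIX{A}^{(k)}\right).
\end{equation*}
Because $\|\MATRIX{A}^{(k+1)}-\MATRIX{A}^{(k)}\|_F\le 2\sqrt{n}$ for every $k$, this alone shows the constraint residual is $O(1/\mu^{(k)})$ and therefore vanishes; I will reuse this identity at indices $k$ and $k-1$ to tie $\ML^{(k+1)}-\ML^{(k)}$ to $\MS^{(k+1)}-\MS^{(k)}$ plus multiplier-difference terms.

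The heart of the argument is to control $\MS^{(k+1)}-\MS^{(k)}$. First I would substitute the residual identity at step $k$ into $\MATRIX{E}^{(k)}=\tfrac{1}{\mu^{(k)}}\MATRIX{A}^{(k)}+\MY-\ML^{(k)}$ so that $\MS^{(k)}$ appears explicitly,
\begin{equation*}
  \MATRIX{E}^{(k)}=\MS^{(k)}+\tfrac{1}{\mu^{(k)}}\MATRIX{A}^{(k)}+\tfrac{1}{\mu^{(k-1)}}\left(\MATRIX{A}^{(k)}-\MATRIX{A}^{(k-1)}\right).
\end{equation*}
Next I would use the closed form $\MS^{(k+1)}=\MATRIX{E}^{(k)}-\MO\odot\bigl(\MATRIX{E}^{(k)}-\shrinkage_{\bm{\epsilon}}(\MATRIX{E}^{(k)})\bigr)$ and the elementwise estimate $|\shrinkage_{\epsilon}(x)-x|\le\epsilon$, which yields $\|\MO\odot(\MATRIX{E}^{(k)}-\shrinkage_{\bm{\epsilon}}(\MATRIX{E}^{(k)}))\|_F\le \tfrac{\lambda}{\mu^{(k)}}\sqrt{mn}$ since the threshold is $\bm{\epsilon}=\tfrac{\lambda}{\mu^{(k)}}\MO$. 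Combining the two displays gives
\begin{equation*}
  \MS^{(k+1)}-\MS^{(k)}=\tfrac{1}{\mu^{(k)}}\MATRIX{A}^{(k)}+\tfrac{1}{\mu^{(k-1)}}\left(\MATRIX{A}^{(k)}-\MATRIX{A}^{(k-1)}\right)-\MO\odot\bigl(\MATRIX{E}^{(k)}-\shrinkage_{\bm{\epsilon}}(\MATRIX{E}^{(k)})\bigr),
\end{equation*}
and each of the three terms is bounded in Frobenius norm by a constant times $1/\mu^{(k)}$ or $1/\mu^{(k-1)}$ via $\|\MATRIX{A}^{(k)}\|_F\le\sqrt{n}$ and $\|\MATRIX{A}^{(k)}-\MATRIX{A}^{(k-1)}\|_F\le 2\sqrt{n}$. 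Letting $k\to\infty$ proves $\|\MS^{(k+1)}-\MS^{(k)}\|_F\to 0$.

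Finally I would recover the statement for $\ML$: writing the residual identity at indices $k$ and $k-1$ and subtracting gives
\begin{equation*}
  \ML^{(k+1)}-\ML^{(k)}=-\bigl(\MS^{(k+1)}-\MS^{(k)}\bigr)-\tfrac{1}{\mu^{(k)}}\bigl(\MATRIX{A}^{(k+1)}-\MATRIX{A}^{(k)}\bigr)+\tfrac{1}{\mu^{(k-1)}}\bigl(\MATRIX{A}^{(k)}-\MATRIX{A}^{(k-1)}\bigr).
\end{equation*}
The first term vanishes by the previous paragraph and the remaining two are $O(1/\mu^{(k)})$ and $O(1/\mu^{(k-1)})$, so $\|\ML^{(k+1)}-\ML^{(k)}\|_F\to 0$, which completes the proof.

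The step I expect to be the main obstacle is the bound on $\MS^{(k+1)}-\MS^{(k)}$: the shrinkage level $\lambda/\mu^{(k)}$ changes from one iteration to the next and the operator acts on the shifting argument $\MATRIX{E}^{(k)}$, so a direct Lipschitz comparison of $\shrinkage_{\bm{\epsilon}}(\MATRIX{E}^{(k)})$ with $\shrinkage_{\bm{\epsilon}}(\MATRIX{E}^{(k-1)})$ is awkward. The device that removes this difficulty is to eliminate the shrinkage through the identity $\MS^{(k+1)}=\MATRIX{E}^{(k)}-\MO\odot(\MATRIX{E}^{(k)}-\shrinkage_{\bm{\epsilon}}(\MATRIX{E}^{(k)}))$ and to substitute the constraint residual so that $\MS^{(k)}$ appears directly, turning the whole difference into explicitly vanishing multiplier terms rather than a comparison of two thresholding operations.
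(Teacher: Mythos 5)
Your proposal is correct, and the two halves relate to the paper's proof differently. For $\MS^{(k+1)}-\MS^{(k)}$ you follow essentially the same route as the paper: substitute the constraint residual so that $\MS^{(k)}$ appears inside $\MATRIX{E}^{(k)}$, then bound $\MO\odot(\MATRIX{E}^{(k)}-\shrinkage_{\bm{\epsilon}}(\MATRIX{E}^{(k)}))$ by the threshold level $\lambda/\mu^{(k)}$ (your elementwise estimate $|\shrinkage_{\epsilon}(x)-x|\le\epsilon$ giving $\sqrt{mn}\,\lambda/\mu^{(k)}$ is in fact stated more cleanly than the paper's final inequality, which conflates a sum with a norm but reaches the same vanishing bound). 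For $\ML^{(k+1)}-\ML^{(k)}$ your argument is genuinely different: the paper isolates the term $\MATRIX{D}^{(k-1)}-\ML^{(k)}=\MATRIX{U}\MATRIX{S}\MATRIX{V}^{T}-\MATRIX{U}\MATRIX{\Sigma}\MATRIX{V}^{T}$ and re-runs the explicit singular-value shrinkage computation from Lemma 1 to show $\|\MATRIX{S}-\MATRIX{\Sigma}\|_F\le\sqrt{n}/\mu^{(k-1)}$, whereas you subtract the two constraint-residual identities at steps $k$ and $k-1$ and reduce everything to $-(\MS^{(k+1)}-\MS^{(k)})$ plus multiplier differences bounded by $2\sqrt{n}/\mu^{(k)}$ and $2\sqrt{n}/\mu^{(k-1)}$. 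These are equivalent in substance---indeed $\MATRIX{D}^{(k-1)}-\ML^{(k)}=\frac{1}{\mu^{(k-1)}}\MATRIX{A}^{(k)}$ by the multiplier update, so the paper's SVD computation is just an inline re-proof of Lemma 1's bound---but your version is shorter, reuses the lemma instead of duplicating its calculation, and never touches the closed-form solution of the $\ML$ subproblem, which makes it robust to changes in the regularizer $\varphi$. The only cost is that it leans entirely on the boundedness of $\{\MATRIX{A}^{(k)}\}$, which is already available at that point, so nothing is lost.
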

\begin{proof}
  Let $\MATRIX{E}^{(k)}=\frac{1}{\mu^{(k)}}\MATRIX{A}^{(k)}+\MY-\ML^{(k)}$ and $\bm{\epsilon}=\frac{\lambda}{\mu^{(k)}}\MO$. It is easy to deduce
\begin{equation*}
\begin{aligned}
\MS^{(k)}&=\MY-\ML^{(k)}-\frac{1}{\mu^{(k-1)}}(\MATRIX{A}^{(k)}-\MATRIX{A}^{(k-1)})\\
&=\frac{1}{\mu^{(k)}}\MATRIX{A}^{(k)}+\MY-\ML^{(k)}-\frac{1}{\mu^{(k)}}\MATRIX{A}^{(k)}-\frac{1}{\mu^{(k-1)}}(\MATRIX{A}^{(k)}-\MATRIX{A}^{(k-1)})\\
&=\MATRIX{E}^{(k)}-(\frac{1}{\mu^{(k)}}\MATRIX{A}^{(k)}+\frac{1}{\mu^{(k-1)}}(\MATRIX{A}^{(k)}-\MATRIX{A}^{(k-1)}))\\
\end{aligned}
\end{equation*}
The updating of $\MS^{(k+1)}$ is given in the text as $\MS^{(k+1)}=\MO\odot\shrinkage_{\bm{\epsilon}}(\MATRIX{E}^{(k)}) + (1-\MO)\odot \MATRIX{E}^{(k)}$, thus
\begin{equation*}
\begin{aligned}
 &\mathop{\lim}_{k\to\infty}\|\MS^{(k+1)}- \MS^{(k)}\|_F \\
 &= \mathop{\lim}_{k\to\infty}\|\MO\odot\shrinkage_{\bm{\epsilon}}(\MATRIX{E}^{(k)}) + (1-\MO)\odot \MATRIX{E}^{(k)}  -\MS^{(k)}\|_F\\
 &= \mathop{\lim}_{k\to\infty}\|\MO\odot\shrinkage_{\bm{\epsilon}}(\MATRIX{E}^{(k)}) + (1-\MO)\odot \MATRIX{E}^{(k)}  -\MATRIX{E}^{(k)}-(\frac{1}{\mu^{(k)}}\MATRIX{A}^{(k)}+\frac{1}{\mu^{(k-1)}}(\MATRIX{A}^{(k)}-\MATRIX{A}^{(k-1)}))\|_F\\
 &\leq \mathop{\lim}_{k\to\infty}\left(\|\MO\odot\shrinkage_{\bm{\epsilon}}(\MATRIX{E}^{(k)}) + (1-\MO)\odot \MATRIX{E}^{(k)}  -\MATRIX{E}^{(k)}\|_F + \|(\frac{1}{\mu^{(k)}}\MATRIX{A}^{(k)}+\frac{1}{\mu^{(k-1)}}(\MATRIX{A}^{(k)}-\MATRIX{A}^{(k-1)}))\|_F \right)\\
 &= \mathop{\lim}_{k\to\infty}\|\MO\odot\shrinkage_{\bm{\epsilon}}(\MATRIX{E}^{(k)}) + (1-\MO)\odot \MATRIX{E}^{(k)}  -\MATRIX{E}^{(k)}\|_F  \\
 & = \mathop{\lim}_{k\to\infty}\|\MO\odot\shrinkage_{\bm{\epsilon}}(\MATRIX{E}^{(k)}) - \MO\odot \MATRIX{E}^{(k)} \|_F  \\
 & = \mathop{\lim}_{k\to\infty}\|\MO\odot(\shrinkage_{\bm{\epsilon}}(\MATRIX{E}^{(k)}) - \MATRIX{E}^{(k)} )\|_F  \\
 & \leq \mathop{\lim}_{k\to\infty}\|\shrinkage_{\frac{\lambda}{\mu^{(k)}}}(\MATRIX{E}^{(k)}) - \MATRIX{E}^{(k)} \|_F  \\
 & \leq \mathop{\lim}_{k\to\infty} \sum_{ij}(\sign{\MATRIX{E}^{(k)}_{ij}} \cdot \max(\frac{\lambda}{\mu^{(k)}}-\MATRIX{E}^{(k)}_{ij},0) - \MATRIX{E}^{(k)}_{ij} ) \\
 &\leq \mathop{\lim}_{k\to\infty} \frac{mn\lambda}{\mu^{(k)}}=0
\end{aligned}
\end{equation*}
We have used the boundeness of $\MATRIX{A}^{(k)}$ in above deduction.

Next we prove $\mathop{\lim}_{k\to\infty}\|\ML^{(k+1)}- \ML^{(k)}\|_F=0$. Since $\MATRIX{A}^{(k+1)}=\MATRIX{A}^{(k)}+\mu^{(k)}(\MY-\MS^{(k+1)}-\ML^{(k+1)})$, we have
\begin{equation*}
  \ML^{(k+1)} =  \MY-\MS^{(k+1)} + \frac{1}{\mu^{(k)}}(\MATRIX{A}^{(k)} - \MATRIX{A}^{(k+1)})
\end{equation*}
Let $\MATRIX{U}\MATRIX{S}\MATRIX{V}^{T}$ be the SVD of $\MATRIX{D}^{(k)}=\frac{1}{\mu^{(k-1)}}\MATRIX{A}^{(k-1)}+\MY-\MS^{(k)}$, $\MATRIX{S}=diag(s_1,\cdots,s_n)$, then $\ML^{(k)} = \MATRIX{U}\MATRIX{\Sigma}\MATRIX{V}^{T},\MATRIX{\Sigma}=diag(\sigma_1,\cdots,\sigma_n)$ and $\sigma_i$ is given by
\begin{equation}
  \sigma_{i} = \begin{cases}
                 \max(\frac{\mu^{(k-1)}\cdot s_i-1}{\mu^{(k-1)}},0), & \mbox{if } \ s_i\leq \frac{\mu^{(k-1)}+1}{\mu^{(k-1)}} \\
                 \frac{\mu^{(k-1)}\cdot s_i-\frac{\gamma}{\gamma-1}}
                 {\mu^{(k-1)}-\frac{1}{\gamma-1}}, & \mbox{if } \frac{\mu^{(k-1)}+1}{\mu^{(k-1)}}<s_i<\gamma \\
                 s_i, & \mbox{otherwise}.
               \end{cases}
\end{equation}
We have
\begin{equation*}
\begin{aligned}
 &\mathop{\lim}_{k\to\infty}\|\ML^{(k+1)}- \ML^{(k)}\|_F \\
 & = \mathop{\lim}_{k\to\infty}\|\MY-\MS^{(k+1)} + \frac{1}{\mu^{(k)}}(\MATRIX{A}^{(k)} - \MATRIX{A}^{(k+1)})- \ML^{(k)} + \frac{1}{\mu^{(k-1)}}\MATRIX{A}^{(k-1)}-\frac{1}{\mu^{(k-1)}}\MATRIX{A}^{(k-1)}+\MS^{(k)}-\MS^{(k)}\|_F \\
 & = \mathop{\lim}_{k\to\infty}\|\MY-\MS^{(k)}  + \frac{1}{\mu^{(k-1)}}\MATRIX{A}^{(k-1)}- \ML^{(k)}+\frac{1}{\mu^{(k)}}(\MATRIX{A}^{(k)} - \MATRIX{A}^{(k+1)}) -\frac{1}{\mu^{(k-1)}}\MATRIX{A}^{(k-1)}+\MS^{(k)}-\MS^{(k+1)}\|_F \\
 & \leq \mathop{\lim}_{k\to\infty}\|\MY-\MS^{(k)}  + \frac{1}{\mu^{(k-1)}}\MATRIX{A}^{(k-1)}- \ML^{(k)}\|_F+\|\frac{1}{\mu^{(k)}}(\MATRIX{A}^{(k)} - \MATRIX{A}^{(k+1)}) -\frac{1}{\mu^{(k-1)}}\MATRIX{A}^{(k-1)}\|_F+\|\MS^{(k)}-\MS^{(k+1)}\|_F \\
 & = \mathop{\lim}_{k\to\infty}\|\MY-\MS^{(k)}  + \frac{1}{\mu^{(k-1)}}\MATRIX{A}^{(k-1)}- \ML^{(k)}\|_F \\
 & = \mathop{\lim}_{k\to\infty}\|\MATRIX{U}\MATRIX{S}\MATRIX{V}^{T}- \MATRIX{U}\MATRIX{\Sigma}\MATRIX{V}^{T}\|_F \\
 & = \mathop{\lim}_{k\to\infty}\| \MATRIX{\Sigma}-\MATRIX{S}\|_F \\
 & = \mathop{\lim}_{k\to\infty}\sqrt{\sum_{s_i\leq 1+\frac{1}{\mu^{(k-1)}}}(\frac{1}{\mu^{(k-1)}})^2 + \sum_{\frac{\mu^{(k-1)}+1}{\mu^{(k-1)}}<s_i<\gamma}(\frac{1}{\mu^{(k-1)}})^2} \\
 & = 0.
\end{aligned}
\end{equation*}
The computation in the last limit is analog to equation \eqref{eq-Akplus1}.
\end{proof}
\end{document}